\documentclass{article}

\usepackage{microtype}
\usepackage{graphicx}
\usepackage{subcaption}
\usepackage{booktabs} 

\usepackage{hyperref}

\usepackage[accepted]{icml2026}

\usepackage{amsmath}
\usepackage{amssymb}
\usepackage{mathtools}
\usepackage{amsthm}
\usepackage{dsfont}

\usepackage[capitalize,noabbrev]{cleveref}

\theoremstyle{plain}
\newtheorem{theorem}{Theorem}[section]
\newtheorem{proposition}[theorem]{Proposition}
\newtheorem{lemma}[theorem]{Lemma}
\newtheorem{corollary}[theorem]{Corollary}
\theoremstyle{definition}
\newtheorem{definition}[theorem]{Definition}

\theoremstyle{remark}

\usepackage[textsize=tiny]{todonotes}

\usepackage{amsmath,amsfonts,bm}

\def\eqref#1{equation~\ref{#1}}

\def\1{\bm{1}}

\def\vzero{{\bm{0}}}

\def\vmu{{\bm{\mu}}}

\def\va{{\bm{a}}}
\def\vb{{\bm{b}}}

\def\vx{{\bm{x}}}

\def\mA{{\bm{A}}}
\def\mB{{\bm{B}}}

\def\mM{{\bm{M}}}

\def\mP{{\bm{P}}}

\def\mX{{\bm{X}}}

\def\mSigma{{\bm{\Sigma}}}

\DeclareMathAlphabet{\mathsfit}{\encodingdefault}{\sfdefault}{m}{sl}
\SetMathAlphabet{\mathsfit}{bold}{\encodingdefault}{\sfdefault}{bx}{n}

\def\gC{{\mathcal{C}}}

\def\gN{{\mathcal{N}}}

\newcommand{\E}{\mathbb{E}}

\newcommand{\R}{\mathbb{R}}

\newcommand{\beq}{\begin{equation}}
\newcommand{\eeq}{\end{equation}}
\newcommand{\ba}{\begin{array}}
\newcommand{\ea}{\end{array}}
\theoremstyle{plain}
\newtheorem{example}{Example}

\newcommand{\ramon}[1]{{{\textcolor{cyan}{[Ramon: #1]}}}}

\newcommand{\nikita}[1]{{{\textcolor{blue}{[Nikita: #1]}}}}
\newcommand{\silvia}[1]{{{\textcolor{orange}{[Silvia: #1]}}}}
\newcommand{\cameraready}[1]{{{#1}}}
\newcommand{\name}{PACER}

\icmltitlerunning{PACER: Acyclic Causal Discovery from Large-Scale Interventional Data}

\begin{document}

\twocolumn[
  \icmltitle{PACER: Acyclic Causal Discovery from Large-Scale Interventional Data}



  \icmlsetsymbol{equal}{*}

  \begin{icmlauthorlist}
    \icmlauthor{Ramon Viñas Torné}{epfl}
    \icmlauthor{Sílvia Fàbregas Salazar}{epfl}
    \icmlauthor{Soyon Park}{epfl}
    \icmlauthor{Ivo Alexander Ban}{epfl,eth}
    \icmlauthor{Artyom Gadetsky}{epfl}
    \icmlauthor{Nikita Doikov}{cornell}
    \icmlauthor{Maria Brbić}{epfl}
  \end{icmlauthorlist}

  \icmlaffiliation{epfl}{Swiss Federal Technology Institute of Lausanne (EPFL), Switzerland}
  \icmlaffiliation{cornell}{Cornell University, USA}
  \icmlaffiliation{eth}{ETH Zurich, Zurich, Switzerland}

  \icmlcorrespondingauthor{Maria Brbić}{mbrbic@epfl.ch}

  \icmlkeywords{Machine Learning, ICML, Causal Discovery, Interventional data}

  \vskip 0.3in
]



\printAffiliationsAndNotice{}  

\begin{abstract}
  Inferring the structure of directed acyclic graphs (DAGs) is a central challenge in causal discovery. While interventional data can improve identifiability, existing methods remain limited by soft acyclicity constraints, leading to optimization over invalid cyclic graphs, numerical instability, and reduced scalability.  We introduce \name{} (Perturbation-driven Acyclic Causal Edge Recovery), a scalable framework for causal discovery that guarantees \textit{acyclicity by design}. \name{} defines a distribution over DAGs through a joint model of variable permutations and edge probabilities, enabling direct optimization over valid causal structures without surrogate penalties. It supports a likelihood-based treatment of observational and interventional data, flexible conditional densities, and structural prior knowledge.
  For linear-Gaussian mechanisms, we derive a closed-form expression for the expected interventional log-likelihood, yielding substantial computational gains. Empirically, \name{} matches or exceeds state-of-the-art methods on protein signaling and large-scale genetic perturbation benchmarks, scaling to thousands of variables and achieving up to two orders of magnitude speedups over penalty-based approaches. 
  These results demonstrate that exact and scalable causal discovery from high-dimensional perturbation data is achievable through principled search space design.
\end{abstract}

\section{Introduction}
Inferring the structure of directed acyclic graphs (DAGs) from data is a fundamental task in causal discovery. While observational data can only identify causal graphs up to Markov equivalence, the integration of interventional data can substantially improve discovery accuracy by resolving causal ambiguities that are otherwise indistinguishable \citep{hauser2012characterization}. This is particularly relevant in domains like genomics, where high-throughput perturbation assays, such as single-cell Perturb-seq screens \citep{dixit2016perturb}, now enable the collection of large-scale experimental interventions across thousands of variables. However, as the dimensionality of these systems grows, the computational cost of structure learning becomes a major bottleneck. 


Modern differentiable causal discovery has attempted to explore the super-exponential search space of directed acyclic graphs by recasting structure learning as a continuous optimization problem. A prominent strategy involves utilizing matrix-based characterizations of acyclicity, enforced through augmented Lagrangian penalties \citep{zheng2018dags, brouillard2020differentiable, lopez2022large}. Although these frameworks are more scalable than traditional greedy or combinatorial searches, they exhibit inherent structural and computational limitations. In particular, since acyclicity is only enforced as a soft penalty, these methods evaluate and optimize over cyclic configurations that lack a well-defined joint likelihood. Moreover, as the number of nodes increases, the computational complexity of standard acyclicity constraints often leads to numerical instability \citep{nazaret2023stable} and prohibitive runtimes \citep{bello2022dagma}.

Here, we introduce \name{} (\textbf{P}erturbation-driven \textbf{A}cyclic \textbf{C}ausal \textbf{E}dge \textbf{R}ecovery), a scalable causal discovery framework that directly searches over the space of DAGs using observational and interventional data. Unlike previous approaches that relax acyclicity during optimization, \name{} guarantees acyclicity by construction. The key features of \name{} are:

\begin{itemize}
\item \textbf{Acyclicity by design.} We introduce a distribution over DAGs to simultaneously model topological orderings and filter causal edges. This formulation ensures that every structure evaluated during optimization is strictly acyclic, eliminating the need for expensive acyclicity terms and ensuring that the search remains within the space of DAGs throughout the entire learning process.
\item \textbf{Flexible differentiable causal discovery.} \name{} provides a unified likelihood-based framework that jointly leverages observational and interventional data. This approach supports a variety of conditional density models, including neural parameterizations and distributions tailored to specific data modalities.
\item \textbf{Exact gradient computation.} We derive a closed-form expression for an interventional log-likelihood objective under linear-Gaussian mechanisms. This analytic gradient enables \name{} to scale to thousands of variables, achieving up to a two-order-of-magnitude speedup over penalty-based differentiable methods.
\item \textbf{Inductive biases and prior knowledge.} PACER is a flexible framework that supports the direct integration of structural priors, such as node centrality expectations or transcription factor binding constraints when modeling genomic regulatory networks. 
\end{itemize}


We evaluate \name{} on observational and interventional data derived from protein signaling networks and large-scale genetic perturbation benchmarks. Our results demonstrate that \name{} matches or exceeds the performance of state-of-the-art causal discovery methods. Crucially, it enables causal discovery in networks with thousands of variables, achieving a runtime decrease of up to two orders of magnitude over penalty-based differentiable methods. 
\name{} is a versatile causal discovery framework that efficiently and effectively reconstructs causal graphs from high-dimensional interventional data.



\section{Related work}
\label{sec:related_work}


Causal discovery is traditionally framed as a discrete search over an exponential space of DAGs. We position \name{} within the landscape of modern causal discovery by contrasting it with permutation-based searches, differentiable acyclicity relaxations, and frameworks for causal discovery from interventional data.

\paragraph{Permutation-based causal discovery.} A long-standing line of work exploits the correspondence between DAGs and topological orderings of variables. A prominent example is the {Sparsest Permutation} (SP) principle, which searches for the permutation that yields the sparsest DAG consistent with the data \citep{raskutti2018learning}. Practical methods perform greedy or combinatorial searches over orderings \citep{singh2005finding,silander2006simple,lam2022greedy} or estimate an ordering followed by edge selection, as in CAM \citep{buhlmann2014cam}. These methods guarantee acyclicity by construction, but rely on heuristic sparsification or local decompositions\cameraready{, and return a single graph, thereby limiting their ability to represent structural variability}. \cameraready{Moreover, their computational cost scales super-exponentially in the number of variables (in the worst case $\mathcal{O}(n!)$), limiting scalability.} In contrast, PACER employs a probabilistic representation of orderings based on the Plackett--Luce parameterization \citep{luce1959individual,plackett1975analysis,gadetsky2020low} and jointly learns edge probabilities\cameraready{. This yields a flexible distribution over DAGs that captures structural variability while maintaining acyclicity. Compared to classical permutation search, PACER trades off exact enumeration in favor of stochastic optimization, gaining substantial scalability (quadratic rather than factorial complexity).}

\paragraph{Differentiable causal discovery.} Recent advances have recast DAG learning as a continuous problem, a shift largely driven by NOTEARS \citep{zheng2018dags}. This approach introduces a smooth characterization of acyclicity, with extensions to nonlinear mechanisms \citep{zheng2020learning} and neural parameterizations \citep{lachapelle2019gradient, bello2022dagma}. However, the scalability of these methods is fundamentally hindered by the complexity of the acyclicity constraint and its gradients, often leading to numerical instability as the number of variables increases \citep{nazaret2023stable}. Furthermore, these frameworks rely on augmented Lagrangian penalties that evaluate cyclic graphs during training and necessitate sensitive hyperparameter tuning and post-hoc thresholding.
In contrast, \name{} avoids surrogate regularizers by operating directly in the space of permutations, ensuring acyclicity by construction and eliminating the need for post-hoc thresholding.

\paragraph{Modeling DAGs via permutations of triangular adjacency matrices.} \cameraready{A growing body of work addresses the combinatorial challenge of DAG discovery by learning distributions over variable orderings. Differentiable approaches such as DP-DAG \citep{charpentier2022differentiable}, BCD Nets \citep{cundy2021bcd}, BCNP \citep{dhir2024meta}, and BayesDAG \citep{annadani2023bayesdag} learn distributions over orderings but typically rely on $\mathcal{O}(n^3)$ operators, limiting scalability in high-dimensional settings. Moreover, these approaches primarily focus on observational data. In contrast, \name{} introduces a more computationally efficient parameterization, effectively scaling to thousands of variables, and can handle interventional data within a single likelihood-based objective.} 

\paragraph{Causal discovery from interventional data.} Interventional data improves identifiability and has been incorporated into both discrete and differentiable frameworks. Traditional approaches extend classical frameworks to the interventional regime, including score-based methods like GIES \citep{hauser2012characterization} and IGSP \citep{wang2017permutation}, which utilize invariance principles to orient edges. However, these methods face scalability challenges and are susceptible to local optima due to their reliance on greedy heuristics. More recent differentiable approaches, such as DCDI \citep{brouillard2020differentiable}, DCDFG \citep{lopez2022large}, and ENCO \citep{lippe2021efficient} jointly model observational and interventional data using neural likelihoods. However, these strategies permit cyclic configurations during optimization, which can lead to ill-defined joint distributions, and require post-hoc pruning to recover valid acyclic structures.  \name{} instead maintains a valid DAG factorization throughout training, enabling efficient likelihood-based learning from interventional data without auxiliary constraints.




\section{Background}


\begin{figure*}
    \centering
    \includegraphics[width=.9\linewidth]{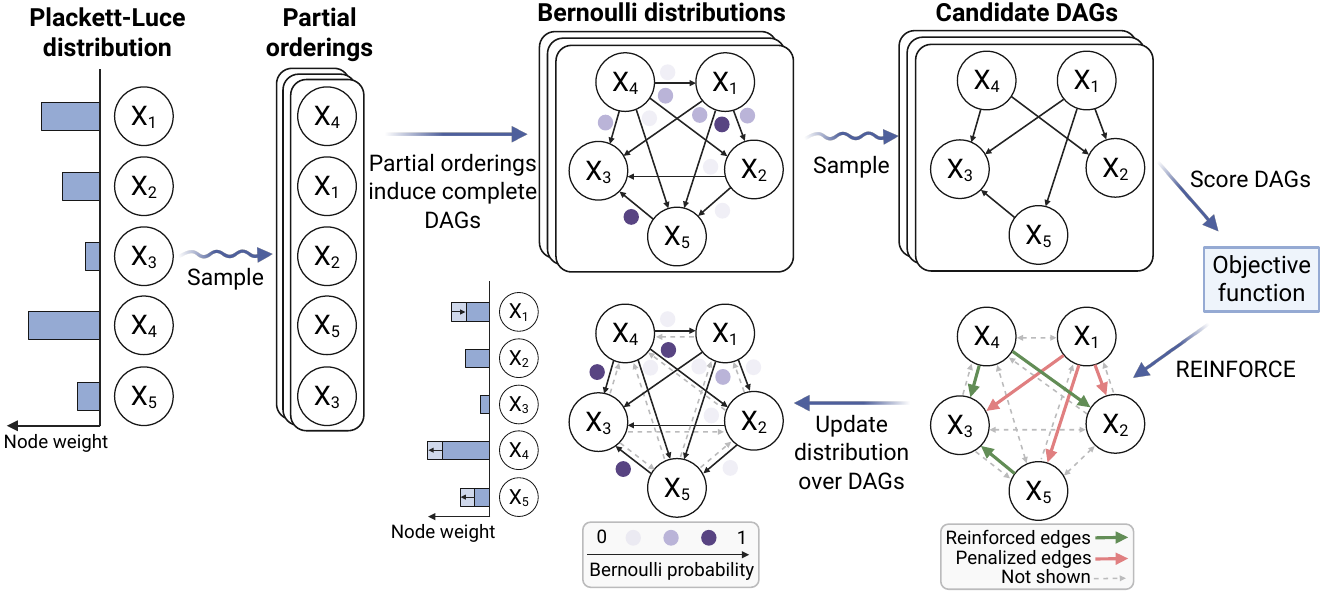}
    \caption{Overview of the framework. \name{} models a topological ordering of variables using a Plackett-Luce distribution. Nodes with higher weight are more likely to precede nodes with lower weight in downstream DAGs. Samples from this distribution induce complete DAGs, which are further filtered via samples from independent, edge-specific Bernoulli distributions. This defines our Bernoulli-Plackett-Luce distribution over DAGs. At train time, we sample multiple candidate graphs and score them based on a likelihood-based objective function. We then optimize the parameters of the Bernoulli-Plackett-Luce model using REINFORCE gradient updates \citep{williams1992simple}.}
    \label{fig:overview}
\end{figure*}

\paragraph{Differentiable causal discovery from interventional data.} DCDI \citep{brouillard2020differentiable} and DCDFG \citep{lopez2022large} optimize an interventional likelihood-based objective $S(\Theta, \Omega)$ subject to an acyclicity constraint $\gC(\mathbb{E}[M(\Theta)])$:
\begin{equation}\label{eq:dcdfg_max_objective}
    \max_{\Theta, \Omega} \quad S(\Theta, \Omega) \quad \text{such that} \quad \gC(\mathbb{E}[M(\Theta)]) = 0,
\end{equation}
where $\Theta$ parameterizes a distribution $M(\Theta)$ over adjacency matrices and $\Omega$ denotes the conditional distribution parameters of each variable conditioned on its parents. The constraint $\mathcal{C}(\cdot) = 0$ ensures that the expected adjacency matrix corresponds to an acyclic graph, using a differentiable penalty function based on either the spectral radius or the matrix exponential \citep{zheng2018dags,lopez2022large}. The score $S(\Theta, \Omega)$ is defined as:
\begin{equation}\label{eq:dcdfg_objective}
\begin{aligned}
    & \!\! \mathbb{E}_{M' \sim M(\Theta)} \Big [ \sum_{r=1}^R \mathbb{E}_{X \sim P^{(r)}_{\text{data}}} \sum_{j \notin \mathcal{I}_r} \log p_{\Omega}^j(X_j | M'_{j}, X_{-j}) \Big ] \\
    & \!\! - \lambda \|\mathbb{E}[M(\Theta)]\|_1.
\end{aligned}
\end{equation}
This score reflects the expected log-likelihood of non-intervened variables $j \notin \mathcal{I}_r$ across all $r \in \{1, ..., R\}$ intervention regimes, regularized by an $\ell_1$-penalty on the expected adjacency matrix to encourage sparse networks.
Here, $P^{(r)}_{\text{data}}$ denotes the distribution of data points ${X}$ under regime $r$, and $p_{\Omega}^j$ represents the conditional density model for variable $X_j$, given its parent variables ${X}_{-j}$ as specified by the sampled adjacency matrix ${M}'_j \sim M(\Theta)$.

\paragraph{Assumptions.} We operate under the following assumptions. First, we assume causal sufficiency, \textit{i.e.}, there are no unobserved common causes (latent variables) influencing the system. Second, we consider stochastic perfect interventions, where the intervention targets a specific variable \cameraready{(or multiple variables)} by modifying its conditional distribution without necessarily fixing it to a constant value. Third, we impose no global restrictions on the domains of the variables: the framework is functionally agnostic and can accommodate discrete, continuous, or mixed data by selecting appropriate likelihood functions. Finally, we assume that the underlying causal structure is a directed acyclic graph (DAG). \cameraready{Importantly, our approach returns a member from the interventional Markov equivalence class \citep{hauser2012characterization}, \textit{i.e.}, the output should be treated as an element from the set of plausible structures rather than a single definitive graph.}

\section{PACER}

We introduce \name{} (Perturbation-driven Acyclic Causal Edge Recovery), a scalable causal discovery method applicable to large-scale interventional data with thousands of variables (Figure \ref{fig:overview}). The key idea in \name{} is to explore the search space of DAGs by design, without the need for expensive and numerically unstable regularization terms to enforce acyclicity. PACER supports learning with structural priors over the graph, for example biologically informed constraints to guide causal graph discovery.


\subsection{Exploring the space of DAGs by design}
DCDI \citep{brouillard2020differentiable} and DCDFG \citep{lopez2022large} search over the entire space of graphs using an augmented Lagrangian procedure coupled with a computationally expensive regularization term to encourage acyclicity. This presents a number of problems: (\textit{i}) the acyclicity constraint is only softly enforced, (\textit{ii}) the likelihood-based objective is evaluated on directed cyclic graphs, and (\textit{iii}) a final pruning step is required to obtain a valid DAG, which may potentially discard a large fraction of edges.

\name{} instead restricts the search space by \textit{directly modeling the distribution of DAGs}. The key insight is that any DAG admits a representation as a lower-triangular adjacency matrix, and that permuting the variables within such a matrix also yields a valid DAG. Therefore, one can search the space of DAGs by (\textit{i}) modeling a distribution over permutations of variables, which specifies a complete DAG, and (\textit{ii}) pruning superfluous edges.

\paragraph{Modeling topological orderings.} We first introduce the Plackett-Luce distribution \citep{luce1959individual,plackett1975analysis}, which models the partial ordering of variables in a DAG, and then extend it to the Bernoulli-Plackett-Luce model, which \name{} uses to specify a full distribution over DAGs.

\begin{definition}
[Plackett-Luce distribution] 
Let $\pi=(\pi_1, \ldots, \pi_n)$ denote a permutation of $n$ items, where $\pi_k$ is the item ranked at position $k$. Each item $i \in \{1, \ldots, n\}$ is associated with a parameter $\theta_i \in \mathbb{R}$. The probability of observing permutation $\pi$ under the Plackett-Luce model is:
\begin{equation*}    
    p(\pi | \theta) = \prod_{k=1}^n \frac{ e^{ \theta_{\pi_k}}}{\sum_{u=k}^n e^{\theta_{\pi_u}}}.
\end{equation*}
\end{definition}
Intuitively, a permutation $\pi=(\pi_1, \ldots, \pi_n)$ from a Plackett-Luce distribution can be viewed as a sequence of categorical draws, where each draw excludes the variables that have already been sampled. Specifically, $\pi_1$ is sampled from a categorical distribution with logits $\theta$; $\pi_2$ is then sampled from the same distribution with $\theta_{\pi_1}$ removed; and the process continues until all variables are generated.

\paragraph{Pruning superfluous edges.} To prune edges, we define a binary mask $\mB = (b_{ij}) \in \{0,1\}^{n \times n}$ with independent entries sampled from Bernoulli distributions with parameters $\mP = (p_{ij}) \in [0,1]^{n \times n}$:
\begin{equation*}
    p(\mB | \mP) 
    = \prod_{i=1}^n \prod_{j=1}^n p_{ij}^{b_{ij}} (1-p_{ij})^{1 - b_{ij}}.
\end{equation*}

\paragraph{Modeling a distribution over DAGs.} Combining the permutation and edge mask yields our Bernoulli–Plackett–Luce distribution, constructed from conditionally independent Bernoulli distributions and a Plackett–Luce model.


\begin{definition}
[Bernoulli-Plackett-Luce distribution] 
The Bernoulli-Plackett-Luce distribution with parameters $\theta \in \mathbb{R}^n$ and $\mP \in [0,1]^{n \times n}$ models a distribution over DAGs $(\pi, \mB)$ and has probability mass function $ p(\pi, \mB | \theta, \mP) = p(\pi | \theta) p(\mB | \mP)$. The probability that an edge $a_{ij}$ belongs to the DAG is:
\begin{equation*}
p( a_{ij} = 1 | \theta, \mP) 
= p(b_{ij} = 1 | \mP) \cdot p( \pi_i^{-1} < \pi_j^{-1} | \theta  ).
\end{equation*}
\end{definition}

\name{} utilizes the Bernoulli-Plackett-Luce distribution to model a probability distribution over DAGs (Figure \ref{fig:overview}). Note that the pair $(\pi, \mB)$ uniquely specifies a DAG with adjacency matrix $\mA = (a_{ij}) \in \{0,1\}^{n \times n}$. Each entry of the adjacency matrix is defined as $a_{ij} = b_{ij} \cdot \mathds{1}{\big[ \pi^{-1}_{i} < \pi^{-1}_{j}} \big]$, where $\mathds{1}$ is the indicator function and $\pi^{-1}$ denotes the inverse of permutation $\pi$. We denote by $M(\Theta)$ the distribution over DAGs induced by this model. Under this framework, exploring the search space of DAGs reduces to inferring the parameters $\Theta=(\theta, \mP)$ of the Bernoulli-Plackett-Luce distribution $M(\Theta)$ that optimize a specific objective function.


Importantly, \name{} can incorporate prior knowledge through both components of the Bernoulli-Plackett-Luce model: the Plackett-Luce parameters $\theta$ can be initialized according to the expected node centralities, while prior knowledge about the presence of an edge $i \rightarrow j$ can be encoded by setting the Bernoulli probability $p_{ij}$ to a high value. \cameraready{Where possible, we advocate for leveraging domain knowledge and designing informative interventions to restrict the interventional Markov equivalence class.}

\paragraph{Expressivity of the Bernoulli-Plackett-Luce model.} \cameraready{The Bernoulli-Plackett-Luce parameterization prioritizes scalability over universal expressivity. Specifically, it does not explicitly model edge dependencies that cannot be captured via a shared node ordering and independent edge probabilities. This design is deliberate and reflects a trade-off between expressivity and scalability: this parameterization provides a flexible approximation that is well-suited for high-dimensional causal discovery.}

\subsection{Objective function}
We maximize the expected log-likelihood of non-intervened variables across all intervention regimes. We optimize the adjacency parameters $\Theta = (\theta, \mP)$ of the joint Bernoulli-Plackett-Luce distribution $ M(\Theta) = p(\pi, \mB | \theta, \mP)$ and the parameters $\Omega$ of the conditional distribution of each child conditioned on its parents. We define our \textit{unconstrained} objective as:
\begin{align}\label{eq:objective}
   S(\Theta, \Omega) &= \mathbb{E}_{M' \sim M(\Theta)} \left [ f(M', \Omega)  \right ] - \lambda \|\mathbb{E}[M(\Theta)]\|_1,
\end{align}

where $\|\mathbb{E}[M(\Theta)]\|_1$ is the expected number of edges under the Bernoulli-Plackett-Luce model (calculated as the entry-wise $L_1$ norm), $\lambda$ is a hyperparameter, and $f(M', \Omega)$ is an interventional likelihood-based objective defined as:
\begin{equation*}
    f(M', \Omega) = \sum_{r=1}^R \mathbb{E}_{X \sim P^{(r)}_{\text{data}}} \sum_{j \notin \mathcal{I}_r} \log p_{\Omega}^j(X_j | M'_{j}, X_{-j}).
\end{equation*}

Here, $P^{(r)}_{\text{data}}$ is the data distribution under regime $r$, $\mathcal{I}_r $ represents the set of intervened variables in regime $r$, $R$ is the total number of regimes, and $ p_{\Omega}^j $ denotes the conditional density of variable $ X_j $ given its parents, as defined by the remaining variables $ X_{-j} $ and the sampled DAG $ M' \sim M(\Theta) $. Note that, in contrast to DCDI and DCDFG (Equation \ref{eq:dcdfg_max_objective}), we do not require any acyclicity constraints because the sampled adjacency matrices $M'$ induce DAGs by design.

\subsection{Learning the distribution over DAGs} \label{sec:learning}
In this section, we introduce a generic approach to optimize our objective function via REINFORCE \citep{williams1992simple}. We also show that, for a specific form of the causal mechanisms, we can exactly calculate the gradients of our objective with respect to the parameters of the Bernoulli-Plackett-Luce model without sampling any DAGs. 

\paragraph{Estimating gradients via REINFORCE.} Sampling adjacency matrices $M'$ from the distribution over DAGs $M(\Theta)$ is a non-differentiable operation. To estimate the gradients $\nabla_{\Theta} S(\Theta, \Omega)$ of the objective (Equation \ref{eq:objective}) with respect to the adjacency parameters $\Theta$, \textit{i.e.} logits $\theta$ of the Plackett-Luce distribution and probabilities $p_{ij}$ of the Bernoulli distributions, we use REINFORCE \citep{williams1992simple} with an average baseline. For simplicity, let us consider our objective without the regularizer, $\lambda = 0$,
and derive the formula for the gradient $\nabla_{\Theta} S(\Theta, \Omega)$:
\begin{equation}\label{eq:gradient}
\begin{aligned}
    &\nabla_{\Theta} S(\Theta, \Omega) = \\
    &\quad \E_{M' \sim M(\Theta)} \Big [ (f(M', \Omega) - m) \nabla_{\Theta} \log p(M' | \Theta) \Big ], \\
    &\text{with } m = \E_{M' \sim M(\Theta)}[f(M', \Omega)].
\end{aligned}
\end{equation}
The average baseline $m$ allows us to reduce the variance of the estimator while keeping the estimate unbiased. We calculate this expectation using Monte Carlo samples. Intuitively, this estimator reinforces DAGs $M'$ that achieve better-than-average scores. In the general case, $\lambda > 0$, we incorporate the subgradient of the regularizer into our update to sparsify the adjacency matrix. 

\paragraph{Variance of the REINFORCE estimator.} \cameraready{The following lemma bounds the variance of our REINFORCE estimator.}

\begin{lemma} \label{LemmaVarBound}
    \cameraready{Let $p(M|\Theta)$ be the density of a Bernoulli-Plackett-Luce distribution combining a Plackett–Luce distribution over permutations and independent Bernoulli edge variables $p_{ij}$, and let $M' \sim M(\Theta)$ be a sample from it. 
    Then, for the REINFORCE estimator with baseline $m$:
    $$
    \ba{rcl}
    g(M') & = & (f(M', \Omega) - m)\nabla_\Theta \log p(M' | \Theta)
    \ea
    $$
    the variance is bounded as follows:}
    $$\operatorname{Var}(g) \le \max_{M'}[(f(M', \Omega) - m)^2] \biggl( n + \sum_{i,j} p_{ij}(1-p_{ij}) \biggr).$$
\end{lemma}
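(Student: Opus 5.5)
Since the variance is the trace of the covariance, $\operatorname{Var}(g) = \E\|g\|^2 - \|\E g\|^2 \le \E\|g\|^2$. We then pull out the worst-case prefactor:
$$\E\|g\|^2 \le \max_{M'}[(f(M',\Omega)-m)^2]\,\E\bigl\|\nabla_\Theta \log p(M'|\Theta)\bigr\|^2 .$$
What remains is to bound the expected squared norm of the score function, i.e. the trace of the Fisher information of the Bernoulli--Plackett--Luce model. The density factorises as $p(\pi,\mB|\theta,\mP)=p(\pi|\theta)\,p(\mB|\mP)$, and $\theta$ and $\mP$ are disjoint parameter blocks. Hence
$$\|\nabla_\Theta \log p\|^2 = \|\nabla_\theta \log p(\pi|\theta)\|^2 + \|\nabla_{\mP}\log p(\mB|\mP)\|^2,$$
and we treat the two terms separately.

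\textbf{Bernoulli block.} Here we need the parameterisation of $p_{ij}$. Taking logits $p_{ij}=\sigma(\phi_{ij})$, as is natural for gradient-based training, gives $\partial_{\phi_{ij}}\log p(\mB) = b_{ij}-p_{ij}$, with second moment exactly $p_{ij}(1-p_{ij})$. Summing over $i,j$ yields the term $\sum_{i,j}p_{ij}(1-p_{ij})$. If one differentiates in $p_{ij}$ directly, the Fisher term is $1/(p_{ij}(1-p_{ij}))$ instead. The stated bound therefore indicates that the logit parameterisation is intended, and I would state that convention explicitly.

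\textbf{Plackett--Luce block.} Write $\log p(\pi|\theta)=\sum_{k=1}^n\bigl(\theta_{\pi_k}-\log\sum_{u\ge k}e^{\theta_{\pi_u}}\bigr)$, so that
$$\nabla_\theta \log p(\pi|\theta)=\sum_{k=1}^n \bigl(e_{\pi_k}-q^{(k)}\bigr),$$
where $q^{(k)}$ is the softmax of $\theta$ restricted to the remaining set $\{\pi_k,\dots,\pi_n\}$. Each summand is the score of one categorical draw, conditional on the past. Conditioned on the first $k-1$ draws it has mean zero, so the summands form a martingale difference sequence and the cross terms vanish in expectation. Thus $\E\|\nabla_\theta\log p\|^2=\sum_k \E\|e_{\pi_k}-q^{(k)}\|^2$. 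For a single categorical draw, $\E\|e_Y-q\|^2 = 1-\|q\|^2\le 1$, so the block contributes at most $n$. This gives the $n$ term.

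The main subtlety is this martingale orthogonality step. The naive alternative is to bound each summand crudely, using $\|e-q\|^2\le 2$, and then apply the triangle inequality. That route gives $O(n^2)$, or at best $2n$, rather than the claimed $n$. Orthogonality via the tower property is what yields the clean linear bound.

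Combining the two blocks gives $\E\|\nabla_\Theta\log p\|^2\le n+\sum_{i,j}p_{ij}(1-p_{ij})$, which proves the lemma.
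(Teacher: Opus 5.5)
Your proof is correct and has the same skeleton as the paper's. You drop $\|\E g\|^2$ and pull out $\max(f-m)^2$. You split the score into the Plackett--Luce and Bernoulli blocks. You use the logit parameterisation $p_{ij}=\sigma(\omega_{ij})$, which is exactly the convention the paper adopts in its appendix, to get $\sum_{i,j}p_{ij}(1-p_{ij})$.

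The one real difference is how you compute the Plackett--Luce Fisher trace. The paper uses the negative-expected-Hessian form of the Fisher information. After reordering $\theta$ along the permutation, it obtains $\sum_{k}\bigl(1-\sum_{u\ge k}e^{2\theta_{\pi_u}}/(\sum_{u\ge k}e^{\theta_{\pi_u}})^2\bigr)$. You instead use the outer-product form together with the martingale-difference structure of the sequential categorical scores. Both routes arrive at the same per-stage quantity $1-\|q^{(k)}\|^2$.

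Your version makes explicit that this quantity depends on the random $\pi$ and is bounded pointwise before taking the expectation. The paper's ``without loss of generality'' reordering glosses over this, harmlessly, since the bound holds for every permutation. The paper also observes that the $k=n$ term vanishes, which tightens the bound to $n-1$. You get the same improvement for free, since $q^{(n)}$ is a point mass and so $1-\|q^{(n)}\|^2=0$.
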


\begin{proof}
   \cameraready{The bound follows from the trace of the Fisher Information matrix for the Bernoulli-Plackett-Luce parameterization. See Appendix \ref{app:reinforce_gradients} for the full derivation.}
\end{proof}

\cameraready{The variance of our REINFORCE estimator therefore scales as $\mathcal{O}(n)$ for the Plackett–Luce component and as $\sum_{i,j} p_{ij}(1-p_{ij})$ (worst-case $\mathcal{O}(n^2)$) for the Bernoulli edge component, with the latter vanishing as edges become deterministic. We provide an empirical analysis of \name{}'s gradient variance in Appendix \ref{sec:convergence}}.

\paragraph{Calculating the exact score without sampling DAGs.} The following theorem shows that, for a specific form of conditional distributions, we can calculate the exact score without drawing Monte Carlo samples from the distribution over DAGs. For simplicity, we set $\lambda = 0$ in the following result, analyzing the objective without the sparse regularizer.


\begin{theorem}\label{th:normal_gradients}
Assume that each conditional distribution is Gaussian with mean $\mu_j$ and scale $\sigma_j$:
\begin{align*}
    p_{\Omega}^j(X_j \mid M'_{j}, X_{-j}) &= \gN(X_j \mid \mu_j, \sigma^2_j), \\
    \mu_j &= b_j + \sum^n_{i=1} a_{ij}w_{ij}x_i,
\end{align*}
where $a_{ij} \in \{0, 1\}$ indicates whether the edge $i \rightarrow j$ is present in the DAG, $n$ is the total number of variables, and $w_{ij}$, $b_j$, and $\sigma_j$ are learnable parameters. Assume that the edges $a_{ij}$ are distributed according to our Bernoulli-Plackett-Luce model.

Then, the expected log-likelihood score $S(\Theta, \Omega)$ can be expressed as:
\begin{align*}
    S(\Theta, \Omega) &= -\frac{1}{2} \sum_{r=1}^R \mathbb{E}_{\vx \sim P^{(r)}_{\text{data}}} \sum_{j \notin \mathcal{I}_r} s(\vx, j),
\end{align*}
with sample- and node-specific scores $s(\vx, j)$ defined as:
\begin{align*}
    s(\vx, j) &= \log(2\pi\sigma^2_j) + \frac{1}{\sigma^2_j} \Biggl(\Big(x_j - b_j - \sum_{i=1}^n \mathbb{E}[a_{ij}] w_{ij} x_i\Big)^2 \\
    &+\sum_{i=1}^n \mathbb{E}[a_{ij}](1 - \mathbb{E}[a_{ij}])w^2_{ij}x^2_i \\
    &+\sum_{i=1}^n \sum_{k=1, k \ne i}^n (\mathbb{E}[a_{ij}]w_{ij}x_i)(\mathbb{E}[a_{kj}]w_{kj}x_k) \\ & \qquad \biggl(\frac{e^{\theta_j}}{e^{\theta_i} + e^{\theta_j} + e^{\theta_k}}\biggr)\Biggr),
\end{align*}
and expectation $\mathbb{E}[a_{ij}] = p_{ij} \left(\frac{e^{\theta_i}}{e^{\theta_i} + e^{\theta_j}}\right)$ given by the Bernoulli-Plackett-Luce model, where $p_{ij}$ are learnable Bernoulli probabilities and $\theta_i$ and $\theta_j$ are the Plackett-Luce logits for variables $i$ and $j$, respectively.
\end{theorem}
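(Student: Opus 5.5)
The plan is to compute the expected Gaussian log-likelihood directly, using linearity of expectation over the Bernoulli--Plackett--Luce distribution. By Fubini, the outer expectation over $M' \sim M(\Theta)$ can be moved inside the sum over regimes, samples and non-intervened nodes $j \notin \mathcal{I}_r$. It then suffices to show, for fixed $\vx$ and $j$, that
\begin{equation*}
\mathbb{E}_{M'}\bigl[-2\log \gN(x_j \mid \mu_j, \sigma_j^2)\bigr] = s(\vx, j).
\end{equation*}
Expanding the log-density gives $\log(2\pi\sigma_j^2) + \sigma_j^{-2}(x_j - b_j - \mu_j')^2$ with $\mu_j' = \sum_i a_{ij} w_{ij} x_i$. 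So only the first two moments of the random linear form $\mu_j'$ are needed. Write $y = x_j - b_j$ and $c_i = w_{ij} x_i$. Then
\begin{equation*}
\mathbb{E}\bigl[(y-\mu_j')^2\bigr] = \bigl(y - \textstyle\sum_i \mathbb{E}[a_{ij}]c_i\bigr)^2 + \operatorname{Var}\bigl(\textstyle\sum_i a_{ij}c_i\bigr).
\end{equation*}
The variance splits into diagonal terms $\sum_i \operatorname{Var}(a_{ij})c_i^2$ and off-diagonal terms $\sum_{i\neq k}\operatorname{Cov}(a_{ij},a_{kj})c_ic_k$.

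\textbf{First moments and diagonal terms.} The mask $\mB$ is independent of $\pi$, and $a_{ij} = b_{ij}\mathds{1}[\pi_i^{-1} < \pi_j^{-1}]$. Hence $\mathbb{E}[a_{ij}] = p_{ij}\, p(\pi_i^{-1} < \pi_j^{-1} \mid \theta)$. A standard property of Plackett--Luce is that the marginal order of any subset of items is again Plackett--Luce with the restricted logits. I would justify this by the exponential race (Gumbel) representation: $\pi$ is the order of independent exponentials with rates $e^{\theta_i}$, equivalently of $\theta_i + \text{Gumbel}$ perturbations sorted decreasingly. This gives $p(i \text{ before } j) = e^{\theta_i}/(e^{\theta_i}+e^{\theta_j})$, which matches the stated $\mathbb{E}[a_{ij}]$. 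Because $a_{ij}$ is binary, $a_{ij}^2 = a_{ij}$, so $\operatorname{Var}(a_{ij}) = \mathbb{E}[a_{ij}](1-\mathbb{E}[a_{ij}])$. This yields the second line of $s$.

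\textbf{Cross terms (main obstacle).} For $i \neq k$ (both $\neq j$), independence of the Bernoulli masks gives
\begin{equation*}
\mathbb{E}[a_{ij}a_{kj}] = p_{ij}p_{kj}\, P(i \prec j,\ k \prec j).
\end{equation*}
By the marginalization property applied to $\{i,j,k\}$, the event that both $i$ and $k$ precede $j$ is the complement of $j$ being ranked first among the three. So $P(i \prec j,\ k \prec j) = 1 - e^{\theta_j}/(e^{\theta_i}+e^{\theta_j}+e^{\theta_k})$. Therefore
\begin{equation*}
\operatorname{Cov}(a_{ij},a_{kj}) = p_{ij}p_{kj}\bigl[P(i,k \prec j) - P(i\prec j)P(k \prec j)\bigr].
\end{equation*}
The delicate step is to reconcile this with the stated cross term, which has the form $\mathbb{E}[a_{ij}]\mathbb{E}[a_{kj}] \cdot e^{\theta_j}/(e^{\theta_i}+e^{\theta_j}+e^{\theta_k})$. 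Writing $\alpha_u = e^{\theta_u}$, I would verify the algebraic identity
\begin{equation*}
\frac{\alpha_i+\alpha_k}{\alpha_i+\alpha_j+\alpha_k} - \frac{\alpha_i\alpha_k}{(\alpha_i+\alpha_j)(\alpha_k+\alpha_j)} = \frac{\alpha_i\alpha_k}{(\alpha_i+\alpha_j)(\alpha_k+\alpha_j)}\cdot\frac{\alpha_j}{\alpha_i+\alpha_j+\alpha_k}.
\end{equation*}
The check is to clear denominators: the left side becomes $\alpha_j\alpha_i\alpha_k/\bigl[(\alpha_i+\alpha_j)(\alpha_k+\alpha_j)(\alpha_i+\alpha_j+\alpha_k)\bigr]$, which equals the right side. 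Multiplying by $p_{ij}p_{kj}$ turns the right side into exactly $\mathbb{E}[a_{ij}]\mathbb{E}[a_{kj}]\,\alpha_j/(\alpha_i+\alpha_j+\alpha_k)$. This is the step I expect to need the most care, both for the correct three-item marginal and for the algebra.

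\textbf{Degenerate indices.} Terms with $i = j$ or $k = j$ should vanish. Either the convention $a_{jj}=0$ applies, or $P(j \prec j)=0$ makes $\mathbb{E}[a_{jj}]=0$, so they can be included in the sums harmlessly. Substituting the mean, diagonal variance and covariance terms into the decomposition above, multiplying by $-\tfrac12$, and summing over $r$ and $j$ recovers the stated $S(\Theta,\Omega)$.
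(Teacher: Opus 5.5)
You have the same skeleton as the paper: swap the expectations, write $\mathbb{E}[(y-\mu_j')^2]$ as squared mean plus variance, and split the variance into Bernoulli variances plus pairwise covariances. Getting the Plackett--Luce subset marginals from the Gumbel/exponential-race representation is a legitimate and cleaner substitute for the paper's induction-on-$n$ lemmas.

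The gap is in the cross-term step you flagged. The event $\{i \prec j,\ k \prec j\}$ is an intersection: it says $j$ is ranked \emph{last} among $\{i,j,k\}$. The complement of ``$j$ is first among the three'' is the union $\{i \prec j\} \cup \{k \prec j\}$. That union also contains the orderings where $j$ sits in the middle. So $1 - \alpha_j/(\alpha_i+\alpha_j+\alpha_k)$ is not $P(i\prec j,\ k \prec j)$, and the identity you intend to verify is false. For $\alpha_i=\alpha_j=\alpha_k=1$ its left side is $2/3 - 1/4 = 5/12$, while its right side is $1/12$. Clearing denominators gives the numerator $\alpha_j(\alpha_i^2+\alpha_k^2+\alpha_i\alpha_k+\alpha_i\alpha_j+\alpha_j\alpha_k)$, not $\alpha_i\alpha_j\alpha_k$. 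Your planned check would therefore fail, and the stated covariance is not established as written.

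The repair stays inside your framework. With $S = \alpha_i+\alpha_j+\alpha_k$, the three-item marginal gives
\begin{equation*}
P(i\prec j,\ k\prec j) = \frac{\alpha_i}{S}\,\frac{\alpha_k}{\alpha_k+\alpha_j} + \frac{\alpha_k}{S}\,\frac{\alpha_i}{\alpha_i+\alpha_j} = P(i\prec j)\,P(k\prec j)\,\frac{S+\alpha_j}{S},
\end{equation*}
which is precisely the paper's second lemma. Alternatively, keep your union probability and use inclusion--exclusion: $P(i\prec j,\ k\prec j) = P(i\prec j) + P(k\prec j) - (1 - \alpha_j/S)$. Either way, subtracting $P(i\prec j)P(k\prec j)$ leaves the covariance $p_{ij}p_{kj}P(i\prec j)P(k\prec j)\,\alpha_j/S = \mathbb{E}[a_{ij}]\mathbb{E}[a_{kj}]\,\alpha_j/S$, as required.

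One minor remark: your treatment of $i=j$ is right only if $\mathbb{E}[a_{jj}]$ is read as the true expectation $0$. The closed form $p_{jj}e^{\theta_j}/(e^{\theta_j}+e^{\theta_j}) = p_{jj}/2$ does not vanish, which is why the paper's derivation restricts the sums to indices different from $j$.
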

\begin{proof}
    See Appendix \ref{app:normal_gradients}.
\end{proof}

The main implication of Theorem \ref{th:normal_gradients} is that, if we assume that the conditional distributions are Gaussian with linear mechanisms, we can exactly calculate the gradients of our objective with respect to the parameters of the Bernoulli-Plackett-Luce model without sampling from the distribution over DAGs. \cameraready{We generalize this result to multivariate anisotropic Normal models in Theorem \ref{th:multivariate_normal}. We outline an extension to general models in Theorem \ref{th:general_models} (Appendix \ref{app:normal_gradients}), using a second-order Taylor expansion of the log-likelihood around a null graph, \textit{i.e.}, a model with no causal relationships. We outline this approach as a direction for future research, noting that it currently lacks the scalability of the REINFORCE estimator.}


\paragraph{Computational and memory complexity.} \cameraready{\name{} is designed to handle interventional data within a single likelihood-based objective and admits an exact analytic expression for the expected objective, eliminating the need for sampling. The computational cost of \name{}'s forward pass is $\mathcal{O}(n^2)$, allowing PACER to scale effectively to thousands of variables where $\mathcal{O}(n^3)$ methods become prohibitive. \name{}'s memory complexity is $\mathcal{O}(n^2)$. We provide an extended complexity analysis in Appendix \ref{app:complexity}.}

\subsection{Modeling the conditional distributions}
\name{} supports modeling the conditional distributions $p_{\Omega}^j(X_j | M'_{j}, X_{-j})$ in different ways, \textit{i.e.} the structure of these conditionals can be chosen according to the type of data. By default, we use a Normal distribution to model continuous data:
\begin{align*}
    p_{\Omega}^j(X_j | M'_{j}, X_{-j}) &= \gN(X_j | \mu_j, \sigma_j^2) \\
    \mu_j &= \text{MLP}_{\mu_j}(M'_{j} \odot X_{-j}) \\
    \sigma_j &= \text{softplus} \big( \text{MLP}_{\sigma_j}(M'_{j} \odot X_{-j}) \big).
\end{align*}

Here, $\odot$ represents the element-wise vector product, $\text{softplus}(x)=\log (1 + e^{x})$, and $\text{MLP}_{\mu_j}$ and $\text{MLP}_{\sigma_j}$ are multilayer perceptrons specific to variable $j$, which allow modeling non-linear causal mechanisms. \name{} also supports tailored probability distributions, \textit{e.g.} negative binomial distributions for modeling single-cell RNA-seq counts\cameraready{, and can be extended to accommodate more flexible likelihoods or non-parametric densities.}

\section{Results}

We evaluate \name{} using observational and interventional data from synthetic and real-world datasets \cite{sachs2005causal, chevalley2025large,frangieh2021multimodal}. Across experiments, the set of baselines and evaluation metrics varies to reflect differences in data scale and experimental design, and to follow the standard protocols established for each benchmark.  In particular, some methods are only applicable to observational data, while others require interventional information or do not computationally scale to large graphs.  For each setting, we therefore compare against strong and representative baselines that are commonly used in prior work and feasible for the corresponding dataset.


\paragraph{Datasets.}
We evaluate \name{} on a diverse set of benchmarks spanning both synthetic and real-world datasets, including  small-scale signaling networks and large-scale genetic perturbation data.
We generate synthetic datasets following prior simulation protocols~\citep{brouillard2020differentiable,nazaret2023stable} using random graphs with a fixed edge density and neural network-based causal mechanisms (Appendix~\ref{sec:synthetic_data}). We further evaluate PACER on three real-world benchmarks.
Sachs \textit{et al.} \citep{sachs2005causal} consists of flow cytometry measurements of 11 phosphoproteins under observational and interventional conditions with a curated ground-truth network.
CausalBench \citep{chevalley2025large} is a benchmark for gene regulatory network inference using large-scale Perturb-seq datasets (RPE1 and K562 cell lines).
Perturb-CITE-seq \citep{frangieh2021multimodal} provides multimodal single-cell perturbation data across three experimental conditions (control, co-culture, and IFN-$\gamma$ stimulation).

\paragraph{Baseline methods.} We compare \name{} with representative constraint-based, score-based, and differentiable causal discovery methods commonly used in prior work. These baselines span observational, interventional, and large-scale settings (Appendix~\ref{app:baseline_implementations}). Unless otherwise specified, all baselines are run with default hyperparameters. For \name{}, we use a single fixed set of hyperparameters across all real-world benchmarks (Appendix~\ref{app:hyperparameters}). For synthetic experiments only, we match \name{}'s architectural capacity (\textit{e.g.}, hidden dimensions) to that of other neural baselines to ensure a fair comparison under controlled simulation settings (Appendices \ref{sec:synthetic_data} and \ref{app:baseline_implementations}). 


\paragraph{Evaluation metrics.}
We follow standard evaluation protocols for each benchmark. For synthetic data and the Sachs dataset, where the ground-truth causal graph is known, we use structure recovery metrics including the structural hamming distance (SHD), the structural intervention distance (SID), the false discovery rate (FDR), the true positive rate (TPR), and the F1 score. For CausalBench, we adopt the benchmark’s Wasserstein and false omission rate metrics, and for Perturb-CITE-seq we evaluate predictive performance on held-out interventions using interventional likelihood-based metrics. We provide details in Appendix~\ref{app:metrics}.

\begin{figure}
    \centering
    \includegraphics[width=.9\linewidth]
    {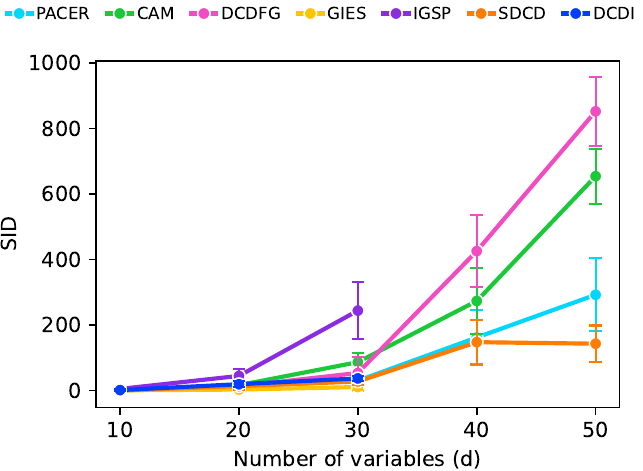}
    \caption{SID with increasing numbers of variables $d \leq 50$. SID score for $d=\{10,20,30,40,50\}$ is reported. Lower SID indicates better performance. Missing data points refer to runs that exceeded the 6 hr time limit. Error bars denote the standard deviation over three random seeds.}
    \vspace{-0.3cm}
    \label{fig:scalability_sid}
\end{figure}

\begin{figure} 
    \centering
    \includegraphics[width=.9\linewidth]
    {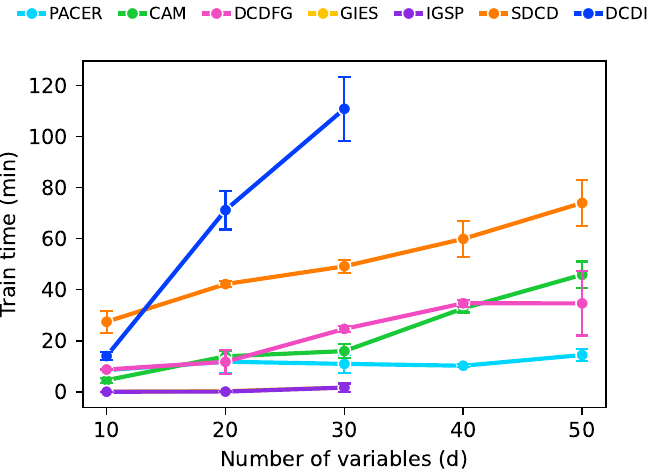}
    \caption{Runtime scaling with increasing numbers of variables $d \leq 50$. We report wall-clock training time for varying number of variables. Missing data points indicate runs that exceeded the 6 hour time limit. Error bars denote the standard deviation over three random seeds. 
    }
    \label{fig:runtime}
    \vspace{-0.3cm}
\end{figure}

\paragraph{Synthetic experiments.}
We study the performance and scalability of \name{} by varying the number of variables $d$ in the synthetically generated data (Appendix \ref{sec:synthetic_data}). We repeat experiments over three randomly generated datasets for each setting, adhering to a strict 6-hour runtime limit consistent with established benchmarks~\citep{brouillard2020differentiable, nazaret2023stable}.  

As the graph dimensionality increases, \name{} maintains a favorable balance between structural accuracy and computational scalability, whereas other baselines exhibit rapid performance degradation or fail to execute beyond moderate graph sizes (Figures \ref{fig:scalability_sid} and \ref{fig:runtime}). Specifically, DCDI becomes numerically unstable and exceeds the 6-hour threshold for $d \ge 40$, illustrating the limitations of differentiable penalty-based methods in high-dimensional regimes. While CAM and DCDFG are capable of executing on graphs with up to 100 variables without timing out, both methods suffer from sharply increasing SID and substantially poorer scalability compared to \name{} (Appendix \ref{app:add_experiments}). \name{} robustly scales to large-scale graphs, a regime that remains computationally prohibitive for existing interventional approaches. In terms of robustness to the sparsity coefficient $\lambda$, \name{} remains stable across a broad range of configurations, achieving near-optimal performance around $\lambda=1$ (Appendix \ref{app:sparsity_coef}). \cameraready{We further compare \name{} to the differentiable DAG sampling approach \citep{charpentier2022differentiable} in Appendix \ref{app:dpdag}}.


\begin{table}
    \centering
    \caption{Performance summary on flow cytometry data from \cite{sachs2005causal}. Network recovery metrics in observational and interventional settings. Colors highlight the two best-performing methods for each metric (darker indicates better performance). Correct: number of  correctly inferred edges, total: counts all predicted edges, and DAG: whether the inferred structure is acyclic. $\downarrow$: lower is better. $\uparrow$: higher is better.  We describe the metrics in Appendix \ref{app:metrics}.} 
    \label{fig:sachs_1}
    \includegraphics[width=0.95\linewidth]{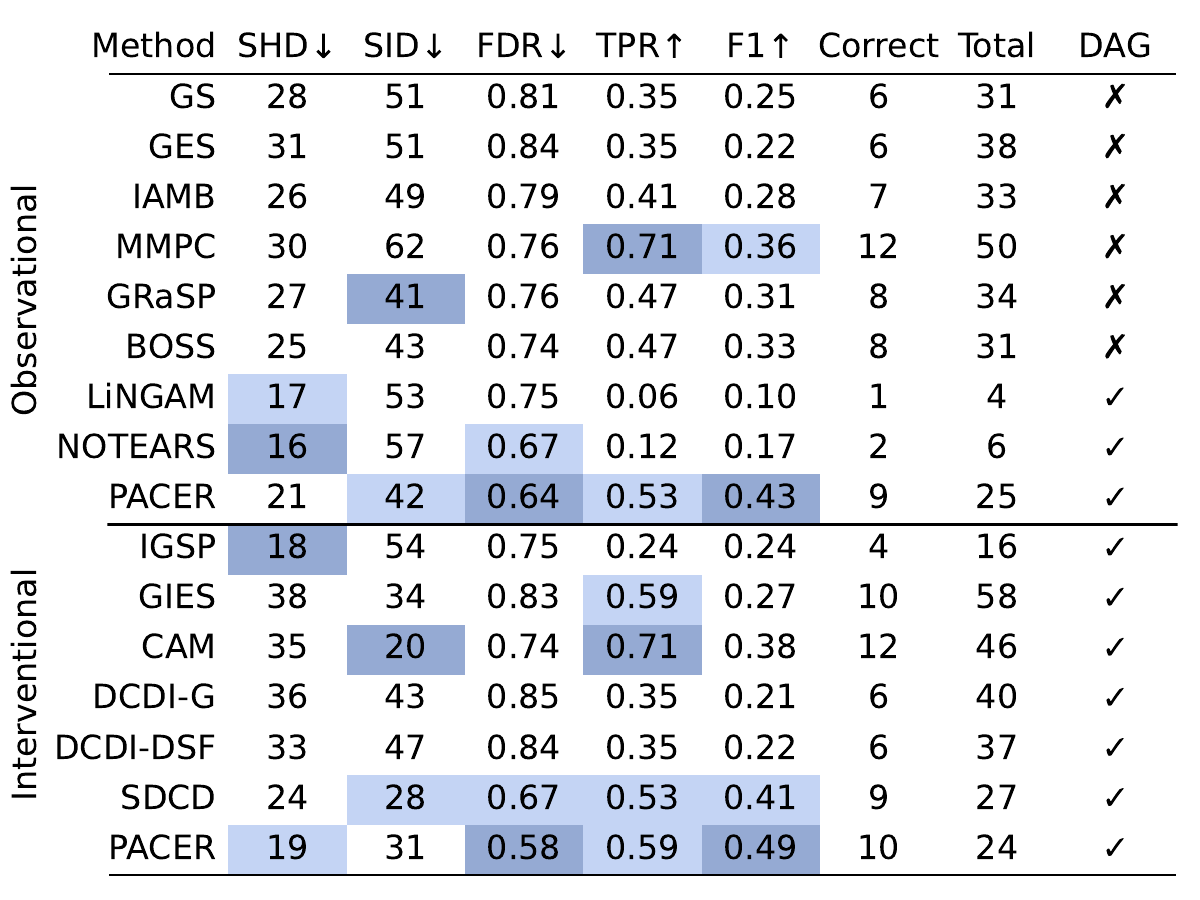}
    \vspace{-10pt} 
\end{table}

\paragraph{Protein signaling experiments.} We conduct the protein signaling benchmark in two case studies, corresponding to the observational and interventional regimes of the \cite{sachs2005causal} dataset (Appendix \ref{app:sachs}). In the observational setting, no external perturbations are applied, \textit{i.e.}, the data reflect the endogenous variability of the signaling network.
In the interventional setting, specific proteins are chemically perturbed, providing additional constraints for causal discovery. \cameraready{In both settings, we evaluate against the consensus network reported in Sachs \textit{et al.} We note that the correctness of the Sachs consensus network should be interpreted as an imperfect reference rather than the definitive ground truth \citep{mooij2020joint}.}

In both cases, \name{} achieves competitive performance in different metrics, often outperforming existing methods (Table \ref{fig:sachs_1}). In the observational setting, several constraint-based and score-based approaches excel in individual metrics, \textit{e.g.} NOTEARS \citep{zheng2018dags} achieves the most accurate structure in terms of SHD, while GRaSP \citep{lam2022greedy} offers the most reliable causal effect predictions in terms of SID. However, no single observational baseline dominates across all evaluation criteria. Among existing approaches, PACER stands out as the only method that consistently ranks among the top two across all five metrics, while attaining the strongest balance between precision and recall. In the interventional setting, methods that explicitly exploit perturbation information (such as \name{}, CAM, and GIES) show substantial performance gains, highlighting the importance of interventions for resolving causal directions. \name{} achieves the highest F1 score and the best FDR score while being the second best method across other metrics.

Qualitatively, the learnt interventional network captures canonical signaling interactions, such as the PKC→RAF→MEK→ERK cascade \citep{ueda1996protein} and downstream effects of ERK and PKA on AKT (Appendix \ref{app:sachs_network}). \cameraready{Interestingly, we observe that PACER infers a directed edge from RAF to ERK that is not present in the consensus network (Appendix \ref{app:sachs_network}). This finding is consistent with prior discussions suggesting the presence of a feedback loop from ERK to RAF \citep{mooij2020joint}, as well as more recent studies of the RAF/MEK/ERK signaling cascade, \textit{e.g.}, \cite{ullah2022raf} report negative feedback phosphorylation between RAF and ERK.} Remaining discrepancies are typically spurious edges in densely connected regions, consistent with the known difficulty of recovering directionality in tightly coupled regions \citep{epskamp2018tutorial}. Overall, \name{} performs similar or better than existing methods, showing robustness across both observational and interventional settings and the ability to recover biologically meaningful signaling pathways.



\paragraph{CausalBench (large-scale genetic perturbation data).}

Across both datasets (RPE1 and K562), \name{} achieves strong and consistent performance in terms of the Wasserstein metric, ranking among the top performing methods (Table \ref{fig:causalbench}). Importantly, this performance is obtained with a substantially lower computational cost. Unlike approaches such as NOTEARS~\citep{zheng2018dags}, DCDI~\citep{brouillard2020differentiable}, and DCDFG~\citep{lopez2022large}, which rely on continuous relaxations and acyclicity penalties, \name{} explores the DAG search space directly by construction. This enables substantial computational gains: for example, on the K562 dataset, PACER achieves the same Wasserstein performance as the best competing method while being over 300 times faster, with a similar dramatic speedup on RPE1 (Table~\ref{fig:causalbench}). The analytic version implementing Theorem~\ref{th:normal_gradients}, \textit{i.e.}, PACER (A),  further amplifies efficiency. 


\begin{table}
    \centering
    \caption{Performance and runtime summary on the CausalBench benchmark. Mean Wasserstein distance (W) and false omission rate (FOR) for all methods on the RPE1 and K562 Perturb-seq datasets. Higher Wasserstein values indicate better recovery of intervention-induced distributional shifts; lower FOR indicates fewer missed known interactions.  Colors highlight the two best-performing methods for each metric (darker indicates better performance). Time in minutes. We run all the methods on the same machine using a GPU accelerator (24GB NVIDIA GeForce RTX 3090) where applicable. \name{}: baseline with default hyperparameters (Appendix \ref{app:hyperparameters}). \name{} (A): analytic version of \name{} (model in Theorem \ref{th:normal_gradients}). \name{} (TF): baseline with TF mask (Bernoulli probabilities of edges outgoing from non-transcription factor nodes are set to zero). \name{} (A, TF): Analytic \name{} baseline with TF mask. }
    \label{fig:causalbench}
    \includegraphics[width=\linewidth]{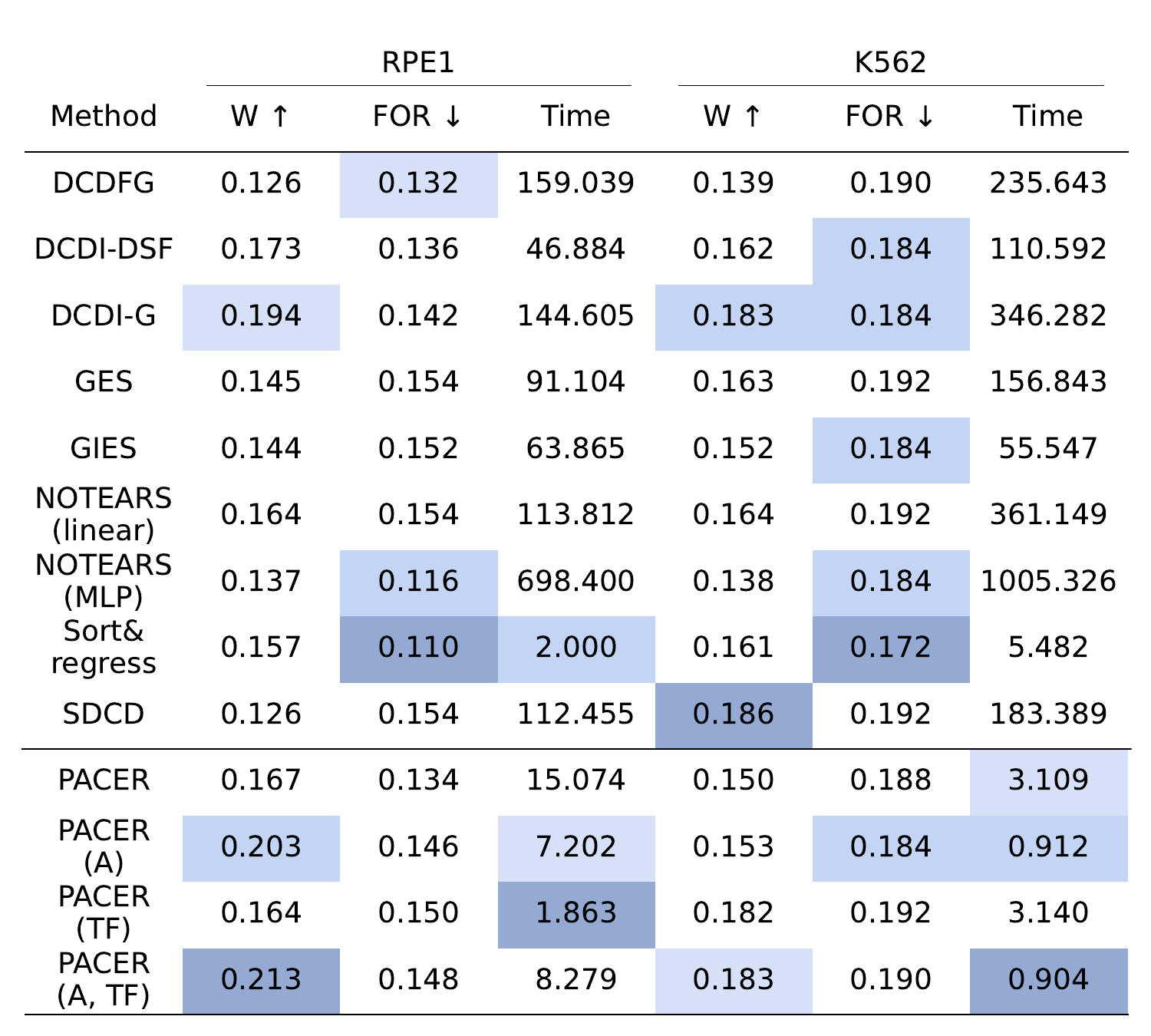}
    \vspace{-20pt} 
\end{table}

\begin{figure*}
    \centering
    \includegraphics[width=\linewidth]{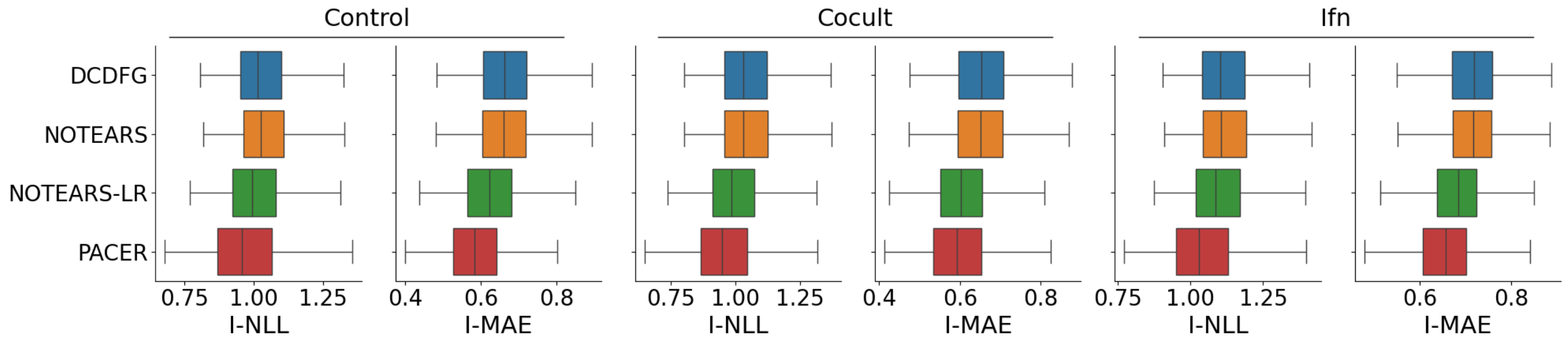}
    \caption{Performance on the Perturb-CITE-seq dataset. Interventional negative log-likelihood (I-NLL) and interventional mean absolute error (I-MAE) evaluated on held-out interventions across three experimental conditions: control, co-culture, and IFN-$\gamma$ stimulation. Lower values indicate better performance. Sampling units are held-out perturbations. Boxes depict distribution quartiles, with the center line corresponding to the median, and whiskers span 1.5 × interquartile range.}
    \label{fig:perturbciteseq}
\end{figure*}

Notably, \name{} operates on the full gene regulatory graph in a single optimization loop. As noted in the CausalBench benchmark \citep{chevalley2025large}, most existing methods do not computationally scale to transcriptome-wide graphs and must instead be applied to smaller subgraphs, with the final network obtained by merging independently inferred subnetworks. CausalBench runs GES and GIES on partitions of size 30, while DCDI-based methods are run on partitions of size 50. Only a small subset of approaches (NOTEARS, DCDFG, GRNBoost, and Sortnregress) operate on the full graph without partitioning. Such partitioning can break global modeling assumptions and prevents methods from fully  leveraging the joint interventional structure of the data. In contrast, \name{} directly optimizes over all variables simultaneously, which could contribute to its strong performance on large-scale perturbation datasets.

We further assess whether injecting prior knowledge leads to performance gains (Table~\ref{fig:causalbench}). For the \name{} (TF) variants, we set the Bernoulli probabilities of edges outgoing from non-transcription factor (TF) nodes to zero, allowing only TFs to regulate other genes. This constraint yields improvements in Wasserstein performance in both RPE1 and K562 despite restricting the complexity of the model. While some methods (such as NOTEARS, DCDFG, and DCDI variants) can in principle incorporate certain forms of prior knowledge via masking or penalties, these modifications are often nontrivial and require custom adaptations. \name{} enables systematic and flexible integration of prior knowledge without altering the method's optimization process, making it uniquely suited for integrating biological inductive biases similar to those used in CellOracle \citep{kamimoto2023dissecting} and SCENIC \citep{aibar2017scenic,bravo2023scenic+}. These frameworks leverage modalities including TF binding motifs and chromatin accessibility, and have proven effective in recent gene regulatory network inference benchmarks \citep{badia2024comparison}.

\paragraph{Perturb-CITE-seq (large-scale genetic perturbation data).}
Following the Perturb-CITE-seq setting of DCDFG \cite{lopez2022large} (Appendix \ref{app:perturbciteseq_data}), we compare \name{} with DCDFG \citep{lopez2022large}, NOTEARS \citep{zheng2018dags}, and NOTEARS-LR \citep{fang2023low} using the interventional negative log-likelihood and mean absolute error as evaluation metrics. This dataset poses a particularly challenging setting for causal discovery, as it retains close to one thousand genes after preprocessing and 248 distinct perturbations, a scale at which most existing methods become computationally infeasible. Despite this, \name{} achieves the best performance in terms of both metrics across the three experimental conditions (Figure \ref{fig:perturbciteseq}) while being on average over 300 times faster than DCDFG (Appendix \ref{app:perturbciteseq_runtime}).


\section{Conclusion}

We introduce \name{}, a scalable framework for causal discovery from observational and interventional data that guarantees acyclicity by construction. \name{} parameterizes a distribution over DAGs through variable permutations and edge probabilities, eliminating the need for surrogate acyclicity penalties. This design enables a unified likelihood-based treatment of interventions while supporting flexible conditional models and prior knowledge. Empirically, \name{} matches or exceeds the performance of state-of-the-art causal discovery methods on protein signaling and large-scale genetic perturbation benchmarks, while scaling efficiently to networks with thousands of variables. In particular, \name{}'s analytic formulation yields substantial computational gains, demonstrating that exact and scalable causal discovery is achievable when the search space is appropriately structured. \name{} offers a practical causal discovery solution for modern high-dimensional settings that remain challenging for existing methods. 

\cameraready{\name{} is available on GitHub at: \url{https://github.com/mlbio-epfl/PACER}.
Project page: \url{https://brbiclab.epfl.ch/pacer}.}

\textbf{Limitations and future work.} \name{} assumes perfect interventions and causal sufficiency, \textit{i.e.}, it does not model latent confounders. \cameraready{While \name{} does not provide calibrated Bayesian uncertainty, it could serve as a computationally efficient bootstrap posterior representing the interventional Markov equivalence class \citep{hauser2012characterization}.} Future work includes extending the framework to imperfect interventions \cameraready{and unknown targets (see Appendix \ref{app:intervention_extensions} for an extension outline)}, expanding the analytic treatment to broader classes of causal mechanisms, improving variance reduction for stochastic optimization, and integrating additional biological priors from multimodal data sources.

\section*{Acknowledgements}
\cameraready{We thank Maxim Kodryan, Jack Naimer, Shuo Wen, Yist Yu, and the anonymous reviewers for their constructive feedback.} We gratefully acknowledge the support of the Peter und Traudl Engelhorn Foundation, the Swiss National Science Foundation (SNSF) starting grant \text{TMSGI2\_226252/1}, SNSF grant \text{IC00I0\_231922},  SNSF grant 10.004.411, and the Swiss AI Initiative Large Call \text{\#32}. M.B. is a CIFAR Fellow in the Multiscale Human Program.

\section*{Impact statement}
This work advances methodological foundations for causal discovery from observational and interventional data. By enabling scalable and principled inference of directed acyclic graphs, the proposed framework could support scientific discovery in domains such as systems biology, genomics, and medicine, where understanding causal mechanisms is critical for hypothesis generation and experimental design.

At the same time, causal discovery methods can be misapplied if their assumptions (such as causal sufficiency, correct intervention modeling, or appropriate likelihood specification) are violated. Incorrect causal conclusions could lead to misguided downstream decisions, particularly in sensitive applications like biomedical research. We therefore emphasize that \name{} is intended  to serve as a data analysis and hypothesis-generation tool, to be used in conjunction with domain expertise and experimental validation rather than as a standalone decision-making system.

The framework itself does not introduce new risks related to data privacy, surveillance, or automated decision-making, as it operates on pre-collected datasets and does not prescribe actions. When applied to biological or clinical data, responsible data governance and ethical data collection practices remain essential. Overall, we believe the benefits of improved scalability and principled causal inference outweigh the potential risks, provided the method is used with appropriate care and domain awareness.


\bibliography{iclr2025_conference}
\bibliographystyle{icml2026}

\newpage
\appendix
\onecolumn

\appendix

\section{REINFORCE variance}\label{app:reinforce_gradients}

PACER relies on REINFORCE for gradient estimation. This is attractive because the estimator is unbiased, but its variance and convergence behavior in large, high-dimensional DAG spaces are not fully understood. This motivates a more explicit analysis of the score-function estimator, its variance reduction through baselines, and the special structure of the Plackett--Luce gradient. 

\subsection{REINFORCE and baseline estimators}

Let $M' \sim M(\Theta)$ denote a sample / policy from the graph distribution, and let $s(M') := f(M', \Omega)$ be the scalar reward / objective signal, where $\Omega$ is a fixed parameter. Write the pure REINFORCE estimate
\[
g_{\mathrm{REINFORCE}}(M') \;=\; s(M')\,G(M'),
\]
where
\[
G(M') \;=\; \nabla_\Theta \log p(M' | \Theta).
\]
Then
\beq \label{ReinforceUnbiased}
\mathbb{E}_{M' \sim M(\Theta)}\!\left[\, g_{\mathrm{REINFORCE}}(M') \,\right]
=
\nabla_\Theta \mathbb{E}_{M' \sim M(\Theta)}[s(M')].
\eeq

A baseline $m$ yields the modified estimator
\[
g_{\mathrm{BASELINE}}(M') \;=\; (s(M') - m)\,G(M').
\]

REINFORCE is an unbiased estimator of the gradient of the expected reward~(\ref{ReinforceUnbiased}). The baseline estimator remains
unbiased provided the baseline~$m$ does not depend on the current sampled
graph, since $\mathbb{E}_{M' \sim M(\Theta)} G(M') = 0$.\\

The covariance matrices satisfy
$$
\ba{rcl}
\operatorname{Cov}\!\left(g_{\mathrm{REINFORCE}}(M')\right)
&=&
\mathbb{E}_{M' \sim M(\Theta)}\!\left[ s(M')^2\,G(M')G(M')^\top\right]
-
\nabla_\Theta J(\Theta)\,\nabla_\Theta J(\Theta)^\top, \\
\\
\operatorname{Cov}\!\left(g_{\mathrm{BASELINE}}(M')\right)
&=&
\mathbb{E}_{M' \sim M(\Theta)}\!\left[(s(M') - m)^2\,G(M')G(M')^\top\right]
-
\nabla_\Theta J(\Theta)\,\nabla_\Theta J(\Theta)^\top,
\ea
$$
where $J(\Theta) = \mathbb{E}_{M' \sim M(\Theta)}[s(M')]$.

The total variance of the distribution is captured by the trace of the covariance matrix.
Using $\operatorname{tr}(vv^\top)=\|v\|_2^2$, we obtain
\beq \label{VarExpression}
\operatorname{Var}\!\left(g_{\mathrm{BASELINE}}(M')\right)
=
\mathbb{E}_{M' \sim M(\Theta)}\!\left[(s(M') - m)^2\,\|G(M')\|_2^2\right]
-
\left\|\nabla_\Theta J(\Theta)\right\|_2^2.
\eeq

Thus, the variance depends critically on:
\begin{itemize}
    \item the magnitude of $(s(M') - m)^2$,
    \item the norm $\|G(M')\|^2_2$.
\end{itemize}

The scalar baseline that minimizes the trace of the covariance is
\[
m^\star
=
\frac{\mathbb{E}\!\left[s(M')\,\|G(M')\|_2^2\right]}
{\mathbb{E}\!\left[\|G(M')\|_2^2\right]}.
\]
In practice, this quantity is often too expensive to estimate exactly, so a running average of $s(M')$ is a common and inexpensive approximation.\\

\subsection{Translation invariance and coordinate-wise bound}

First, we consider solely the Plackett--Luce model with parameter $\theta \in \R^n$ and bound the norm of the corresponding component of the gradient: $\nabla_{\theta} \log p(M'|\Theta)$. 

For ease of notation, we denote the density of the Plackett--Luce model by $p_\theta(\pi)$
and our goal is to bound $\|\nabla_{\theta} \log p_{\theta}(\pi)\|_2^2$.
We denote by $\pi$ the corresponding samples from the Plackett--Luce distribution.

We observe that for the Plackett--Luce model, adding a constant to every logit does not change the distribution. This shift invariance implies that the gradient of the log-probability is orthogonal to the all-ones vector.

\begin{lemma}[Zero-sum score function]
Let $\theta \in \mathbb{R}^n$ parameterize a Plackett--Luce distribution. Then
\[
\mathbf{1}^\top \nabla_\theta \log p_\theta(\pi) = 0.
\]
\end{lemma}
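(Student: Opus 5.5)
The plan is to differentiate the shift-invariance identity, and then to check the result against the explicit Plackett--Luce formula.

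\textbf{Step 1: invariance.} For any fixed permutation $\pi$ and any $c \in \R$, the Plackett--Luce probability satisfies
\[
p_{\theta + c\mathbf{1}}(\pi) = p_\theta(\pi).
\]
This holds because each factor $e^{\theta_{\pi_k}+c}/\sum_{u\ge k} e^{\theta_{\pi_u}+c}$ picks up $e^{c}$ in both numerator and denominator, so the factor is unchanged.

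\textbf{Step 2: differentiate in $c$.} Since $p_\theta(\pi)>0$, the map $c \mapsto \log p_{\theta+c\mathbf{1}}(\pi)$ is smooth and constant. Its derivative at $c=0$ is therefore zero. By the chain rule, that derivative equals $\mathbf{1}^\top \nabla_\theta \log p_\theta(\pi)$, which gives the claim.

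\textbf{Step 3: explicit check.} As a cross-check, which I would include because it also prepares the coordinate-wise bound that follows, write
\[
\log p_\theta(\pi) = \sum_{k=1}^n \Big(\theta_{\pi_k} - \log \sum_{u=k}^n e^{\theta_{\pi_u}}\Big).
\]
For each stage $k$, the gradient of the $k$-th term is $e_{\pi_k} - q^{(k)}$. Here $q^{(k)}$ is the softmax distribution over the remaining items $\{\pi_k,\dots,\pi_n\}$, extended by zeros to the other coordinates. Both $e_{\pi_k}$ and $q^{(k)}$ are probability vectors, so their difference sums to zero. Summing over $k$ preserves this.

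No step is a real obstacle. The only care needed is to note that the derivative exists and that $\log p_\theta(\pi)$ is well defined, both of which hold since all Plackett--Luce probabilities are strictly positive.
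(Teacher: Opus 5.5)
Your Steps 1--2 are exactly the paper's argument: define $c \mapsto \log p_{\theta+c\mathbf{1}}(\pi)$, note that it is constant by shift invariance, and differentiate at $c=0$ with the chain rule. Your explicit verification of the invariance, and the stagewise check that each gradient $e_{\pi_k}-q^{(k)}$ sums to zero, are correct additions that the paper does not spell out.
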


\begin{proof}
Define
\[
f(c) = \log p_{\theta + c\mathbf{1}}(\pi).
\]
Because the Plackett--Luce distribution is invariant to global shifts of the logits, $f(c)$ is constant in $c$. Therefore
\[
0 = f'(0) = \mathbf{1}^\top \nabla_\theta \log p_\theta(\pi).
\]
\end{proof}

This property is useful in two ways. First, it rules out drift along the uniform direction. Second, it gives a dimension-reduction effect: the score lives on an $(n-1)$-dimensional hyperplane.\\

Consider a permutation $\pi=(\pi_1,...,\pi_n)$.  The Plackett-Luce log-probability can be written as
$$
\log p_\theta(\pi)=\sum_{k=1}^n\left(\theta_{\pi_k}-\log \sum_{u=k}^n e^{\theta_{\pi_u}}\right)
$$
Without loss of generality, we reorder the
parameters $\theta$ according to this permutation, \textit{i.e.}, $\pi = (1, \ldots, n)$. We now compute the score with respect to the reordered coordinates. For the $i$-th position we get 
$$
G_i(\pi) = \frac{\partial \log p_{\theta}(\pi)}{\partial \theta_i}
=1-\sum_{k=1}^{i}\frac{e^{\theta_{i}}}{\sum_{u=k}^{n}e^{\theta_{u}}},
$$
where
$$
0 \;\; \leq \;\; 
\sum_{k=1}^{i}\frac{e^{\theta_{i}}}{\sum_{u=k}^{n}e^{\theta_{u}}} 
\;\; \leq \;\; i.
$$
Therefore,
$$
1-i\;\; \leq \;\; G_i(\pi) \;\;\leq \;\; 1.
$$

\subsection{Uniform bound}

The zero-sum property can be combined with coordinate-wise bounds to obtain a clean quadratic bound.

\begin{proposition}
Suppose $v \in \mathbb{R}^n$ satisfies
\[
\mathbf{1}^\top v = 0
\qquad\text{and}\qquad
1-i \le v_i \le 1
\quad \text{for all } i=1,\dots,n.
\]
Then
\[
\|v\|_2^2 \le n(n-1).
\]
Moreover, this bound is tight.
\end{proposition}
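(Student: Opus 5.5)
The plan is a change of variables that turns the upper constraint $v_i \le 1$ into nonnegativity, after which the zero-sum condition fixes the total mass. I set $u_i := 1 - v_i$. The hypotheses translate as follows. First, $v_i \le 1$ gives $u_i \ge 0$. Second, $v_i \ge 1-i$ gives $u_i \le i$, though I expect not to need this for the upper bound. Third, $\mathbf{1}^\top v = 0$ gives $\sum_{i=1}^n u_i = n$.

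Next I expand the norm in the new variables:
\[
\|v\|_2^2 = \sum_{i=1}^n (1-u_i)^2 = n - 2\sum_{i=1}^n u_i + \sum_{i=1}^n u_i^2 = \sum_{i=1}^n u_i^2 - n .
\]
The problem thus reduces to bounding $\sum_i u_i^2$ for a nonnegative vector with fixed sum $n$.

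The key step uses nonnegativity. All cross terms $u_iu_k$ are nonnegative, so
\[
\sum_i u_i^2 \le \Bigl(\sum_i u_i\Bigr)^2 = n^2 .
\]
This gives $\|v\|_2^2 \le n^2 - n = n(n-1)$.

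For tightness, equality in the key step requires all mass to sit on one coordinate, i.e.\ $u_k = n$ and $u_i = 0$ for $i \ne k$. The constraint $u_k \le k$ then forces $k = n$. The corresponding vector is $v = (1,\dots,1,1-n)$, which satisfies both constraints and the zero-sum condition. Its norm is $(n-1) + (n-1)^2 = n(n-1)$.

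I do not expect a real obstacle. The only point needing care is spotting the substitution; once it is made, the argument is the elementary inequality $\|u\|_2 \le \|u\|_1$ for nonnegative $u$. The lower constraints $1-i \le v_i$ only matter for confirming that the extremal vector is feasible.
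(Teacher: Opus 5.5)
Your proof is correct and matches the paper's argument step for step: the substitution $u_i = 1 - v_i$, the identity $\|v\|_2^2 = \sum_i u_i^2 - n$, the bound $\sum_i u_i^2 \le (\sum_i u_i)^2 = n^2$ from nonnegative cross terms, and the extremal vector $v = (1,\dots,1,1-n)$. Your added remark, that the constraint $u_k \le k$ forces all the mass onto the last coordinate in the equality case, is a small refinement that the paper leaves implicit.
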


\begin{proof}
Let
\[
u_i := 1 - v_i.
\]
Then $0 \le u_i \le i$ and
\[
\sum_{i=1}^n u_i
=
\sum_{i=1}^n (1-v_i)
=
n - \mathbf{1}^\top v
=
n.
\]
Also,
\[
\|v\|_2^2
=
\sum_{i=1}^n (1-u_i)^2
=
n - 2\sum_{i=1}^n u_i + \sum_{i=1}^n u_i^2
=
\sum_{i=1}^n u_i^2 - n.
\]
Since $u_i \ge 0$,
\[
\left(\sum_{i=1}^n u_i\right)^2
=
\sum_{i=1}^n u_i^2 + 2\sum_{i<j} u_i u_j
\ge
\sum_{i=1}^n u_i^2,
\]
so $\sum_i u_i^2 \le n^2$. Therefore
\[
\|v\|_2^2 \le n^2 - n = n(n-1).
\]

The bound is attained by choosing
\[
u = (0,0,\dots,0,n),
\qquad\text{equivalently}\qquad
v = (1,1,\dots,1,1-n).
\]
\end{proof}

Hence:
\beq \label{BoundUniform}
\boxed{\| \nabla_{\theta} \log p_{\theta}(\pi)  \|^2_2 \leq n(n-1).}
\eeq

\subsection{Bound in expectation}

Note that bound~(\ref{BoundUniform}) holds uniformly, for all permutations $\pi$.
Its right-hand side is of order $\mathcal{O}(n^2)$.
However, for the expectation of $\| \nabla_{\theta} \log p_{\theta}(\pi) \|^2_2$
we can obtain a much better bound of order $\mathcal{O}(n)$.

Employing Fisher's information,
\[
\mathcal I(\theta)
=
\mathbb{E}\left[\nabla_{\theta} \log p_{\theta}(\pi)  \nabla_{\theta} \log p_{\theta}(\pi) ^\top\right]
\]
we have
\[
\operatorname{tr}(\mathcal I(\theta)) =\mathbb{E}[\|\nabla_{\theta} \log p_{\theta}(\pi) \|^2_2]
\]

Using the negative expectation of the second derivative of the Plackett-Luce log-probability, the trace is

\[
\operatorname{tr}(\mathcal{I}(\theta)) = \sum_{i=1}^n \sum_{k=1}^i \frac{e^{\theta_i} \sum_{u=k}^n e^{\theta_u} - e^{2\theta_i}}{\left(\sum_{u=k}^n e^{\theta_u}\right)^2}.
\]

We can swap the order of summation to obtain:

\[
\operatorname{tr}(\mathcal{I}(\theta)) = \sum_{k=1}^n \frac{1}{\left(\sum_{u=k}^n e^{\theta_u}\right)^2} \left[ \sum_{i=k}^n \left( e^{\theta_i} \sum_{u=k}^n e^{\theta_u} - e^{2\theta_i} \right) \right].
\]

The term in the brackets is in fact $\sum_{i=k}^n \left( e^{\theta_i} \sum_{u=k}^n e^{\theta_u} - e^{2\theta_i} \right) = \left( \sum_{i=k}^n e^{\theta_i} \right)^2 - \sum_{i=k}^n e^{2\theta_i}$. Substituting this back into the trace formula we get

\[
\operatorname{tr}(\mathcal{I}(\theta)) = \sum_{k=1}^n \frac{\left(\sum_{u=k}^n e^{\theta_u}\right)^2 - \sum_{u=k}^n e^{2\theta_u}}{\left(\sum_{u=k}^n e^{\theta_u}\right)^2} = \sum_{k=1}^n \left( 1 - \frac{\sum_{u=k}^n e^{2\theta_u}}{\left(\sum_{u=k}^n e^{\theta_u}\right)^2} \right).
\]

Notice that each term is always between $0$ and $1$. For $k=n$, it is exactly $0$. For $k<n$, each term is $<1$. From this we deduce
\beq \label{BoundExpectationTheta}
\boxed{ \mathbb{E}[\| \nabla_{\theta} \log p_{\theta}(\pi) \|^2_2]=\operatorname{tr}(\mathcal{I}(\theta)) \le n - 1.
}
\eeq

\subsection{The variance of Bernoulli matrix gradients}
We define the probability of an edge $(i, j)$ existing via the sigmoid function $p_{ij} = \sigma(\omega_{ij})$,
where $\omega_{ij}$ is a parameter. For a sampled edge indicator $b_{ij} \in \{0, 1\}$, the entry-wise score $G_{ij}$ is:
\[
G_{ij} = \frac{\partial}{\partial \omega_{ij}}\log \left( \sigma(w_{ij})^{b_{ij}} (1-\sigma(w_{ij})^{1-b_{ij}}\right)= b_{ij} - \sigma(\omega_{ij}),
\]
so each element is between $-1$ and $1$.

The squared Frobenius norm of the matrix $G(b)$ is equivalent to the squared $L_2$ norm of the flattened vector:
\[
\|G(b)\|_F^2 = \sum_{i=1}^n \sum_{j=1}^n G_{ij}^2 \le\sum_{i=1}^n \sum_{j=1}^n 1=n^2.
\]
Thus,
\beq \label{GBoundUniform}
\boxed{
\|G(b)\|_F^2 \leq n^2.
}
\eeq
At the same time,
$$
\E[ \, \|G(b)\|_F^2 \, ]
= \sum\limits_{i = 1}^n \sum\limits_{j = 1}^n \E[ G_{ij}^2 ]
= \sum\limits_{i = 1}^n \sum\limits_{j = 1}^n p_{ij}(1 - p_{ij})^2 +
(1 - p_{ij}) p_{ij}^2
= \sum\limits_{i = 1}^n \sum\limits_{j = 1}^n (1 - p_{ij})p_{ij}.
$$
Thus,
\beq \label{GBoundExpectation}
\boxed{
\E[\|G(b)\|_F^2 ]
 = \sum\limits_{i, j} (1 - p_{ij})p_{ij}.
 }
\eeq

\subsection{General estimate of the REINFORCE variance}

Notice that the whole score $\| G(M') \|_2^2$
consists of two components: the Plackett-Luce component 
and the Bernoulli component, and it holds
$$
\|G(M')\|_2^2
=
\| \nabla_{\theta} \log p_{\theta}(\pi) \|^2_2
+ \|G(b)\|_F^2.
$$
Therefore, combining (\ref{BoundUniform}) and (\ref{GBoundUniform}) we obtain:
\beq \label{VarExpBound}
\mathrm{Var}(g_{\text{BASELINE}})
\; = \;
\mathbb{E}_{M' \sim M(\Theta)}\!\left[(s(M')-m)^2\,\|G(M')\|_2^2\right]
-
\left\|\nabla_\Theta J(\Theta)\right\|_2^2
\; \le \;
\mathbb{E}_{M' \sim M(\Theta)}[(s(M')-m)^2]\cdot \mathcal{O}(n^2).
\eeq
This implies that for $m = \frac{1}{k}\sum_{i=1}^k s(M'_i)$, using that $Var(m)=Var(s(M'))/k$,
we have

$$
\ba{rcl}
\mathrm{Var}(g_{\text{BASELINE}})
&\leq& 
\mathbb{E}_{M' \sim M(\Theta)}[(s(M')-m)^2] \cdot \mathcal{O}(n^2) \\
\\
&=& \mathbb{E}_{M' \sim M(\Theta)}[((s(M')-\mu )-(m -\mu))^2] \cdot \mathcal{O}(n^2) \\
\\
&=& \Bigl(\mathbb{E}_{M' \sim M(\Theta)}[(s(M')-\mu )^2]+\mathbb{E}_{M' \sim M(\Theta)}[(m-\mu )^2]-2\mathbb{E}_{M' \sim M(\Theta)}[(s(M')-\mu )(m - \mu)] \Bigr) \cdot \mathcal{O}(n^2) \\
\\
&=& Var_{M' \sim M(\Theta)}[s(M')]\left(1-\frac{1}{k}\right) \cdot \mathcal{O}(n^2).
\ea
$$

\subsection{Estimate of the REINFORCE variance for bounded variation}

Assume that there exists $V > 0$ such that 
$$
(s(M')-m)^2\leq V,
$$
uniformly for any DAG $M'$. Then, using~(\ref{BoundExpectationTheta})
and~(\ref{GBoundExpectation}), we conclude:
\[
\mathrm{Var}(g_{\text{BASELINE}})
\le
V \cdot \mathbb{E}[\|G(M')\|^2]
\leq V \cdot \Bigl( n + \sum_{i, j}p_{ij}(1 - p_{ij}) \Bigr),
\]
which is much better explicit dependence on $n$ than that one in~(\ref{VarExpBound}).
This completes the proof of Lemma~\ref{LemmaVarBound}.

\section{Stability and convergence of the gradient estimator}
\label{sec:convergence}

We provide an empirical analysis of the stability
and convergence behavior of PACER's gradient estimator in (i) large-scale synthetic graphs (Section \ref{sec:large_synth}) and (ii) a real large-scale single-cell perturbation dataset (Section \ref{sec:rpe1_var}).

\subsection{Controlled experiments on large synthetic graphs}
\label{sec:large_synth}

To assess stability independently of modeling choices, we conduct controlled
experiments using a likelihood-free objective (mean squared error to a target
DAG) on graphs with up to $N=500$ nodes, with an edge noise probability of
$\rho=0.3$ (each edge in the true DAG is independently omitted with probability
$\rho$ at each training step to simulate the uncertainty inherent in real-world
causal discovery).
This setting isolates the effect of the DAG parameterization and optimization
from any specific likelihood model or architecture.

\paragraph{Convergence of topological ordering.}
Figure~\ref{fig:kendall_tau} shows Kendall~$\tau$ between the inferred and
ground-truth partial orderings over training steps for $N=500$ nodes.
Both the analytic and REINFORCE-based variants of PACER converge stably to
near-perfect ordering ($\tau\approx1.0$) without any oscillatory or divergent
behavior.  The analytic variant converges substantially faster, reaching
$\tau\!\approx\!1.0$ within $\sim\!200$ steps, while the REINFORCE variant
converges more gradually but reliably.
A comparison against DP-DAG baselines under the same objective is provided in
Appendix~\ref{app:dpdag}.

\begin{figure}[H]
  \centering
  \includegraphics[width=0.65\linewidth]{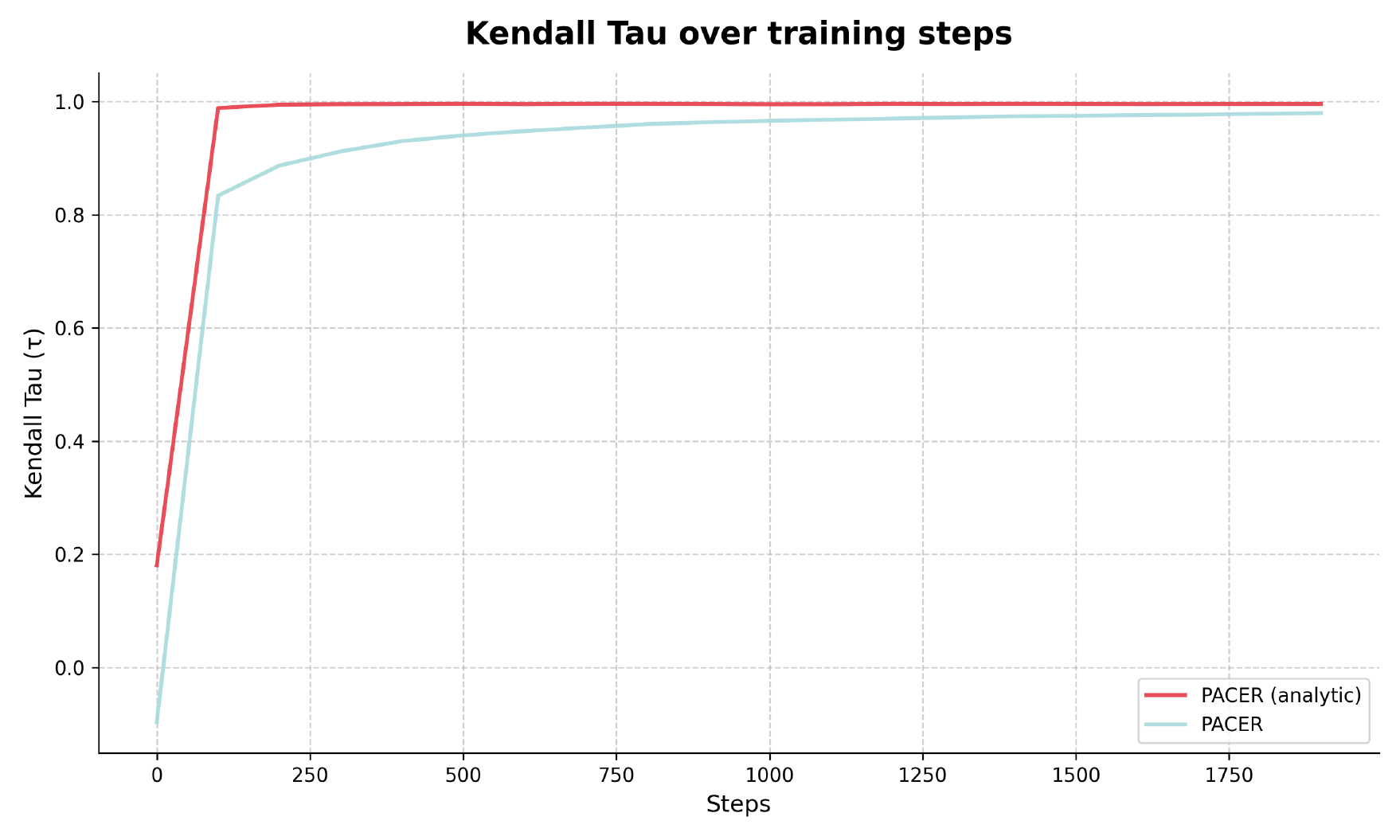}
  \caption{%
    Kendall~$\tau$ between inferred and ground-truth partial orderings over
    training steps ($N=500$ nodes, $\rho=0.3$).
    The analytic variant (red) converges rapidly to $\tau\approx1.0$;
    the REINFORCE variant (teal) converges more gradually but reaches
    comparable accuracy.  No divergence or oscillation is observed.%
  }
  \label{fig:kendall_tau}
\end{figure}

\paragraph{Gradient variance across graph sizes (with and without control variate).}
Figure~\ref{fig:var_nodes_cv} reports gradient variance over training steps
for $N\in\{10,50,100,500\}$ using the REINFORCE estimator with control variate.
Despite a 50-fold increase in graph size, all curves decrease consistently and
converge to comparable variance levels, demonstrating that the estimator scales
stably to large graphs. For comparison, Figure~\ref{fig:var_nodes_nocv} shows the same
experiment \emph{without} the control variate.
Variance is orders of magnitude larger and grows strongly with $N$
(${\approx}90$ for $N{=}500$ vs.\ ${\approx}0.002$ with control variate at convergence),
confirming that baseline subtraction is critical for stable large-scale
training.

\begin{figure}
  \centering
  \includegraphics[width=0.65\linewidth]{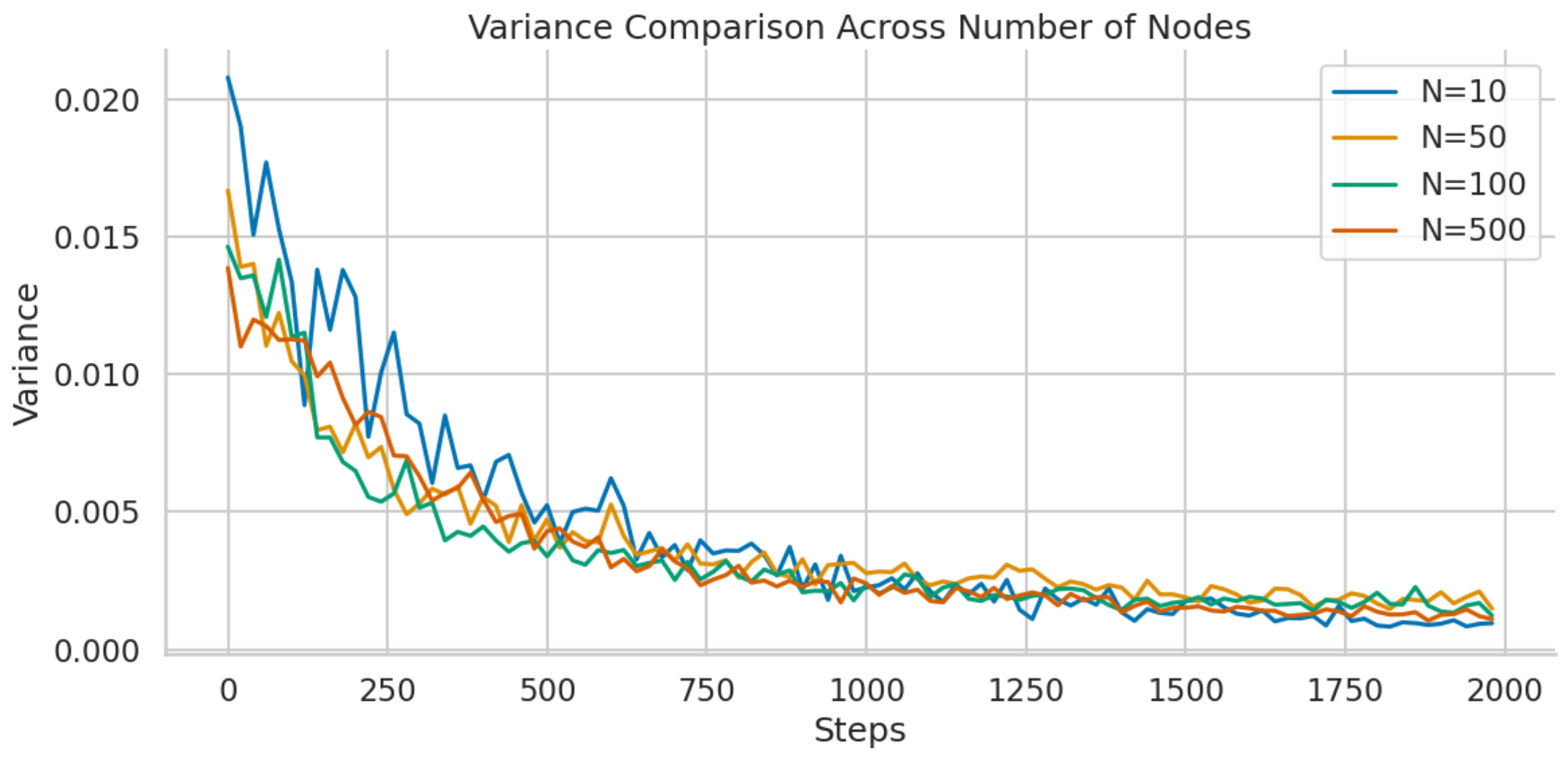}
  \caption{%
    Gradient variance of REINFORCE with control variate across graph sizes
    $N\in\{10,50,100,500\}$ (likelihood-free objective).
    Variance decreases monotonically for all $N$ and converges to
    similar magnitudes, demonstrating stable scaling with graph size.%
  }
  \label{fig:var_nodes_cv}
\end{figure}

\begin{figure}
  \centering
  \includegraphics[width=0.65\linewidth]{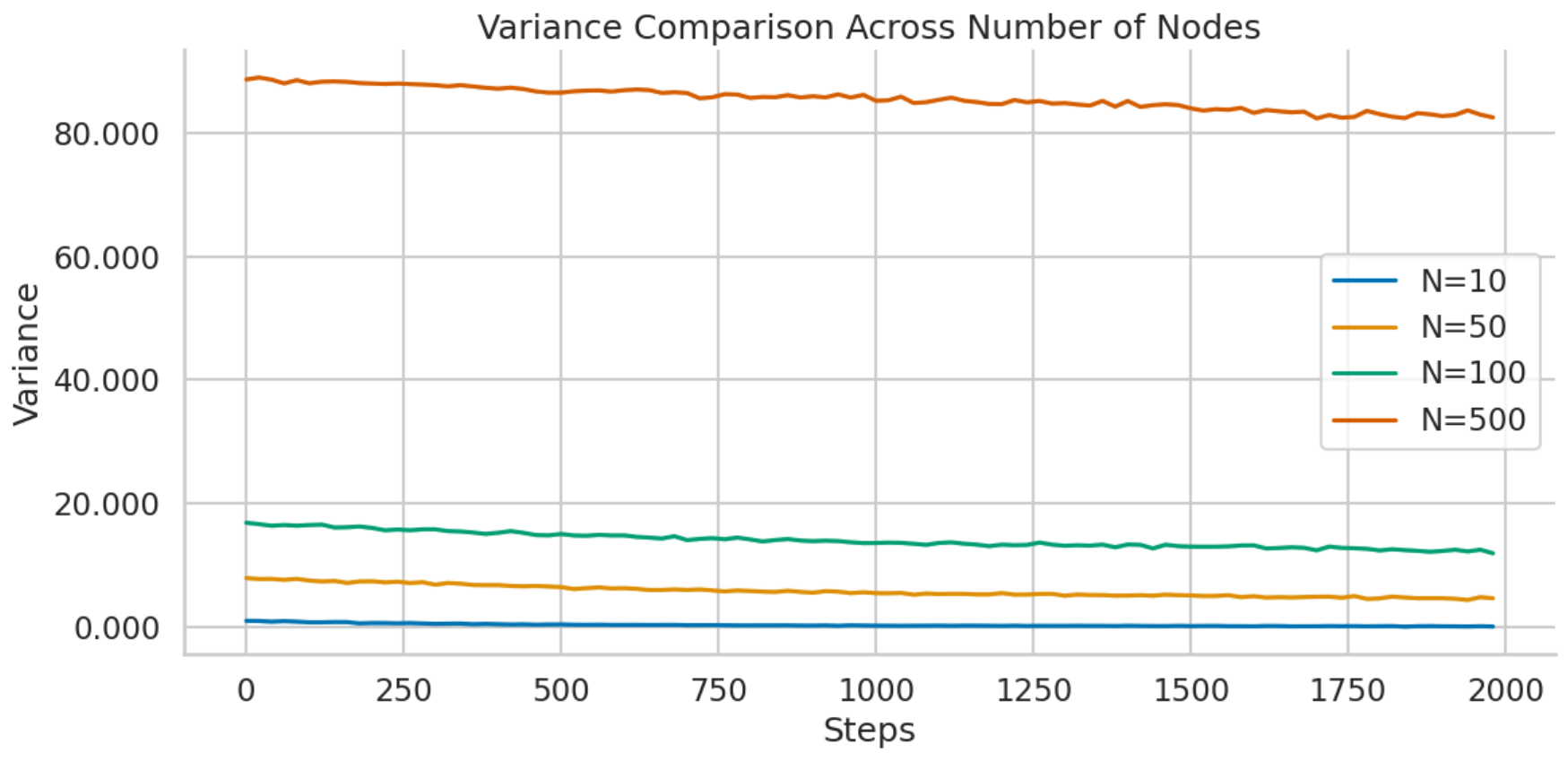}
  \caption{%
    Gradient variance of REINFORCE \emph{without} control variate across number of nodes
    $N\in\{10,50,100,500\}$.
    Variance is orders of magnitude larger than with control variate
    (cf.\ Figure~\ref{fig:var_nodes_cv}) and increases strongly with $N$,
    confirming the critical role of the control variate.%
  }
  \label{fig:var_nodes_nocv}
\end{figure}

\paragraph{Gradient variance across Monte Carlo samples.}
Figure~\ref{fig:var_mc} shows gradient variance (mean $\pm$ std across seeds)
for $N=200$ nodes as a function of Monte Carlo sample count (10, 50, 100, and 200).
Variance decreases both over training steps and as the number of Monte Carlo samples increases. Higher numbers of Monte Carlo samples also substantially tighten confidence bands across seeds.

\begin{figure}
  \centering
  \includegraphics[width=0.6\linewidth]{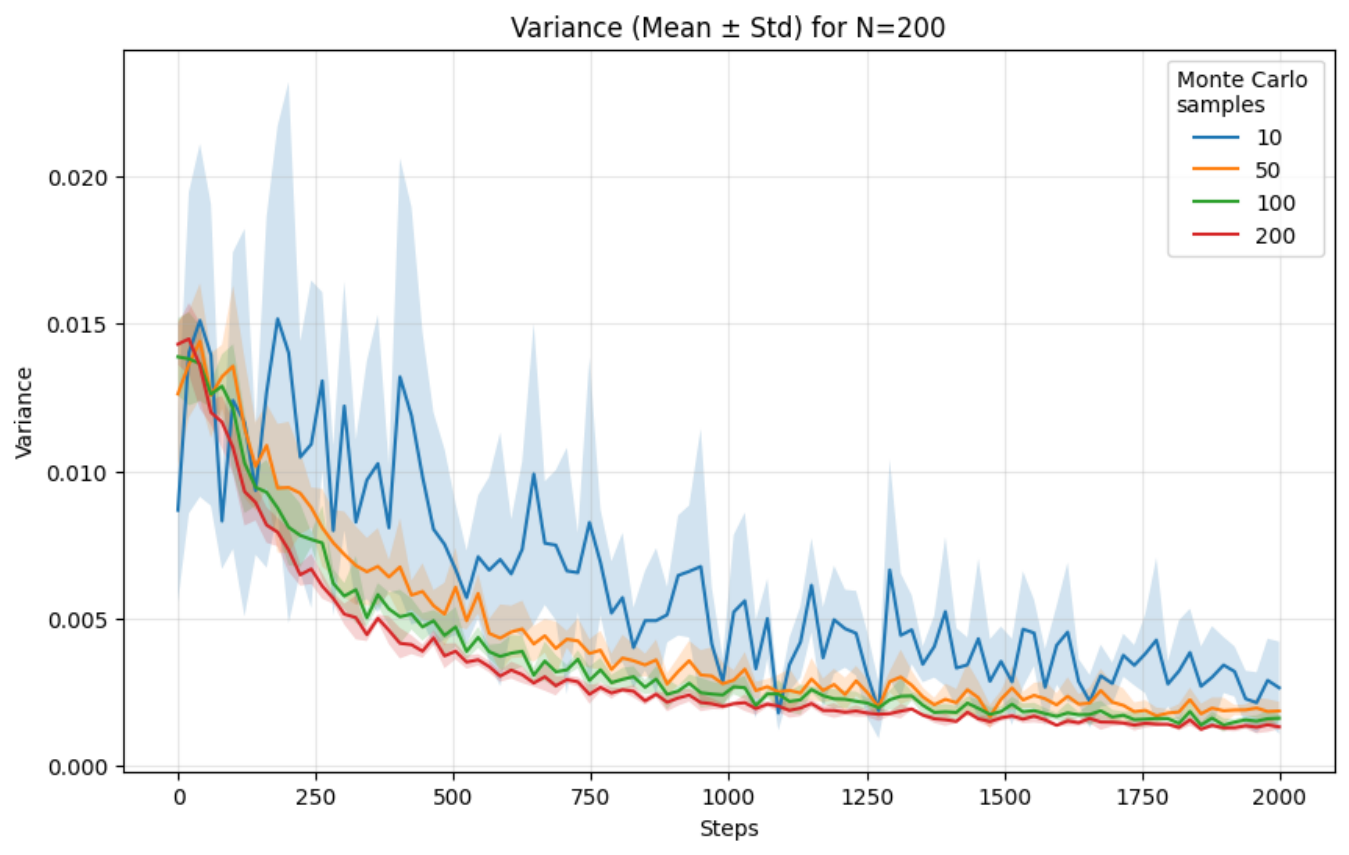}
  \caption{%
    Gradient variance (mean $\pm$ std across seeds) for $N{=}200$ nodes
    and different numbers of Monte Carlo samples.
    Increasing the number of Monte Carlo samples consistently reduces both mean, variance, and
    cross-seed spread.%
  }
  \label{fig:var_mc}
\end{figure}

\subsection{Variance analysis on the Replogle RPE1 dataset}
\label{sec:rpe1_var}

We analyze gradient variance throughout training on the Replogle RPE1 dataset
\citep{replogle2022mapping}, a large-scale CRISPR perturbation dataset with
thousands of genes.
We compare three estimators (Figure~\ref{fig:rpe1_variance}):

\begin{itemize}
  \item \textbf{Analytic estimator} (Figure~\ref{fig:rpe1_variance}, left):
        achieves several \emph{orders of magnitude} lower variance than
        stochastic estimators.  Variance decreases monotonically on a log
        scale throughout the entire training run (${\sim}20{,}000$ steps),
        reaching near-zero levels with no sign of instability.

  \item \textbf{REINFORCE with control variate}
        (Figure~\ref{fig:rpe1_variance}, center):
        variance is substantially reduced relative to the baseline without control variate.
        Increasing the number of Monte Carlo samples further decreases variance in a predictable
        manner.

  \item \textbf{REINFORCE without control variate} (Figure~\ref{fig:rpe1_variance}, right):
        exhibits persistently high variance (${\sim}3\times10^7$), confirming
        that the control variate is essential for stable optimization.
\end{itemize}


\begin{figure}[H]
  \centering
  \includegraphics[width=\linewidth]{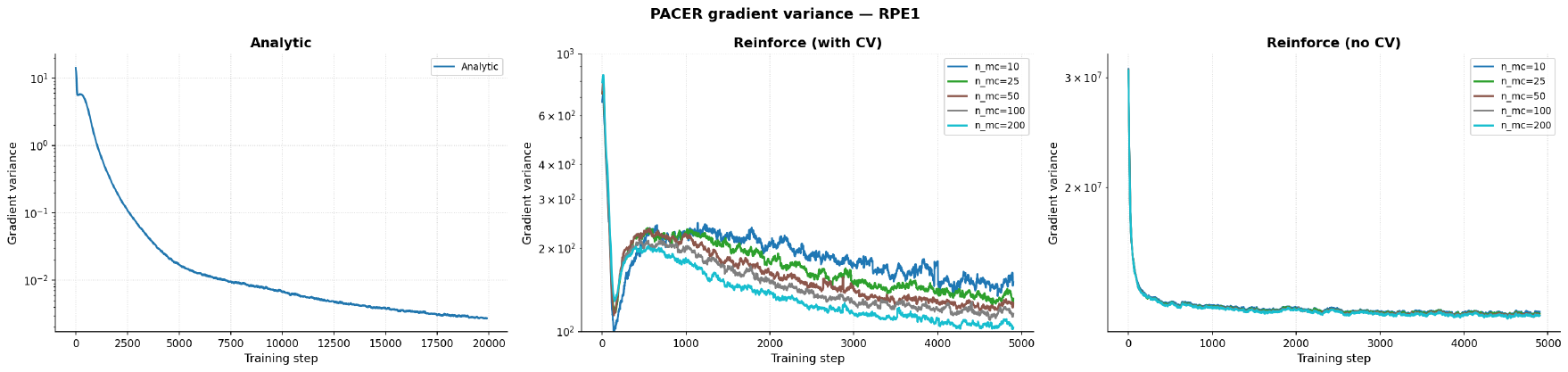}
  \caption{%
    Gradient variance during training on the Replogle RPE1 dataset.
    \textit{Left:} Analytic estimator (log scale): decreases monotonically
    to near zero over 20{,}000 steps.
    \textit{Center:} REINFORCE with control variate for varying numbers of Monte Carlo samples; larger
    sample counts yield lower variance.
    \textit{Right:} REINFORCE without control variate: variance is orders of magnitude
    higher, underscoring the necessity of variance reduction.%
  }
  \label{fig:rpe1_variance}
\end{figure}

\section{Analytic estimators}\label{app:normal_gradients}

\subsection{Lemmas}
\begin{lemma}\label{lem:i_before_j}
Let $\pi$ be a random permutation of $\{1,\dots,n\}$ sampled from a Plackett-Luce distribution with logits $\theta\in\mathbb{R}^n$. Then for any distinct $i,j$,
\begin{align}
    \Pr(i \prec j) = \Pr(\pi^{-1}_i < \pi^{-1}_j) = \frac{e^{\theta_i}}{e^{\theta_i}+e^{\theta_j}},
\end{align}
where $i \prec j$ denotes that $i$ precedes $j$ in permutation $\pi$.
\end{lemma}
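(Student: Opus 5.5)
The most direct route is the exponential-race (Gumbel) representation of Plackett–Luce. We realise a permutation with the Plackett–Luce law as follows. Draw independent random variables $T_1,\dots,T_n$ with $T_u \sim \mathrm{Exp}(e^{\theta_u})$, and let $\pi$ list the items in increasing order of $T_u$.

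We first check that this construction reproduces the distribution in the Plackett–Luce definition. This uses the memoryless property of exponentials. Among the items $\{\pi_k,\dots,\pi_n\}$ not yet placed, the residual times are again independent exponentials with the same rates. The minimum of independent exponentials is attained by item $u$ with probability $e^{\theta_u}/\sum_{v} e^{\theta_v}$, the sum running over the remaining items. This is exactly the sequential-categorical description given after the Plackett–Luce definition. Multiplying these factors over $k$ gives the product formula $p(\pi\mid\theta)$.

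With the representation in hand, the event $i \prec j$ is exactly the event $T_i < T_j$. For two independent exponentials,
\[
\Pr(T_i<T_j)=\int_0^\infty e^{\theta_i}e^{-e^{\theta_i}t}\,e^{-e^{\theta_j}t}\,dt=\frac{e^{\theta_i}}{e^{\theta_i}+e^{\theta_j}}.
\]
This is the claimed formula. The same computation also gives the three-way quantity used in the theorem on Gaussian conditionals: the probability that $j$ precedes both $i$ and $k$ is $e^{\theta_j}/(e^{\theta_i}+e^{\theta_j}+e^{\theta_k})$. I would note this in passing.

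The only step needing care is the equivalence between the exponential race and the product formula, i.e.\ making the memorylessness argument rigorous. If a self-contained combinatorial proof is preferred instead, I would use the sequential-draw description directly. Consider the first time either $i$ or $j$ is drawn. Conditional on the set $S$ of remaining items at that step, with $i,j\in S$, the draw picks $i$ with probability proportional to $e^{\theta_i}$ and $j$ with probability proportional to $e^{\theta_j}$. Conditioning on the event that the pick lies in $\{i,j\}$, the probability that it is $i$ is $e^{\theta_i}/(e^{\theta_i}+e^{\theta_j})$, independently of $S$. Averaging over the stopping step then yields the result. This argument needs the observation that the process almost surely reaches a step where $i$ or $j$ is drawn, which is immediate since all items are eventually drawn.
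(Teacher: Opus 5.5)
Your proof is correct, and your main route is genuinely different from the paper's. The paper inducts on $n$ and conditions on the first draw. Either $\pi_1=i$, or $\pi_1\notin\{i,j\}$; in the second case the remaining items form a Plackett--Luce permutation with the restricted logits, and the induction hypothesis gives $e^{\theta_i}/(e^{\theta_i}+e^{\theta_j})$. A short algebraic identity then closes the step.

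You instead couple the Plackett--Luce law with independent clocks $T_u\sim\mathrm{Exp}(e^{\theta_u})$. Then $i\prec j$ becomes the event $T_i<T_j$, and every other item drops out of the computation. The only cost is justifying the coupling. The cleanest way is not memorylessness but direct iterated integration of $\prod_k e^{\theta_{\pi_k}}e^{-e^{\theta_{\pi_k}}t_k}$ over $0<t_1<\dots<t_n$. Integrating out $t_n,t_{n-1},\dots$ in turn produces exactly the factors $e^{\theta_{\pi_k}}/\sum_{u\ge k}e^{\theta_{\pi_u}}$.

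What you gain is that every other order marginal becomes a computation with a few independent exponentials. For example, Lemma~\ref{lem:i_and_k_before_j} is $\Pr(T_j>\max(T_i,T_k))$. By inclusion--exclusion this equals $1-\frac{e^{\theta_j}}{e^{\theta_i}+e^{\theta_j}}-\frac{e^{\theta_j}}{e^{\theta_k}+e^{\theta_j}}+\frac{e^{\theta_j}}{e^{\theta_i}+e^{\theta_j}+e^{\theta_k}}$, and a short algebraic check matches the paper's product form. This replaces the paper's second, more cumbersome induction. It is also how the three-way quantity you mention actually enters the Gaussian theorem: as the covariance factor, via this inclusion--exclusion.

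Your fallback sequential argument is essentially the paper's first-step analysis unrolled. Instead of recursing through $\pi_1$, you stop at the first step that draws from $\{i,j\}$. For every step $k$ and history $h$ in which both items remain, $\Pr(\pi_k=i\mid h)=\frac{e^{\theta_i}}{e^{\theta_i}+e^{\theta_j}}\Pr(\pi_k\in\{i,j\}\mid h)$. Summing over $k$ and $h$ gives the claim, because that step always occurs (deterministically, by step $n-1$). This version is equally valid and avoids both the induction and the algebra.
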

\begin{proof}
We proceed by induction on $n$.

\textbf{Base case ($n=2$).}  
With only two items $\{i,j\}$, the probability that $i$ precedes $j$ is simply
\begin{align}
\Pr(i \prec j) = \Pr(\pi_1=i)
= \frac{e^{\theta_i}}{e^{\theta_i}+e^{\theta_j}},
\end{align}
which satisfies the lemma.

\textbf{Induction step.}  
Suppose the statement holds for $n=N-1$ items. Consider $n=N$ items, and let
\[
Z = \sum_{k=1}^n e^{\theta_k}.
\]
We decompose according to the first position of the permutation:
\begin{align}\label{eq:Pr_i_precedes_j}
\Pr(i \prec j) = \Pr(\pi_1=i) + \Pr(\pi_1\neq i,\, i \prec j).
\end{align}

The first term in Equation \ref{eq:Pr_i_precedes_j} is:
\begin{align*}
    \Pr(\pi_1=i) = \frac{e^{\theta_i}}{Z}.
\end{align*}

For the second term, note that if $\pi_1 \notin\{i,j\}$, then the relative order of $i$ and $j$ is distributed according to a Plackett--Luce model over $N-1$ items with logits $\{\theta_k : k\neq \pi_1\}$. By the induction hypothesis,
\begin{align*}
\Pr(i \prec j \mid \pi_1 \notin \{i,j\}) = \frac{e^{\theta_i}}{e^{\theta_i}+e^{\theta_j}}.
\end{align*}

Note also that the probability that the first element is neither $i$ or $j$ can be written as:
\begin{align*}
\Pr(\pi_1 \notin \{i,j\}) = 1 - \Pr(\pi_1=i) - \Pr(\pi_1=j) =  \frac{Z - e^{\theta_i} - e^{\theta_j}}{Z}.
\end{align*}

Hence, the second term in Equation \ref{eq:Pr_i_precedes_j} is:
\begin{align*}
\Pr(\pi_1\neq i,\, i\prec j)
&= \Pr(\pi_1 \notin \{i,j\}) \Pr(i \prec j \mid \pi_1
\notin \{i,j\}) \\
&= \Big( \frac{Z - e^{\theta_i} - e^{\theta_j}}{Z}
\Big) \Big( \frac{e^{\theta_i}}{e^{\theta_i}+e^{\theta_j}}\Big).
\end{align*}

Putting the two terms together, the probability that $i$ precedes $j$ is:
\begin{align*}
\Pr(i \prec j) &= \Pr(\pi_1=i) + \Pr(\pi_1\neq i,\, i \prec j) \\
&= \frac{e^{\theta_i}}{Z}
+ \Big(\frac{Z - e^{\theta_i} - e^{\theta_j}}{Z} \Big)
   \Big( \frac{e^{\theta_i}}{e^{\theta_i}+e^{\theta_j}} \Big) \\
&= \frac{e^{\theta_i}(e^{\theta_i}+e^{\theta_j})
     + (Z - e^{\theta_i} - e^{\theta_j})e^{\theta_i}}{Z(e^{\theta_i}+e^{\theta_j})} \\
&= \frac{Z(e^{\theta_i})}{Z(e^{\theta_i}+e^{\theta_j})} \\
&= \frac{e^{\theta_i}}{e^{\theta_i}+e^{\theta_j}}.
\end{align*}

Thus the claim holds for $n=N$, completing the induction.
\end{proof}

\begin{corollary} \label{cor:expected_aij}
    The expected probability $\mathbb{E}[a_{ij}]$ that an edge from $i$ to $j$ exists according to the Bernoulli-Plackett-Luce model is:
    \begin{align*}
        \mathbb{E}[a_{ij}] = \left(\frac{e^{\theta_i}}{e^{\theta_i} + e^{\theta_j}}\right) p_{ij}
    \end{align*}
\end{corollary}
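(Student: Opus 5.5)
The corollary is the product of two independent probabilities. By the definition of the Bernoulli-Plackett-Luce distribution, each entry of the adjacency matrix is
\[
a_{ij} = b_{ij}\cdot \mathds{1}\big[\pi^{-1}_i < \pi^{-1}_j\big].
\]
Here $b_{ij}$ is a Bernoulli mask entry and $\pi$ is drawn from the Plackett-Luce model. Since $a_{ij}\in\{0,1\}$, we have $\mathbb{E}[a_{ij}] = \Pr(a_{ij}=1)$. This probability is the probability that both factors equal one, namely $\Pr(b_{ij}=1,\ \pi^{-1}_i<\pi^{-1}_j)$.

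The first step uses the factorization $p(\pi,\mB\mid\theta,\mP)=p(\pi\mid\theta)\,p(\mB\mid\mP)$ from the definition. This factorization makes $\mB$ and $\pi$ independent, so the expectation of the product splits into a product of expectations:
\[
\mathbb{E}[a_{ij}] = \mathbb{E}[b_{ij}]\;\mathbb{E}\big[\mathds{1}[\pi^{-1}_i<\pi^{-1}_j]\big].
\]
The second step evaluates the mask factor. The entries of $\mB$ are independent Bernoulli variables, so marginalizing over all entries other than $b_{ij}$ gives $\mathbb{E}[b_{ij}] = p_{ij}$.

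The third step evaluates the ordering factor. The expectation of the indicator is $\Pr(i\prec j)$. By Lemma~\ref{lem:i_before_j}, this equals $e^{\theta_i}/(e^{\theta_i}+e^{\theta_j})$ for any distinct $i,j$. Multiplying the two factors yields the stated formula.

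There is no real obstacle, because the only nontrivial ingredient is the pairwise precedence probability, which Lemma~\ref{lem:i_before_j} already supplies. One point to check is that the corollary is meant for $i\neq j$, which is where the lemma applies. For $i=j$ the indicator $\mathds{1}[\pi^{-1}_i<\pi^{-1}_i]$ is always zero, so $a_{ii}=0$ and there is no self-loop; this case should be excluded or noted separately.
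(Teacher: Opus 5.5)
Your proof is correct and follows the same route as the paper: you factor $\mathbb{E}[a_{ij}]$ into $p_{ij}$ times $\Pr(i\prec j)$ using the independence of the mask and the permutation, then apply Lemma~\ref{lem:i_before_j}. You also state explicitly two points the paper leaves implicit: the independence follows from the product form $p(\pi,\mB\mid\theta,\mP)=p(\pi\mid\theta)\,p(\mB\mid\mP)$, and the formula applies only for $i\neq j$.
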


\begin{proof}
    The proof follows from Lemma \ref{lem:i_before_j}. That is, the probability that an edge from $i$ to $j$ exists is given by the probability that $i$ precedes $j$ multiplied by the Bernoulli probability $p_{ij}$:
    \begin{align}\label{eq:edge_probabilities}
        \mathbb{E}[a_{ij}] = \Pr(i \prec j) p_{ij} = \left(\frac{e^{\theta_i}}{e^{\theta_i} + e^{\theta_j}}\right) p_{ij}
    \end{align}
\end{proof}

\begin{lemma}\label{lem:i_and_k_before_j}
Let $\pi$ be a random permutation of $\{1,\dots,n\}$ sampled from a Plackett-Luce distribution with logits $\theta\in\mathbb{R}^n$. For any distinct $i,j,k$,
\[
\Pr(i \prec j, k\prec j) \;=\; \Pr(i\prec j)\Pr(k\prec j)\Big(1+\frac{e^{\theta_j}}{e^{\theta_i}+e^{\theta_j}+e^{\theta_k}}\Big),
\]
where $i \prec j$ denotes that $i$ precedes $j$ in permutation $\pi$.
\end{lemma}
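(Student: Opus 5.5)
We compute $\Pr(i\prec j, k\prec j)$ by reducing to the three items $\{i,j,k\}$. The first step is a marginalization property of Plackett--Luce: the relative order of any subset of items is itself Plackett--Luce with the restricted logits. This is the same property used in the induction step of Lemma~\ref{lem:i_before_j}. We prove it by the same induction on $n$, decomposing on $\pi_1$. If $\pi_1$ lies outside the subset, the remaining order is Plackett--Luce on $n-1$ items, and the induction hypothesis applies. If $\pi_1$ lies in the subset, we sum over the first item of the subset appearing and check that the resulting probability matches the restricted model. Alternatively, we can use the exponential-race representation: let $T_u \sim \mathrm{Exp}(e^{\theta_u})$ independently and order the items by increasing $T_u$. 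This makes the marginalization immediate, since restricting to a subset just discards clocks. We will use this representation, citing that it reproduces the sequential categorical draws in the definition of the Plackett--Luce distribution.

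With $\alpha=e^{\theta_i}$, $\beta=e^{\theta_j}$, $\gamma=e^{\theta_k}$ and $S=\alpha+\beta+\gamma$, the event $\{i\prec j,\,k\prec j\}$ means $j$ comes last among the three. We compute $1-\Pr(j \text{ first}) - \Pr(j \text{ second})$ with $\Pr(j \text{ first})=\beta/S$ and
\[
\Pr(j \text{ second}) = \frac{\alpha}{S}\cdot\frac{\beta}{\beta+\gamma}+\frac{\gamma}{S}\cdot\frac{\beta}{\alpha+\beta}.
\]
Equivalently, we enumerate the two orderings $i,k,j$ and $k,i,j$ directly:
\[
\Pr(j \text{ last}) = \frac{\alpha}{S}\cdot\frac{\gamma}{\beta+\gamma}+\frac{\gamma}{S}\cdot\frac{\alpha}{\alpha+\beta}
=\frac{\alpha\gamma}{S}\cdot\frac{\alpha+2\beta+\gamma}{(\alpha+\beta)(\beta+\gamma)}.
\]

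Finally, we compare with the claimed right-hand side. By Lemma~\ref{lem:i_before_j}, $\Pr(i\prec j)\Pr(k\prec j)=\frac{\alpha\gamma}{(\alpha+\beta)(\gamma+\beta)}$. The factor $1+\beta/S$ equals $(S+\beta)/S=(\alpha+2\beta+\gamma)/S$, so the product coincides exactly with the expression above. The algebra is routine. The only point needing care is the marginalization step, which is the main obstacle: the statement holds for general $n$, not just $n=3$, so we must justify that the other $n-3$ items do not affect the joint order of $i,j,k$. We handle this either through the exponential-clock argument or by repeating the induction of Lemma~\ref{lem:i_before_j} with the pair replaced by the triple, conditioning on $\pi_1\notin\{i,j,k\}$.
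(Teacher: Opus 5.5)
Your proof is correct. Its computational core (enumerating the orderings $(i,k,j)$ and $(k,i,j)$ and factoring out $\frac{\alpha\gamma}{(\alpha+\beta)(\beta+\gamma)}$) is exactly the paper's base case $n=3$. Where you differ is in how you pass from $n$ items to three.

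The paper never isolates a marginalization property. It runs an induction on $n$ tailored to the event $\{i\prec j,\,k\prec j\}$, splitting on $\pi_1=i$, $\pi_1=k$ and $\pi_1\notin\{i,j,k\}$ (the case $\pi_1=j$ contributes nothing). In the first two cases it applies Lemma~\ref{lem:i_before_j} to the remaining $n-1$ items, and in the third it uses the induction hypothesis. It then recombines the weights $(C+e^{\theta_j})/Z$ and $(Z-C)(C+e^{\theta_j})/(ZC)$ algebraically.

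You instead prove once that the relative order of any fixed subset is Plackett--Luce with the restricted logits. After that, both Lemma~\ref{lem:i_before_j} and this lemma become finite computations that do not depend on $n$. Your route is more modular. It extends without new inductions to any statement about the joint order of a fixed set of items, for instance $\Pr(i\prec j,\,k\prec j,\,l\prec j)$, which a third-order version of the expansion in Theorem~\ref{th:general_models} would require. The paper's route is self-contained from the sequential definition, at the cost of an event-specific induction for each new identity.

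Two details deserve a line each.

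\emph{Exponential race.} State why it reproduces the sequential draws. The minimum of independent clocks $T_u\sim\mathrm{Exp}(e^{\theta_u})$ is attained at $u$ with probability $e^{\theta_u}/\sum_v e^{\theta_v}$. Given that, by memorylessness the residual clocks are again independent with the same rates.

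\emph{Inductive alternative.} The case $\pi_1=a\in A$ needs no extra summation over ``the first item of the subset''. The restricted order starts with $a$ and, by the induction hypothesis on the remaining $n-1$ items, continues as Plackett--Luce on $A\setminus\{a\}$. Write $\Lambda_A=\sum_{u\in A}e^{\theta_u}$ and $P_A$ for the Plackett--Luce law on $A$. This branch then contributes $\frac{e^{\theta_a}}{Z}P_{A\setminus\{a\}}(s')=\frac{\Lambda_A}{Z}P_A(a,s')$. The branch $\pi_1\notin A$ contributes $\frac{Z-\Lambda_A}{Z}P_A(a,s')$, and the two sum to $P_A(a,s')$.
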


\begin{proof} We proceed by induction on $n$.

\textbf{Base case ($n=3$).}  With $n=3$ elements, the probability that $i$ precedes $j$ and $k$ precedes $j$ is:
\begin{align*}
\Pr(i \prec j, k\prec j) &= \Pr(\pi\in \{(i,k,j),(k,i,j)\})\\
&= \frac{e^{\theta_i}}{e^{\theta_i}+e^{\theta_j}+e^{\theta_k}} \Big(\frac{e^{\theta_k}}{e^{\theta_j}+e^{\theta_k}} \Big) + \frac{e^{\theta_k}}{e^{\theta_i}+e^{\theta_j}+e^{\theta_k}} \Big( \frac{e^{\theta_i}}{e^{\theta_i}+e^{\theta_j}} \Big)\\
&=\Big(\frac{e^{\theta_i}}{e^{\theta_i}+e^{\theta_j}}\Big) \Big(\frac{e^{\theta_k}}{e^{\theta_j}+e^{\theta_k}}\Big)\Big(\frac{2e^{\theta_j}+e^{\theta_k}+e^{\theta_i}}{e^{\theta_i}+e^{\theta_j}+e^{\theta_k}}\Big)\\
&=\Pr(i\prec j)\Pr(k\prec j)\Big(1+\frac{e^{\theta_j}}{e^{\theta_i}+e^{\theta_j}+e^{\theta_k}}\Big)
\end{align*}

\textbf{Induction step.}  
Suppose the statement holds for $n=N-1$ items. Consider $n=N$ items. We decompose the joint probability $\Pr(i \prec j,\, k\prec j)$ by the first position of the permutation:
\begin{align}\label{eq:Pr_i_precedes_j_k_precedes_j}
\Pr(i \prec j,\, k\prec j) \;=\; \Pr(\pi_1 \in \{i, k\}, i \prec j, k\prec j) + \Pr(\pi_1\notin \{i,k\},\, i \prec j,\, k\prec j).
\end{align}

Let $C=e^{\theta_i}+e^{\theta_j}+e^{\theta_k}$ and $Z = \sum_{k=1}^n e^{\theta_k}$. The first term of Equation \ref{eq:Pr_i_precedes_j_k_precedes_j} can be written as:
\begin{align*}
    \Pr(\pi_1 \in \{i, k\}, i \prec j, k\prec j) &= \Pr(\pi_1=i,\, i \prec j, k\prec j) + \Pr(\pi_1=k,\, i\prec j, k\prec j) \\
    &= \Pr(\pi_1 = i)\Pr(k\prec j\,|\, \pi_1=i) + \Pr(\pi_1 = k)\Pr(i\prec j\,|\, \pi_1=k) \\
    &= \frac{e^{\theta_i}}{Z}\Big(\frac{e^{\theta_k}}{e^{\theta_k} + e^{\theta_j}} \Big) + \frac{e^{\theta_k}}{Z}\Big(\frac{e^{\theta_i}}{e^{\theta_i} + e^{\theta_j}} \Big)\\
    &= \frac{e^{\theta_i}e^{\theta_k}(e^{\theta_j} + e^{\theta_k}) + e^{\theta_i}e^{\theta_k}(e^{\theta_i} + e^{\theta_j})}{Z(e^{\theta_i} + e^{\theta_j})(e^{\theta_k} + e^{\theta_j})} \\
    &= \frac{e^{\theta_i}e^{\theta_k}(C + e^{\theta_j})}{Z(e^{\theta_i} + e^{\theta_j})(e^{\theta_k} + e^{\theta_j})} \\
    &= \Pr(i\prec j)\Pr(k\prec j) \Big( \frac{C + e^{\theta_j}}{Z} \Big),
\end{align*}


where the third and last equalities follow from using Lemma \ref{lem:i_before_j}.

The last term of Equation \ref{eq:Pr_i_precedes_j_k_precedes_j}, by the induction hypothesis, is:
\begin{align*}
\Pr(\pi_1\notin \{i,k\},\, i \prec j,\, k\prec j) &= \Pr(\pi_1\notin \{i, j, k\})\Pr (i \prec j,\, k\prec j\, |\, \pi_1\notin\{i, j, k\}) \\
&=\frac{Z - C}{Z}\Pr(i\prec j)\Pr(k\prec j)\left(\frac{C+e^{\theta_j}}{C}\right).
\end{align*}

Combining the two terms, Equation \ref{eq:Pr_i_precedes_j_k_precedes_j} can be written as:
\begin{align*}
    \Pr(i \prec j,\, k\prec j) &=\frac{1}{Z} \Pr(i \prec j)\Pr(k \prec j) \left( C + e^{\theta_j} + \frac{(Z - C)(C + e^{\theta_j})}{C}\right) \\
    &= \Pr(i\prec j)\Pr(k\prec j)\left( \frac{C+e^{\theta_j}}{C}\right)
\end{align*}

Thus the claim holds for $n=N$, completing the induction.
\end{proof}

\subsection{Isotropic Normal models}
\begin{theorem} \label{TheoremNormal}
Assume that each conditional distribution is Gaussian with mean $\mu_j$ and scale $\sigma_j$:
\begin{align*}
    p_{\Omega}^j(X_j \mid M'_{j}, X_{-j}) = \gN(X_j \mid \mu_j, \sigma^2_j), \qquad
    \mu_j = b_j + \sum^n_{i=1} a_{ij}w_{ij}x_i,
\end{align*}
where $a_{ij} \in \{0, 1\}$ indicates whether the edge $i \rightarrow j$ is present in the DAG, $n$ is the total number of variables, and $w_{ij}$, $b_j$, and $\sigma_j$ are learnable parameters. Assume that the edges $a_{ij}$ are distributed according to our Bernoulli-Plackett-Luce model.

Then, the expected log-likelihood score $S(\Theta, \Omega)$ can be expressed as:
\begin{align*}
    S(\Theta, \Omega) &= -\frac{1}{2} \sum_{r=1}^R \mathbb{E}_{\vx \sim P^{(r)}_{\text{data}}} \sum_{j \notin \mathcal{I}_r} s(\vx, j),
\end{align*}
with sample- and node-specific scores $s(\vx, j)$ defined as:
\begin{align*}
    s(\vx, j) &= \log(2\pi\sigma^2_j) + \frac{1}{\sigma^2_j} \Biggl(\Big(x_j - b_j - \sum_{i=1}^n \mathbb{E}[a_{ij}] w_{ij} x_i\Big)^2 \\
    &+\sum_{i=1}^n \mathbb{E}[a_{ij}](1 - \mathbb{E}[a_{ij}])w^2_{ij}x^2_i \\
    &+\sum_{i=1}^n \sum_{k=1, k \ne i}^n (\mathbb{E}[a_{ij}]w_{ij}x_i)(\mathbb{E}[a_{kj}]w_{kj}x_k)\biggl(\frac{e^{\theta_j}}{e^{\theta_i} + e^{\theta_j} + e^{\theta_k}}\biggr)\Biggr),
\end{align*}
and expectation $\mathbb{E}[a_{ij}] = p_{ij} \left(\frac{e^{\theta_i}}{e^{\theta_i} + e^{\theta_j}}\right)$ given by the Bernoulli-Plackett-Luce model, where $p_{ij}$ are learnable Bernoulli probabilities and $\theta_i$ and $\theta_j$ are the Plackett-Luce logits for variables $i$ and $j$, respectively.
\end{theorem}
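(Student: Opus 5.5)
The plan is to use linearity of expectation and reduce everything to the first two moments of the edge indicators $a_{ij}$, which Lemma~\ref{lem:i_before_j}, Corollary~\ref{cor:expected_aij} and Lemma~\ref{lem:i_and_k_before_j} already supply.

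With $\lambda=0$ we have $S(\Theta,\Omega)=\mathbb{E}_{M'}[f(M',\Omega)]$. The regimes $r$, the target sets $\mathcal{I}_r$ and the data distributions $P^{(r)}_{\text{data}}$ do not depend on the sampled DAG. The DAG expectation is over a finite set, so I would swap it with the data expectation and the sums over $r$ and $j$. This reduces the problem to computing, for a fixed $\vx$ and fixed $j$, the quantity $\mathbb{E}_{M'}[-2\log\gN(x_j\mid\mu_j,\sigma_j^2)]$. Writing out the Gaussian log-density gives
\[
\log(2\pi\sigma_j^2)+\sigma_j^{-2}\,\mathbb{E}\big[(x_j-b_j-Y_j)^2\big],\qquad Y_j=\sum_i a_{ij}w_{ij}x_i .
\]
The term $\log(2\pi\sigma_j^2)$ is deterministic. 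For the second term I would use the bias--variance identity
\[
\mathbb{E}[(c-Y_j)^2]=(c-\mathbb{E}Y_j)^2+\operatorname{Var}(Y_j),\qquad c=x_j-b_j,
\]
where $\mathbb{E}Y_j=\sum_i\mathbb{E}[a_{ij}]w_{ij}x_i$ and $\mathbb{E}[a_{ij}]$ is given by Corollary~\ref{cor:expected_aij}. This yields the first squared term of $s(\vx,j)$.

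Next I would expand the variance as
\[
\operatorname{Var}(Y_j)=\sum_i w_{ij}^2x_i^2\operatorname{Var}(a_{ij})+\sum_{i\neq k}w_{ij}w_{kj}x_ix_k\operatorname{Cov}(a_{ij},a_{kj}).
\]
Since $a_{ij}\in\{0,1\}$, we have $\operatorname{Var}(a_{ij})=\mathbb{E}[a_{ij}](1-\mathbb{E}[a_{ij}])$, which gives the second line of $s(\vx,j)$. For the covariance I would use $a_{ij}=b_{ij}\mathds{1}[i\prec j]$, where the Bernoulli masks are mutually independent and independent of $\pi$. For $i\neq k$ this gives
\[
\mathbb{E}[a_{ij}a_{kj}]=p_{ij}p_{kj}\Pr(i\prec j,\,k\prec j).
\]
Lemma~\ref{lem:i_and_k_before_j} turns this into $\mathbb{E}[a_{ij}]\mathbb{E}[a_{kj}]\bigl(1+\tfrac{e^{\theta_j}}{e^{\theta_i}+e^{\theta_j}+e^{\theta_k}}\bigr)$. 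Subtracting $\mathbb{E}[a_{ij}]\mathbb{E}[a_{kj}]$ leaves exactly the factor $\tfrac{e^{\theta_j}}{e^{\theta_i}+e^{\theta_j}+e^{\theta_k}}$, which is the third line of $s(\vx,j)$. Multiplying by $-\tfrac12$ and summing over $r$, $j\notin\mathcal{I}_r$ and the data expectation then gives the claimed expression for $S$.

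The main obstacle, the pairwise dependence of the indicators $a_{ij}$ and $a_{kj}$ through the shared ordering of $j$, is already resolved by Lemma~\ref{lem:i_and_k_before_j}. What remains is bookkeeping. First, the Bernoulli masks must not correlate $a_{ij}$ with $a_{kj}$; this holds because they are distinct independent entries of $\mB$. Second, the indices $i=j$ and $k=j$ need care. Here $\mathds{1}[j\prec j]=0$ gives $a_{jj}=0$, whereas the displayed formula for $\mathbb{E}[a_{jj}]$ would return $p_{jj}/2$. I would therefore state that the sums run over $i,k\neq j$, or equivalently that $p_{jj}=0$ (no self-loops), so that the lemmas, which require distinct indices, apply to every term.
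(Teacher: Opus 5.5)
Your proposal is correct and follows essentially the same route as the paper: swap the DAG and data expectations, apply $\mathbb{E}[X^2]=\mathbb{E}[X]^2+\operatorname{Var}[X]$, use $\operatorname{Var}(a_{ij})=\mathbb{E}[a_{ij}](1-\mathbb{E}[a_{ij}])$, and obtain the covariance from Lemma~\ref{lem:i_and_k_before_j} together with the independence of the Bernoulli mask. Your remark about the diagonal terms is a valid refinement: the paper's proof restricts the intermediate sums to $i\neq j$, but the final displayed formula does not, so the statement holds literally only under your convention that the sums exclude $j$ (equivalently $p_{jj}=0$).
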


\begin{proof}

Given the score function
\begin{equation*}
    S(\Theta, \Omega) = \mathbb{E}_{M' \sim M(\theta, \mP)} \Big [ \sum_{r=1}^R \mathbb{E}_{\vx \sim P^{(r)}_{\text{data}}} \sum_{j \notin \mathcal{I}_r} \log p_{\Omega}^j(x_j | M'_{j}, \vx_{-j}) \Big ],
\end{equation*}
we can swap the two expectations:
\begin{equation*}
    S(\Theta, \Omega) = \sum_{r=1}^R \mathbb{E}_{\vx \sim P^{(r)}_{\text{data}}} \left[ l(\vx) \right], \qquad l(\vx) = \mathbb{E}_{M' \sim M(\theta, \mP)} \left[ \sum_{j \notin \mathcal{I}_r} \log p_{\Omega}^j(x_j | M'_{j}, \vx_{-j}) \right].
\end{equation*}
Here, we defined $l(\vx)$ as the expectation, with respect to the distribution over DAGs, of the interventional log-likelihood evaluated at a single observation $\vx$. Given that each conditional distribution is a Gaussian, we can rewrite $l(\vx)$ as:
\begin{equation}\label{eq:expanded_ill}
    l(\vx)
    = \mathbb{E}_{M' \sim M(\theta, \mP)} \left[ -\tfrac{1}{2} \left(
        \sum_{j \notin \mathcal{I}_r}
                \log(2\pi)
                + \log(\sigma_j^2)
                + \frac{1}{\sigma_j^2} \big(x_j - b_j - \sum_i a_{ij} w_{ij} x_i \big)^2
        \right) \right].
\end{equation}

We now push the expectation over DAGs inwards:
\begin{equation}\label{eq:expected-loglik}
    l(\vx) = -\tfrac{1}{2} \sum_{j \notin \mathcal{I}_r}
        \Bigg(
            \log(2\pi)
            + \log(\sigma_j^2)
            + \tfrac{1}{\sigma_j^2}\mathbb{E}_{M' \sim M(\theta, \mP)} \Big[\big(x_j - b_j - \sum_i a_{ij} w_{ij} x_i \big)^2\Big]
        \Bigg).
\end{equation}

Using the identity $\mathbb{E}[X^2] = \mathbb{E}[X]^2 + \text{Var}[X]$, we now rewrite the last term of Equation \ref{eq:expected-loglik}:

\begin{equation}\label{eq:expectation_Qsquared}
    \begin{split}
        \mathbb{E}_{M' \sim M(\theta, \mP)} \Big[\big(x_j - b_j - \sum_i a_{ij} w_{ij} x_i \big)^2\Big] &= \Big(
            \mathbb{E}_{M'\sim M(\theta,\mP)}
                \big[ x_j - b_j - \sum_i a_{ij} w_{ij} x_i\big]
            \Big)^{2} \\
            &+ \operatorname{Var}_{M'\sim M(\theta,\mP)}
            \big[x_j - b_j - \sum_i a_{ij} w_{ij} x_i \big]. 
    \end{split}
\end{equation}

The expectation of Equation \ref{eq:expectation_Qsquared} can be expressed as:
\begin{equation}\label{eq:expectation_over_dags_first_term}
    \begin{split}
        \mathbb{E}_{M'\sim M(\theta,\mP)} \big[ x_j - b_j - \sum_i a_{ij} w_{ij} x_i\big] &= x_j - b_j - \left(\mathbb{E}_{M'\sim M(\theta,\mathcal{P})}\left[\sum_{i=1, i\neq j}^n a_{ij}w_{ij}x_i\right]\right) \\
        &= x_j - b_j - \left(\sum_{i=1,i\neq j}^n w_{ij}x_i\mathbb{E}_{M'\sim M(\theta,\mathcal{P})}\left[a_{ij}\right]\right)\\
        &= x_j - b_j - \left(\sum_{i=1,i\neq j}^n w_{ij}x_i \left( \frac{e^{\theta_i}}{e^{\theta_i} + e^{\theta_j}} \right) p_{ij}\right),
    \end{split}
\end{equation}
where the last equality follows from the fact that $\mathbb{E}[a_{ij}] = \left(\frac{e^{\theta_i}}{e^{\theta_i} + e^{\theta_j}}\right) p_{ij}$ for the Bernoulli-Plackett-Luce model (Corollary \ref{cor:expected_aij}).

The variance in Equation \ref{eq:expectation_Qsquared} can be expanded as follows:

\begin{equation}\label{eq:var_decomposition}
    \begin{split}
        \operatorname{Var}
                \big[x_j - b_j - \sum_i a_{ij} w_{ij} x_i \big]
        &= \operatorname{Var}\Bigg[
           \sum_{i=1, i\neq j}^n a_{ij} w_{ij} x_i \Bigg] \\
        &= \sum_{i=1, i\neq j}^n (w_{ij}x_i)^2
           \operatorname{Var}[a_{ij}] + \sum_{i,k=1, i\neq k \neq j}^n (w_{ij}x_i)(w_{kj}x_k)
           \operatorname{Cov}(a_{ij}, a_{kj}) \\
        &= \sum_{i=1, i\neq j}^n (w_{ij}x_i)^2 
            p_{ij} \Big(\frac{e^{\theta_i}}{e^{\theta_i} + e^{\theta_j}}\Big) 
           \left(1 - p_{ij}\Big(\frac{e^{\theta_i}}{e^{\theta_i} + e^{\theta_j}}\Big)\right) \\
        &+ \sum_{i,k=1, i\neq k \neq j}^n (w_{ij}x_i)(w_{kj}x_k) (p_{ij}p_{kj})\left(\frac{e^{\theta_i}}{e^{\theta_i} + e^{\theta_j}} \right) \left(\frac{e^{\theta_k}}{e^{\theta_k} + e^{\theta_j}}\right)\left(\frac{e^{\theta_j}}{e^{\theta_i}+e^{\theta_j}+e^{\theta_k}}\right) \\
        &= \sum_{i=1, i\neq j}^n (w_{ij}x_i)^2 
           \mathbb{E}[a_{ij}] \left(1 - \mathbb{E}[a_{ij}]\right) \\
        &+ \sum_{i=1}^n \sum_{k=1, k \ne i}^n (\mathbb{E}[a_{ij}]w_{ij}x_i)(\mathbb{E}[a_{kj}]w_{kj}x_k)\biggl(\frac{e^{\theta_j}}{e^{\theta_i} + e^{\theta_j} + e^{\theta_k}}\biggr)\Biggr).
    \end{split}
\end{equation}

We decomposed the covariance in Equation \ref{eq:var_decomposition} as follows:
\begin{align*}
\operatorname{Cov}(a_{ij}, a_{kj}) &= \mathbb{E}[a_{ij} a_{kj}] - \mathbb{E}[a_{ij}]\mathbb{E}[a_{kj}]\\ &= \Pr(i\prec j, k\prec j)p_{ij} p_{kj} -  \Pr(i\prec j)\Pr(k\prec j)p_{ij} p_{kj} \\
&=p_{ij} p_{kj}\Big(\Pr(i\prec j, k\prec j) - \Pr(i\prec j)\Pr(k\prec j)\Big)\\
&= p_{ij}p_{kj}\left(\frac{e^{\theta_i}}{e^{\theta_i} + e^{\theta_j}}\right) 
\left( \frac{e^{\theta_k}}{e ^{\theta_k}  + e^{\theta_j}}\right)\left(\frac{e^{\theta_j}}{e^{\theta_i}+e^{\theta_k}+e^{\theta_j}}\right),
\end{align*}
where the last equality follows from applying Lemmas \ref{lem:i_before_j} and \ref{lem:i_and_k_before_j}.

Plugging Equations \ref{eq:expectation_over_dags_first_term} and \ref{eq:var_decomposition} into Equation \ref{eq:expectation_Qsquared}, we obtain the following expected interventional log-likelihood (Equation \ref{eq:expanded_ill}):
\begin{equation*}
    \begin{split}
        l(\vx)
        = -\tfrac{1}{2} \sum_{j \notin \mathcal{I}_r}
                    \log(2\pi\sigma_j^2)  + \frac{1}{\sigma^2_j}& \Biggl(\Big(x_j - b_j - \sum_{i=1}^n \mathbb{E}[a_{ij}] w_{ij} x_i\Big)^2 \\
                    &+\sum_{i=1}^n \mathbb{E}[a_{ij}](1 - \mathbb{E}[a_{ij}])w^2_{ij}x^2_i \\
    &+\sum_{i=1}^n \sum_{k=1, k \ne i}^n (\mathbb{E}[a_{ij}]w_{ij}x_i)(\mathbb{E}[a_{kj}]w_{kj}x_k)\biggl(\frac{e^{\theta_j}}{e^{\theta_i} + e^{\theta_j} + e^{\theta_k}}\biggr)\Biggr),
    \end{split}
\end{equation*}

completing the proof.
\end{proof}

\subsection{Multivariate Normal models}

In this section, let us show how to extend our result 
to more general \textit{multivariate} Normal models.
Namely, we assume that the variables are now vectors in $d$-dimensional space:
$\vx_1, \ldots, \vx_n \in \R^d$.

\begin{theorem}\label{th:multivariate_normal}
Assume that each conditional distribution is multivariate Gaussian
with mean $\vmu_j \in \R^d$ and covariance matrix $\mSigma_j \in \R^{d \times d}$
that is symmetric and positive definite: $\mSigma_j = \mSigma_j^{\top} \succ 0$:
\beq \label{MultivariateGauss}
\ba{rcl}
p_{\Omega}^{j} ( \vx_j \, | \, M_j', \vx_{-j} )
& = &
\mathcal{N}( \vx_j \, | \, \vmu_j, \mSigma_j ),
\qquad
\vmu_j \;\; = \;\; \vb_j + \sum\limits_{i = 1}^n a_{ij} w_{ij} \vx_i,
\ea
\eeq
where $a_{ij} \in \{0, 1\}$ indicated the edge $i \to j$ in the DAG,
and $w_{ij}$, $\vb_j$, and $\mSigma_j$ are learnable parameters.
Then, the expected log-likelihood score $S(\Theta, \Omega)$ can be expressed as:
$$
\ba{rcl}
S(\Theta, \Omega) & = & 
-\frac{1}{2} \sum\limits_{r = 1}^R 
\E_{\vx \sim P_{\text{data}}^{(r)}}
\sum\limits_{j \not\in \mathcal{I}_r} s(\vx, j)
\ea
$$
with sample- and node-specific scores $s(\vx, j)$ defined as:
$$
\ba{rcl}
s(\vx, j) & = & 
d\log(2\pi) + \log \det \mSigma_j
+ \frac{1}{2} 
\Bigl\langle 
\mSigma_j^{-1}\Bigl( \vx_j - \vb_j - \sum\limits_{i = 1}^n \E[a_{ij}] w_{ij} \vx_i  \Bigl),
\vx_j - \vb_j - \sum\limits_{i = 1}^n \E[a_{ij}] w_{ij} \vx_i 
\Bigr\rangle \\
\\
& & + \;
\sum\limits_{i = 1}^n
\E[ a_{ij} ] (1 - \E[ a_{ij} ]) w_{ij}^2 \langle \mSigma_j^{-1} \vx_i, \vx_i \rangle \\
\\
& & + \;
\sum\limits_{i = 1}^n \sum\limits_{k = 1, k \not= i}^n
( \E[a_{ij}] w_{ij} ) ( \E[a_{kj}] w_{kj} ) 
\Bigl( \frac{e^{\theta_j}}{ e^{\theta_i} + e^{\theta_j} + e^{\theta_k} } \Bigr)
\langle \mSigma_j^{-1} \vx_i, \vx_k \rangle,
\ea
$$
where, as in the previous case,
the expectation $\E[a_{ij}] = p_{ij} \Bigl( \frac{e^{\theta_i}}{e^{\theta_i} + e^{\theta_j}} \Bigr)$
is given by the Bernoulli-Plackett-Luce model, where $p_{ij}$, $\theta_i$, and $\theta_j$
are learnable parameters.
\end{theorem}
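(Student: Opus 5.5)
The plan is to repeat the argument of Theorem~\ref{TheoremNormal} with the scalar square replaced by a quadratic form, using the trace identity for the expectation of a quadratic form in a random vector. First I interchange the expectation over DAGs $M' \sim M(\theta,\mP)$ with the data expectation $\E_{\vx \sim P^{(r)}_{\text{data}}}$ and the finite sum over $j \notin \mathcal{I}_r$. This reduces the claim to computing, for a fixed observation $\vx$ and fixed $j$, the quantity $\E_{M'}[\log \mathcal{N}(\vx_j \mid \vmu_j, \mSigma_j)]$. Expanding the Gaussian log-density gives
$$-\tfrac12\bigl(d\log(2\pi) + \log\det\mSigma_j + \langle \mSigma_j^{-1}\va, \va\rangle\bigr), \qquad \va := \vx_j - \vb_j - \textstyle\sum_i a_{ij} w_{ij}\vx_i .$$
The first two terms do not depend on the DAG, so only $\E_{M'}\langle \mSigma_j^{-1}\va,\va\rangle$ remains to be computed.

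For this term I use the identity $\E[\va^\top \mA \va] = \E[\va]^\top \mA\, \E[\va] + \operatorname{tr}\bigl(\mA \operatorname{Cov}(\va)\bigr)$ with the deterministic matrix $\mA = \mSigma_j^{-1}$. This is the vector analogue of $\E[X^2]=\E[X]^2+\operatorname{Var}[X]$ used in the scalar proof.

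\emph{Mean term.} By linearity and Corollary~\ref{cor:expected_aij}, $\E[\va] = \vx_j - \vb_j - \sum_i \E[a_{ij}] w_{ij}\vx_i$ with $\E[a_{ij}] = p_{ij} e^{\theta_i}/(e^{\theta_i}+e^{\theta_j})$. Substituting gives the first quadratic-form term of $s(\vx,j)$.

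\emph{Covariance term.} Since $\va$ depends on the DAG only through the scalars $a_{ij}$, multiplied by fixed vectors $w_{ij}\vx_i$, its covariance is
$$\operatorname{Cov}(\va) = \sum_i w_{ij}^2 \operatorname{Var}(a_{ij})\, \vx_i\vx_i^\top + \sum_{i\neq k} w_{ij}w_{kj}\operatorname{Cov}(a_{ij},a_{kj})\, \vx_i\vx_k^\top .$$
Taking the trace against $\mSigma_j^{-1}$ turns $\operatorname{tr}(\mSigma_j^{-1}\vx_i\vx_k^\top)$ into $\langle \mSigma_j^{-1}\vx_i, \vx_k\rangle$. This yields exactly the structure of the second and third lines of $s(\vx,j)$, so the remaining work is to evaluate the scalar moments.
\begin{itemize}
\item[(i)] $\operatorname{Var}(a_{ij}) = \E[a_{ij}](1-\E[a_{ij}])$, since $a_{ij}$ is Bernoulli.
\item[(ii)] $\operatorname{Cov}(a_{ij},a_{kj})$ is obtained exactly as in the scalar proof. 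By independence of the mask from the permutation and of the distinct mask entries $b_{ij}, b_{kj}$, we have $\E[a_{ij}a_{kj}] = p_{ij}p_{kj}\Pr(i\prec j, k\prec j)$. Lemma~\ref{lem:i_and_k_before_j} together with Lemma~\ref{lem:i_before_j} then gives $\operatorname{Cov}(a_{ij},a_{kj}) = \E[a_{ij}]\E[a_{kj}]\, e^{\theta_j}/(e^{\theta_i}+e^{\theta_j}+e^{\theta_k})$.
\end{itemize}
These scalar moments are identical to those in Theorem~\ref{TheoremNormal}; only the geometric weights $x_i^2$ and $x_ix_k$ are replaced by $\langle \mSigma_j^{-1}\vx_i,\vx_i\rangle$ and $\langle\mSigma_j^{-1}\vx_i,\vx_k\rangle$.

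I expect the main obstacle to be bookkeeping rather than any new probabilistic idea, with three points to watch.
\begin{itemize}
\item[(a)] The diagonal index $i=j$ (and $k=j$) must be excluded, or one must adopt the convention $a_{jj}=0$, e.g.\ $p_{jj}=0$. Otherwise the formula $\E[a_{jj}]=p_{jj}/2$ would introduce spurious self-loop terms; the scalar proof handles this implicitly by summing over $i\neq j$.
\item[(b)] The covariance step needs $i,k,j$ pairwise distinct, which is exactly the hypothesis of Lemma~\ref{lem:i_and_k_before_j}.
\item[(c)] The overall constants must be tracked. Pulling $-\tfrac12$ outside as in the statement, the quadratic-form contribution to $s(\vx,j)$ should carry coefficient $1$. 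I will therefore check the factor $\tfrac12$ on the first quadratic term in the displayed $s(\vx,j)$ against the direct expansion; I expect it to be a typo.
\end{itemize}
Finally, summing over $j\notin\mathcal{I}_r$, taking $\E_{\vx\sim P^{(r)}_{\text{data}}}$, and summing over $r$ recovers the stated expression for $S(\Theta,\Omega)$.
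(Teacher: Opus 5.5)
Your proposal is correct and follows the paper's proof essentially step for step. The paper adds and subtracts $\bar{\vmu}_j = \E[\vmu_j]$ inside the quadratic form and kills the cross term, which amounts to proving your trace identity. It then expands $\E\langle \mSigma_j^{-1}(\vmu_j-\bar{\vmu}_j), \vmu_j-\bar{\vmu}_j\rangle$ by bilinearity into the $\operatorname{Var}(a_{ij})$ and $\operatorname{Cov}(a_{ij},a_{kj})$ terms taken from Theorem~\ref{TheoremNormal}. Your suspicion about the constant is also right: the paper's own derivation gives coefficient $1$, not $\tfrac12$, on the first quadratic term, so the $\tfrac12$ in the displayed $s(\vx,j)$ is a typo.
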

\begin{proof}
The proof repeats the reasoning of Theorem~\ref{TheoremNormal}.
Let us fix $1 \leq r \leq R$ and consider the expectation
of the log-likelihood evaluated at a single observation $\vx$:
\begin{align*}    
l(\vx) & =  
\E_{M' \sim M(\theta, \mP)}
\biggl[ \;
\sum\limits_{j \not\in \mathcal{I}_r}
\log p_{\theta}^{j}( \vx_j \, | \, M_j', \vx_{-j} ) 
\;
\biggr] \\
\\
& \overset{(\ref{MultivariateGauss})}{=} 
-\frac{1}{2} \sum\limits_{j \not\in \mathcal{I}_r}
\Bigl( 
d \log(2 \pi) + \log \det \mSigma_j
+ \E_{M' \sim M(\theta, \mP)} 
\Bigl[ \langle \mSigma_j^{-1}(\vx_j - \vmu_j), \vx_j - \vmu_j \rangle \Bigr]
\Bigr),
\end{align*}

and it remains to compute the following expectation, 
denoting $\bar{\vmu}_j := \E_{M' \sim M(\theta, \mP)}[ \vmu_j ]
= \vb_j + \sum\limits_{i = 1}^n \E[a_{ij}] w_{ij} \vx_{i}$,
\beq \label{ExpectationExpand}
\ba{cl}
& \E_{M' \sim M(\theta, \mP)} 
\Bigl[ \;
\langle \mSigma_j^{-1}(\vx_j - \vmu_j), \vx_j - \vmu_j \rangle 
\; \Bigr] \\
\\
& = \;\;
\E_{M' \sim M(\theta, \mP)} 
\Bigl[ \;
\langle \mSigma_j^{-1}(\vx_j - \bar{\vmu}_j + \bar{\vmu}_j - \vmu_j), 
\vx_j - \bar{\vmu}_j + \bar{\vmu}_j - \vmu_j \rangle \; \Bigr]\\
\\
& = \;\;
\langle \mSigma_j^{-1}(\vx_j - \bar{\vmu}_j), 
\vx_j - \bar{\vmu}_j \rangle
\; + \; 
\E_{M' \sim M(\theta, \mP)} \Bigl[ \; \langle \mSigma_j^{-1}(\vmu_j - \bar{\vmu}_j), \,
\vmu_j - \bar{\vmu}_j \rangle \; \Bigr],
\ea
\eeq
where we used in the last equation that the expectation of the cross term is zero:
$$
\ba{rcl}
\E_{M' \sim M(\theta, \mP)} \Bigl[ \;
\langle \mSigma_j^{-1}( \vx_j - \bar{\vmu}_j ),  
\vmu_j - \bar{\vmu}_j
\; \Bigr] & = & 
\langle \mSigma_j^{-1}( \vx_j - \bar{\vmu}_j ), \;
\E_{M' \sim M(\theta, \mP)} \bigl[ \, 
\vmu_j - \bar{\vmu}_j
\,\bigr] \rangle
\;\; = \;\; 0.
\ea
$$
It remains to compute the last term in~\eqref{ExpectationExpand}, 
which is the variance:
$$
\ba{cl}
& \E_{M' \sim M(\theta, \mP)} \Bigl[ \; \langle \mSigma_j^{-1}(\vmu_j - \bar{\vmu}_j), \,
\vmu_j - \bar{\vmu}_j \rangle \; \Bigr] \\
\\
& = \;\;
\E_{M' \sim M(\theta, \mP)} \biggl[ \;
\langle \mSigma_j^{-1} \sum\limits_{i = 1}^n (a_{ij} - \E[a_{ij}]) w_{ij} \vx_i, \,
\sum\limits_{i = 1}^n (a_{ij} - \E[a_{ij}]) w_{ij} \vx_i \rangle
\; \biggr] \\
\\
& = \;\;
\sum\limits_{i = 1}^n 
\operatorname{Var}(a_{ij}) w_{ij}^2 \langle \mSigma_j^{-1} \vx_i, \vx_i \rangle
+ \sum\limits_{i = 1}^n \sum\limits_{k = 1, k \not= i}^n
\operatorname{Cov}(a_{ij}, a_{kj}) w_{ij} w_{kj} \langle \mSigma_j^{-1} \vx_i, \vx_k \rangle.
\ea
$$
To complete the proof, we use the formulas, $\operatorname{Var}(a_{ij}) = \E[ a_{ij} ] (1 - \E[a_{ij}])$
and $\operatorname{Cov}(a_{ij}, a_{kj}) = \E[ a_{ij} ] \E[ a_{kj} ] \bigl( \frac{e^{\theta_j}}{e^{\theta_i} + e^{\theta_j} + e^{\theta_k}} \bigr)$,
that were established in Theorem~\ref{TheoremNormal}.
\end{proof}

\subsection{Second-order approximation for general models}

Let us discuss a general technique that estimates the expectation of the model with respect to the Bernoulli-Plackett-Luce distribution \textit{beyond Normal assumption}, as in the previous section. Our approach is based on \textit{second-order} approximation of the log-likelihood function as a function of the adjacency matrix $\mA = (a_{ij}) \in \{0, 1\}^{n \times n}$ around \textit{null graph}, $\bar{\mA} = \vzero \in \R^{n \times n}$, that is the \textit{model with no causal relationships}. In the particular case of Normal models, our derivations recover the exact formulas from the previous sections. However, these estimates can be used for a gerenal, non-Gaussian setting. 

Recall that our unconstrained objective (excluding the regularizer for simplicity) is
given by:
\beq \label{ScoreExact}
\ba{rcl}
S(\Theta, \Omega) & = & \E_{\mA \sim M(\Theta)}\Bigl[ \;
f(M', \Omega)
\;\; = \;\;
\sum\limits_{r = 1}^R \E_{X \sim P_{\text{data}}^{(r)}}
\sum\limits_{j \not\in \mathcal{I}_r} \log p_{\Omega}^{j}(X_j | \mA_j, X_{-j})
\;\Bigr],
\ea
\eeq
where $\mA_j$ is the $j$-th column of the sampled adjacency matrix.
Denote, for a fixed $1 \leq r \leq R$ and $j \not\in \mathcal{I}_r$, the log-likelihood
as a function of $\va := \mA_j \in \R^n$
\beq \label{ModelPhi}
\ba{rcl}
\varphi( \va ) & := & \log p_{\Omega}^j(X_j \, | \, \va, X_{-j}),
\ea
\eeq
and consider its second-order Taylor expansion around $\va := \vzero$:
\beq \label{TaylorSecondOrder}
\ba{rcl}
\varphi(\va) & \approx & 
q(\va) \;\; := \;\;
\varphi(\vzero) + \langle \nabla \varphi(\vzero), \va \rangle
+ \frac{1}{2} \langle \nabla^2 \varphi(\vzero) \va, \va \rangle.
\ea
\eeq
Note that for Normal models from the previous sections, this approximation is \textit{exact}.
The value $\varphi( \vzero)$ corresponds to the log-likelihood of our model with no relations between $j$-th variable and others:
$$
\ba{rcl}
\varphi( \vzero) & := & \log p_{\Omega}^j(X_j \, | \, \vzero, X_{-j})
\;\; = \;\;
\log p_{\Omega}^j(X_j)
\ea
$$
and can be seen as an apriori distribution with no causal relationships, modeling the data for a fixed value of parameters $\Omega$. Using automatic differentiation, we can compute the gradient $\nabla \varphi(\vzero) \in \R^n$ and the Hessian matrix $\nabla^2 \varphi(\vzero) \in \R^{n \times n}$ at zero, for a specifically given model in~(\ref{ModelPhi}).

\begin{example}
Consider the multivariate Normal model~(\ref{MultivariateGauss}). In this case, $\varphi(\va)$ can be expressed as
$$
\ba{rcl}
\varphi(\va) & = & -\frac{1}{2}\langle \mSigma^{-1}(\bar{\mX} \va + \vb_j ), \bar{\mX} \va + \vb_j \rangle + c,
\ea
$$
where $c = - \frac{d}{2}\log(2 \pi) - \log \det \mSigma$ is the normalizing constant,
and the matrix $\bar{\mX} \in \R^{d \times n}$ is composed by a reweighted data:
$$
\ba{rcl}
\bar{\mX} & = & \biggl[ w_{1j} \vx_1 \, \Big| \, \ldots \, \Big| \, w_{nj} \vx_n \biggr].
\ea
$$
Note that in this case,
$$
\ba{rcl}
\varphi(\vzero) & = & -\frac{1}{2}\langle \mSigma^{-1} \vb_j, \vb_j \rangle + c
\ea
$$
is the log-likelihood of our model with no causalities, and the gradient and the Hessian at zero can be computed as:
$$
\ba{rcl}
\nabla \varphi(\vzero) & = & - \bar{\mX}^{\top} \mSigma^{-1} ( \vb_j - \vx_j ) \;\; \in \;\; \R^n, \\[10pt]
\nabla^2 \varphi(\vzero) & = &
- \bar{\mX}^{\top} \mSigma^{-1} \bar{\mX} \;\; \in \;\; \R^{n \times n}.
\ea
$$
Note that both $\nabla \varphi(\vzero)$ and $\nabla^2 \varphi(\vzero)$ do not depend on a sampled adjacency matrix $\mA$, and describe the local geometry of the log-likelihood with no causal relationships.
In this example, expansion~(\ref{TaylorSecondOrder}) is exact.
\qed
\end{example}

Note that in general, (\ref{TaylorSecondOrder}) is a local approximation of the log-likelihood.
However, under mild assumptions on our model, we can equip it with a \textit{global guarantee}, as in the following lemma.

\begin{lemma} \label{LemmaThirdBound}
    Let the third derivative of~(\ref{ModelPhi}) be bounded, for some $L \geq 0$:
    \beq \label{BoundThirdDerivative}
    \ba{rcl}
    L & \geq &
    \| \nabla^3 \varphi(\va) \| \;\; \equiv \;\;
    \| \nabla^3_{\va} \log p_{\Omega}^j( X_j \, | \, \va, X_{-j}) \|.
    \ea
    \eeq
    Then,
    \beq \label{GlobalTaylorBound} 
    \ba{rcl}
    | \varphi(\va) - q(\va) |
    & \leq & 
    \frac{L}{6} n^{3/2}.
    \ea
    \eeq
\end{lemma}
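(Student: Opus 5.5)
We prove the bound with Taylor's theorem with integral (Lagrange) remainder, applied along the segment from $\vzero$ to $\va$. The key point is that $\va = \mA_j$ is a column of an adjacency matrix, so $\va \in \{0,1\}^n$. Hence $\|\va\|_2 \le \sqrt{n}$, and this is where the factor $n^{3/2}$ comes from.

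\textbf{Step 1: reduce to one variable.} Define $\psi(t) := \varphi(t\va)$ for $t\in[0,1]$. Assume $\varphi$ is three times continuously differentiable on a convex set containing the cube $[0,1]^n$; this is implicit in \eqref{BoundThirdDerivative}. Then
\begin{equation*}
\psi(0)=\varphi(\vzero),\quad \psi'(0)=\langle\nabla\varphi(\vzero),\va\rangle,\quad \psi''(0)=\langle\nabla^2\varphi(\vzero)\va,\va\rangle,
\end{equation*}
so that $q(\va)=\psi(0)+\psi'(0)+\tfrac12\psi''(0)$. Moreover, $\psi'''(t)=\nabla^3\varphi(t\va)[\va,\va,\va]$.

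\textbf{Step 2: Taylor remainder.} Expanding $\psi$ to second order at $0$ gives
\begin{equation*}
\varphi(\va)-q(\va)=\psi(1)-\psi(0)-\psi'(0)-\tfrac12\psi''(0)=\tfrac12\int_0^1 (1-t)^2\,\psi'''(t)\,dt.
\end{equation*}
Interpret $\|\nabla^3\varphi\|$ as the operator norm of the symmetric trilinear form induced by the Euclidean norm. Then $|\psi'''(t)|\le L\|\va\|_2^3$. Since $\int_0^1(1-t)^2\,dt = 1/3$, this yields
\begin{equation*}
|\varphi(\va)-q(\va)|\le \tfrac{L}{6}\|\va\|_2^3.
\end{equation*}

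\textbf{Step 3: use binary entries.} Because $a_{ij}\in\{0,1\}$, we have $\|\va\|_2^2=\sum_i a_{ij}\le n$, so $\|\va\|_2^3\le n^{3/2}$. Substituting this into Step 2 gives \eqref{GlobalTaylorBound}. The bound can be refined to $\tfrac{L}{6}(\#\text{parents of } j)^{3/2}$, and since $a_{jj}=0$ the factor $n$ can even be replaced by $n-1$.

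\textbf{Main obstacle.} The computation itself is routine. The real care is in fixing conventions. First, the norm of the third derivative must be the trilinear operator norm; if it were an entrywise or Frobenius norm, the bound still holds, since the Frobenius norm dominates the operator norm. Second, the bound on $\nabla^3\varphi$ must hold along the entire segment $[\vzero,\va]$. This requires $\varphi$ to be defined and smooth on the relaxed domain $[0,1]^n$, not only on binary vectors. That is natural for models of the form $p^j_\Omega(X_j\mid \va\odot X_{-j})$, where $\va$ enters as a continuous mask.
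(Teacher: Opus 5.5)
Your proof is correct and is essentially the paper's argument: the integral form of Taylor's theorem along $t\mapsto t\va$, the bound $\tfrac12\int_0^1(1-t)^2\,\|\nabla^3\varphi(t\va)\|\,\|\va\|^3\,dt \le \tfrac{L}{6}\|\va\|^3$, and then $\|\va\|^3\le n^{3/2}$ from the binary entries. Your remarks make explicit what the paper leaves implicit: the operator-norm convention, and the need for $\varphi$ to be smooth on the relaxed segment rather than only on $\{0,1\}^n$. One caution applies: a bound on the entrywise \emph{max}-norm of $\nabla^3\varphi$ would not yield $n^{3/2}$, so only the Frobenius-norm version of that side remark is valid.
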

\begin{proof}
    The bound immediately follows from the integral form of the Taylor theorem,
    $$
    \ba{rcl}
    | \varphi(\va) - q(\va) |
    & = &
    | \varphi(\va) - \varphi(\vzero) - \langle \nabla \varphi(\vzero), \va \rangle
    - \frac{1}{2} \langle \nabla^2 \varphi(\vzero) \va, \va \rangle | \\[10pt]
    & = & 
    \frac{1}{2} | \int\limits_0^1 (1 - \tau)^2 \langle  \nabla^3 \varphi( \tau \va ) \va, \va, \va \rangle  | d\tau \\[10pt]
    & \leq & 
    \frac{1}{2} \int\limits_0^1 (1 - \tau)^2  \| \nabla^3 \varphi( \tau \va ) \| \cdot \| \va \|^3 d\tau 
    \;\; \overset{(\ref{BoundThirdDerivative})}{\leq} \;\;
    \frac{L}{6} \| \va \|^3.
    \ea
    $$
    It remains to note that, since all entries of the adjacency matrix are from $\{0, 1\}$, we have
    $$
    \ba{rcl}
    \| \va \|^3 & = &
    \Bigl( \sum\limits_{i = 1}^n a_i^2 \Bigr)^{3/2}
    \;\; \leq \;\; n^{3/2},
    \ea
    $$
    which completes the proof of~(\ref{GlobalTaylorBound}).
\end{proof}

Now, we observe that we can compute an \textit{explicit formula} for the expectation of the second-order approximate score, $\E_{\mA \sim M(\Theta)} \bigl[ q(\mA_j) \bigr]$,
under the Bernoulli-Plackett-Luce distribution, using our previous results:
\beq \label{AverageQ}
\ba{rcl}
\E_{\mA \sim M(\Theta)} \bigl[ q(\mA_j) \bigr]
& = & 
\E_{\mA \sim M(\Theta)} 
\bigl[ \varphi(\vzero) + \langle \nabla \varphi(\vzero), \mA_j \rangle 
+ \frac{1}{2} \langle \nabla^2 \varphi(\vzero) \mA_j, \mA_j \rangle \bigr] \\
\\
& = &
\varphi(\vzero) 
+ 
\sum\limits_{i = 1}^n \E[ a_{ij} ] \cdot (\nabla \varphi(\vzero))_i
+
\frac{1}{2} \sum\limits_{i = 1}^n \sum\limits_{k = 1}^n \E[ a_{ij} a_{kj} ]
(\nabla^2 \varphi(\vzero))_{ik} \\
\\
& = &
\varphi(\vzero) + \sum\limits_{i = 1}^n \E[ a_{ij} ]  \cdot ( (\nabla \varphi(\vzero))_i +  \frac{1}{2} (\nabla^2 \varphi(\vzero))_{ii} ) \\
\\
& & \qquad
+ \sum\limits_{1 \leq i < k \leq n} \E[a_{ij}] \cdot \E[  a_{kj}] \cdot \Bigl(1 + \frac{e^{\theta_j}}{e^{\theta_i} + e^{\theta_j} + e^{\theta_k}} \Bigr) \cdot (\nabla^2 \varphi(\vzero))_{ik},
\ea
\eeq
where we have the exact expression for the expecation $\E[a_{ij}]$.

Therefore, we can use exact formula~(\ref{AverageQ}) 
for the approximation of our score, and Lemma~\ref{LemmaThirdBound} provides us with the following uniform bound:
$$
\ba{rcl}
\bigl| \E_{\mA \sim M(\Theta)} [ \varphi(\mA_j)  ] \, - \, \E_{\mA \sim M(\Theta)} \bigl[ q(\mA_j) \bigr] \bigr|
& \leq & \frac{L}{6}n^{3/2}.
\ea
$$
Combining these observations together, we establish the following approximation result.

\begin{theorem}\label{th:general_models}
Let the third derivative of the log-likelihood function be bounded by $L \geq 0$, as in~(\ref{BoundThirdDerivative}). Then, the expected log-likelihood score~(\ref{ScoreExact}) under the Bernoulli-Plackett-Luce distribution over DAGs, can be approximated by 
\beq \label{SApproximation}
\ba{rcl}
\bar{S}(\Theta, \Omega)
& := &
\sum\limits_{r = 1}^R \E_{X \sim P_{\text{data}}^r}
\sum\limits_{j \not\in \mathcal{I}_r} \ell_{\Omega}^{j}(X),
\ea
\eeq
where
\beq \label{EllOmegaModel}
\ba{rcl}
\ell_{\Omega}^{j}(X)
& := & 
\log p_{\Omega}^{j}(X_j \, | \, \vzero, X_{-j} )
+ \sum\limits_{i = 1}^n \E[ a_{ij} ]
\cdot \Bigl( \frac{\partial}{\partial{a_{ij}}} \log p_{\Omega}^j(X_j \, | \, \vzero, X_{-j}) 
+ \frac{1}{2}
\frac{\partial^2}{\partial{a_{ij}^2}} \log p_{\Omega}^j(X_j \, | \, \vzero, X_{-j}) 
\Bigr) \\
\\
& & \quad +
\sum\limits_{1 \leq i < j \leq n} 
\E[a_{ij}] \cdot \E[  a_{kj}] \cdot \Bigl(1 + \frac{e^{\theta_j}}{e^{\theta_i} + e^{\theta_j} + e^{\theta_k}} \Bigr) \cdot 
\frac{\partial^2}{\partial{a_{ij}} \partial{a_{kj}}} \log p_{\Omega}^j(X_j \, | \, \vzero, X_{-j}),
\ea
\eeq
where $\E[a_{ij}] = p_{ij} 
\Bigl( \frac{e^{\theta_i}}{e^{\theta_i} + e^{\theta_j}} \Bigr) p_{ij}$.
And we have the global approximation bound:
\beq \label{DeltaBound}
\ba{rcl}
| S(\Theta, \Omega) - \bar{S}(\Theta, \Omega) | & \leq & \delta
\;\; := \;\; 
\sum\limits_{r = 1}^R | \mathcal{I}_r | \cdot \frac{L}{6}n^{3/2}.
\ea
\eeq
\end{theorem}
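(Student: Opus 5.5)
The statement to prove is Theorem~\ref{th:general_models}. The plan is to work regime by regime and node by node, and then assemble. Fix a regime $r$, a data point $X \sim P^{(r)}_{\text{data}}$, and a non-intervened node $j \notin \mathcal{I}_r$. Set $\varphi(\va) = \log p_\Omega^j(X_j \mid \va, X_{-j})$ as in (\ref{ModelPhi}), and let $q$ be its second-order expansion (\ref{TaylorSecondOrder}) around the null graph.

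\textbf{Step 1: exact expectation of $q$.} I will compute $\E_{\mA \sim M(\Theta)}[q(\mA_j)]$ exactly and show it equals $\ell_\Omega^j(X)$. By linearity,
\[
\E[q(\mA_j)] = \varphi(\vzero) + \sum_i \E[a_{ij}]\,\partial_i\varphi(\vzero) + \tfrac12 \sum_{i,k} \E[a_{ij}a_{kj}]\,\partial_{ik}\varphi(\vzero).
\]
\begin{itemize}
\item For the diagonal terms, $a_{ij}^2 = a_{ij}$ because the entries are binary. This produces the coefficient $\partial_i\varphi + \tfrac12 \partial_{ii}\varphi$ multiplying $\E[a_{ij}]$.
\item For $i \ne k$, both $i$ and $k$ differ from $j$ (we have $a_{jj} = 0$). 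Bernoulli independence of $b_{ij}$ and $b_{kj}$, together with their independence from $\pi$, gives $\E[a_{ij}a_{kj}] = p_{ij}p_{kj}\Pr(i \prec j, k \prec j)$. Lemma~\ref{lem:i_and_k_before_j} combined with Corollary~\ref{cor:expected_aij} turns this into $\E[a_{ij}]\E[a_{kj}]\bigl(1 + e^{\theta_j}/(e^{\theta_i}+e^{\theta_j}+e^{\theta_k})\bigr)$.
\item By symmetry of the Hessian, the unordered off-diagonal pairs collapse: the factor $\tfrac12$ cancels against the doubling, leaving a sum over $i < k$.
\end{itemize}
This reproduces (\ref{AverageQ}) and hence (\ref{EllOmegaModel}). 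The formula as stated contains some index typos, e.g.\ the summation is written over $i < j$ where it should be over $i < k$; I will read it as intended.

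\textbf{Step 2: the pointwise error.} Lemma~\ref{LemmaThirdBound} gives $|\varphi(\va) - q(\va)| \le \frac{L}{6}n^{3/2}$ for every $\va \in \{0,1\}^n$. Since this bound holds for every sampled DAG, taking the expectation over $M(\Theta)$ preserves it:
\[
\bigl|\E[\varphi(\mA_j)] - \ell_\Omega^j(X)\bigr| \le \tfrac{L}{6}n^{3/2}.
\]

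\textbf{Step 3: assembly.} I will use Fubini to swap the DAG expectation with the data expectation, exactly as in the proof of Theorem~\ref{TheoremNormal}. This gives $S - \bar S = \sum_r \E_X \sum_{j \notin \mathcal{I}_r}\bigl(\E[\varphi] - \ell^j_\Omega\bigr)$. The triangle inequality then bounds this by $\sum_r (\#\text{non-intervened nodes})\cdot\frac{L}{6}n^{3/2}$.

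\textbf{Main obstacle.} The hard part is reconciling this count with the constant stated in (\ref{DeltaBound}), which uses $|\mathcal{I}_r|$ rather than the number of non-intervened nodes. Summing over $j \notin \mathcal{I}_r$ naturally yields $n - |\mathcal{I}_r|$ terms per regime. I would first check whether the paper intends $\mathcal{I}_r$ to denote the set of summed (non-intervened) nodes; otherwise I would state the bound with $n - |\mathcal{I}_r|$, or more crudely with $n$, which still proves the theorem. A minor additional point is to ensure that $\varphi$ is $C^3$ on the segment $[\vzero, \va]$ so that Lemma~\ref{LemmaThirdBound} applies. This holds because the third-derivative bound (\ref{BoundThirdDerivative}) is assumed on all of $[0,1]^n$.
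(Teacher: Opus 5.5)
Your proposal is correct and follows the paper's own argument: compute $\E[q(\mA_j)]$ exactly via linearity, $a_{ij}^2=a_{ij}$, Lemma~\ref{lem:i_and_k_before_j} and Hessian symmetry (this is (\ref{AverageQ})), then average the uniform remainder bound of Lemma~\ref{LemmaThirdBound} over DAGs and sum over regimes and non-intervened nodes. Your worry about the constant is justified, since this summation only yields $\sum_{r}(n-|\mathcal{I}_r|)\frac{L}{6}n^{3/2}$, and the stated $|\mathcal{I}_r|$ would wrongly force exactness in a purely observational regime; so state the corrected constant rather than claiming that the literal bound follows.
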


Note that the value of~$\delta$ in~(\ref{DeltaBound}) can be improved if we replace the Taylor expansion around $\vzero$ (no causal relationships) in~(\ref{TaylorSecondOrder}) to a graph $\mA \sim M(\Theta)$ sampled from the Bernoulli-Plackett-Luce distribution using the latest values of the trained parameters $\Theta$. In case of Normal models, we have $L = 0$, which gives \text{exact bound} in~(\ref{DeltaBound}).

In order to use approximation~(\ref{EllOmegaModel}), 
we have to compute the Hessian of the score: 
$\Bigl[ \frac{\partial^2}{\partial{a_{ij} \partial {a_{kj}}}} \log p_{\Omega}^j(X_j \, | \, \vzero, X_{-j})  \Bigr]_{ik} \in \R^{n \times n}$,
which, in general, requires computing $O(n^2)$ entries,
for each of $1 \leq j \leq n$ nodes. Therefore, this approach is less
scalable than REINFORCE estimator, and we keep it for future research.

\cameraready{\section{Runtime and memory complexity}\label{app:complexity}
\paragraph{Computational complexity.} \name{} can be trained using either a REINFORCE estimator or the analytic objective (Theorem \ref{th:normal_gradients}). In the former version, each optimisation step involves $M$ Monte Carlo samples of permutations and edge masks. For every sample, the method constructs a permutation, builds the DAG mask, evaluates the likelihood-based objective on a minibatch of size $B$ and performs backpropagation through the neural networks. The overall complexity of each step is $\mathcal{O}(M B n^2)$, where $n$ is the number of variables. The analytic version does not require Monte Carlo sampling and directly optimizes the expected objective. The main cost comes from the covariance term which is cubic with respect to the number of nodes. However, at each step we sample $n ^ {2/3}$ nodes keeping the complexity quadratic. Consequently, the analytic method has a per-step complexity of $\mathcal{O}(Bn^2)$. In both cases, considering $M$ and $B$ constants, PACER has a computational complexity of $\mathcal{O}(n^2)$.
\paragraph{Memory complexity.} \name{}'s memory complexity is dominated by storing model parameters and intermediate activations.
In particular, the adjacency-related parameters (\textit{i.e.}, edge probabilities and weights) scale as $\mathcal{O}(n^2)$, which is standard for methods that model dense graphs. During training, additional memory is required for mini-batch data and gradients, resulting in an overall memory complexity of $\mathcal{O}(Bn + n^2)$, where $B$ is the batch size and $n$ the number of variables. For the REINFORCE-based variant, storing $M$ Monte Carlo samples per mini-batch example introduces an additional $\mathcal{O}(MBn^2)$ factor in the worst case, although in practice samples could, if necessary, be processed sequentially to keep memory usage effectively at $\mathcal{O}(n^2)$. The analytic linear-Gaussian variant does not require Monte Carlo sampling and therefore avoids this overhead.}

\section{Baselines}\label{app:baseline_implementations}

\paragraph{Summary of baselines.} We compare \name{} with a range of established causal discovery methods spanning constraint-based, score-based, and differentiable approaches. Constraint-based methods include GS \cite{margaritis2003learning}, IAMB \citep{tsamardinos2003algorithms}, and MMPC \citep{tsamardinos2003time}, which infer graph structure via conditional independence tests. Score-based methods include GES \citep{chickering2002optimal}, GIES \citep{hauser2012characterization}, GRaSP \citep{lam2022greedy}, BOSS \citep{andrews2023fast}, and CAM \citep{buhlmann2014cam}, which search for graphs that optimize a likelihood-based or penalized score. LiNGAM \citep{shimizu2006linear} exploits non-Gaussianity to identify causal directions under linear models.  Sortnregress \citep{reisach2021beware} sorts variables by increasing marginal variance and uses parent search to infer DAG structure. NOTEARS \citep{zheng2018dags}, NOTEARS-LR \citep{fang2023low}, DCDI \cite{brouillard2020differentiable} formulate structure learning as continuous optimization problems with acyclicity constraints, with several variants explicitly leveraging interventional data. DCDFG \citep{lopez2022large} is a recent method designed for large-scale perturbation settings that jointly models causal structure and interventional effects. SDCD \citep{nazaret2023stable} is a method that improved stability and efficiency of DCD-based methods using an alternative acyclicity constraint.

\paragraph{Baselines implementations. } GS, GES, GIES, IAMB, MMPC, GRaSP, BOSS, and LiNGAM are benchmarked using the implementation of the Causal Discovery Toolbox \citep{kalainathan2019causal}. We use the original implementation of NO-TEARS \citep{zheng2018dags}. For the CausalBench benchmark, we use the default CausalBench implementations \cite{chevalley2025large}. In terms of the Perturb-CITE-seq results, we use the  implementations of DCDFG, NOTEARS, and NOTEARS from the DCDFG codebase \cite{lopez2022large}. For the synthetic dataset from SDCD and DCDI, we used the default implementations of the baseline models. Unless otherwise stated, we use the default hyperparameters.

\begin{table}[h]
    \centering
    \small
    \caption{Summary of differentiable baseline methods. We highlight the complexity of their acyclicity constraints and whether they can incorporate prior knowledge. $d$: number of variables, $m$: number of factors.}
    \label{baseline_comparison}
    \begin{tabular}{lccccc}
\toprule
Method 
& Acyclicity constraint 
& Scalable constraint
& Interventions 
& Prior knowledge \\
\midrule
NOTEARS 
& \checkmark 
& $\mathcal{O}(d^3)$ 
& \text{\sffamily X} 
& \text{\sffamily X} \\

NO-BEARS 
& \checkmark 
& $\mathcal{O}(d^2)$ 
& \text{\sffamily X} 
& \text{\sffamily X} \\

DAGMA 
& \checkmark 
& $\mathcal{O}(d^3)$ 
& \text{\sffamily X} 
& \text{\sffamily X} \\

DCDI 
& \checkmark 
& $\mathcal{O}(d^3)$ 
& \checkmark 
& \text{\sffamily X} \\

DCDFG 
& \checkmark 
& $\mathcal{O}(md)$ 
& \checkmark 
& \text{\sffamily X} \\

SDCD 
& \checkmark 
& $\mathcal{O}(d^2)$ 
& \checkmark 
& \text{\sffamily X} \\

\midrule
\textbf{PACER} 
& \text{\sffamily X} 
& $\mathcal{O}(1)$ 
& \checkmark 
& \checkmark \\
\bottomrule
\end{tabular}

\end{table}

\section{\name{} hyperparameters}\label{app:hyperparameters}

We use the following default hyperparameters across all experiments throughout the manuscript unless stated otherwise:
\begin{itemize}
    \item Number of layers for each variable-specific MLP: 2
    \item Layer dimension: 4
    \item Number of training steps: 5,000
    \item Batch size: 64
    \item Learning rate: 0.01
    \item Number of Monte Carlo samples: 200
    \item Edge regularization coefficient $\lambda$: 1
\end{itemize}

We use the default threshold of 0.5 on the expected edge probabilities (Equation \ref{eq:edge_probabilities}) to define the inferred graph. \name{} uses 20\% of the input data to select the checkpoint that minimizes the validation loss.  For the analytic version of \name{} we lowered the learning rate to 0.001 and increased the number of training steps to 20,000. For the synthetic dataset experiments, we adopt hyperparameters similar to prior work to ensure a fair comparison in terms of runtime and scalability. We follow the settings used in DCDI and SDCD, with a learning rate of 0.001, hidden layer dimension of 16, batch size of 256, and 10,000 training steps.

\section{Synthetic dataset}\label{sec:synthetic_data}
We evaluate \name{} on two synthetic benchmarks for causal discovery with interventions: the DCDI synthetic dataset and SDCD synthetic dataset, which differ in both their data-generating mechanisms and intervention designs.

The DCDI synthetic dataset evaluates causal discovery under diverse causal mechanisms to compare robustness and is commonly used to benchmark differentiable and constraint-based approaches.

For each dataset, a DAG is sampled from an Erdős–Rényi distribution. Interventions are applied differently depending on graph size. For 10-node graphs, single-node interventions are performed on all variables, while for 20-node graphs, interventions target one or two nodes selected uniformly at random. Samples are evenly distributed across interventional settings and subsequently normalized.

Three types of causal mechanisms are considered:
\begin{itemize}
    \item Linear mechanisms: variables are generated as linear combinations of their parents with additive Gaussian noise.
    \item Additive noise models (ANM): causal functions are nonlinear neural networks with additive noise.
    \item Nonlinear non-additive (NN) mechanisms: noise enters the causal function non-additively.
\end{itemize}

We use perfect interventions, which replace the distribution of intervened nodes with a shifted Gaussian.

The SDCD synthetic dataset is designed to evaluate causal discovery under large-scale and nonlinear settings to compare scalability. Data are generated according to the following steps.

First, an undirected graph is sampled from an Erdős–Rényi distribution and converted into a ground-truth DAG $G$ by assiging edge directions according to a random node permutation. Each node $j$ is associated with a nonlinear causal mechanism parameterized by a fully connected MLP with one hidden layer of 100 units, defining the conditional mean of the observational distribution:
\begin{equation}
p_j(x_j|x_{\text{pa}_G(j)};0)\sim \mathcal N(\text{MLP}^{(j)}(x_{\text{pa}_G(j)}),0.5)
\end{equation}
Hard interventions are applied by replacing the conditional distribution of the intervened variables $k$ with 
\begin{equation}
p_k(x_k;k)\sim \mathcal{N}(0,0.1)
\end{equation}
The data comprise 10,000 observational samples and 500 interventional samples per variable, followed by standardization. To assess scalability, we vary the number of variables $d$ while fixing the edge density to $p=0.05$.

\section{Evaluation metrics}\label{app:metrics}

We employ different evaluation metrics depending on the availability of ground-truth causal structure and the scale of the dataset, following established practices for each benchmark.

\paragraph{Structure recovery metrics (Sachs \textit{et al.}).}
For the \cite{sachs2005causal} protein signaling dataset, a curated ground-truth causal graph is available. We therefore use standard structure recovery metrics that directly compare the inferred graph to the ground truth:
\begin{itemize}
    \item \textbf{Structural Hamming Distance (SHD).}  
    SHD counts the minimum number of edge additions, deletions, and reversals required to transform the inferred graph into the ground-truth graph. Lower values indicate better structural recovery.
    
    \item \textbf{Structural Intervention Distance (SID).}  
    The Structural Intervention Distance quantifies the closeness between two graphs in terms of their corresponding causal inference statements \citep{peters2015structural}.
    
    \item \textbf{False Discovery Rate (FDR).}  
    FDR is the fraction of predicted edges that are not present in the ground-truth graph, measuring the propensity of a method to introduce spurious causal relations.
    
    \item \textbf{True Positive Rate (TPR).}  
    TPR (recall) is the fraction of ground-truth edges that are correctly recovered by the inferred graph.
    
    \item \textbf{F1 score.}  
    The F1 score is the harmonic mean of precision (1-FDR) and recall (TPR), providing a single summary statistic that balances false positives and false negatives.
\end{itemize}

\paragraph{CausalBench metrics (large-scale perturbation data).}
For the CausalBench benchmark \citep{chevalley2025large}, complete ground-truth causal graphs are not available at scale. Instead, evaluation is based on how well inferred graphs explain observed interventional effects. We therefore adopt the metrics proposed in the benchmark:
\begin{itemize}
    \item \textbf{Mean Wasserstein distance.}  
    This metric measures the agreement between the distributional shifts induced by interventions in the data and those implied by the inferred causal graph. Higher values indicate better recovery of intervention-induced effects.
    
    \item \textbf{False Omission Rate (FOR).}  
    FOR quantifies the fraction of known or curated interactions that are missed by the inferred graph, measuring the tendency of a method to omit relevant causal edges.
\end{itemize}

These metrics emphasize functional correctness under interventions rather than exact structural recovery, which is appropriate for large-scale gene regulatory networks where only partial ground truth is available.

\paragraph{Predictive interventional metrics (Perturb-CITE-seq).}
For the Perturb-CITE-seq dataset \citep{frangieh2021multimodal}, no ground-truth causal graph is provided. Following DCDFG \citep{lopez2022large}, we therefore evaluate models based on their ability to predict held-out interventional outcomes:
\begin{itemize}
    \item \textbf{Interventional negative log-likelihood (I-NLL).}  
    I-NLL measures how well the learned causal model explains observed data under unseen interventions. Lower values indicate better predictive performance.
    
    \item \textbf{Interventional mean absolute error (I-MAE).}  
    I-MAE evaluates the absolute deviation between predicted and observed intervention responses, providing an interpretable measure of prediction accuracy.
\end{itemize}

These metrics assess the quality of the learned causal mechanisms in terms of predictive validity under interventions, which is the primary objective in the absence of a known ground-truth graph.

\section{Scalability experiments}\label{app:add_experiments}
We evaluate six interventional causal discovery models on the SDCD synthetic dataset across a wide range of problem sizes, varying the number of variables $d$ while fixing the edge density to $0.05$. For each setting, experiments are conducted over three randomly generated datsets. We report SHD, SID, precision, recall for $d\in\{10,20,30,40,50\}$, and measure training time for $d\in\{10,20,30,40,50,100,200,300\}$. Runs exceeding six hours are treated as timeouts (Figure~\ref{fig:runtime},~\ref{fig:scalability}, Table~\ref{runtime_scaling}).
\begin{figure}[h]
    \centering
    \includegraphics[width=1.0\linewidth]
    {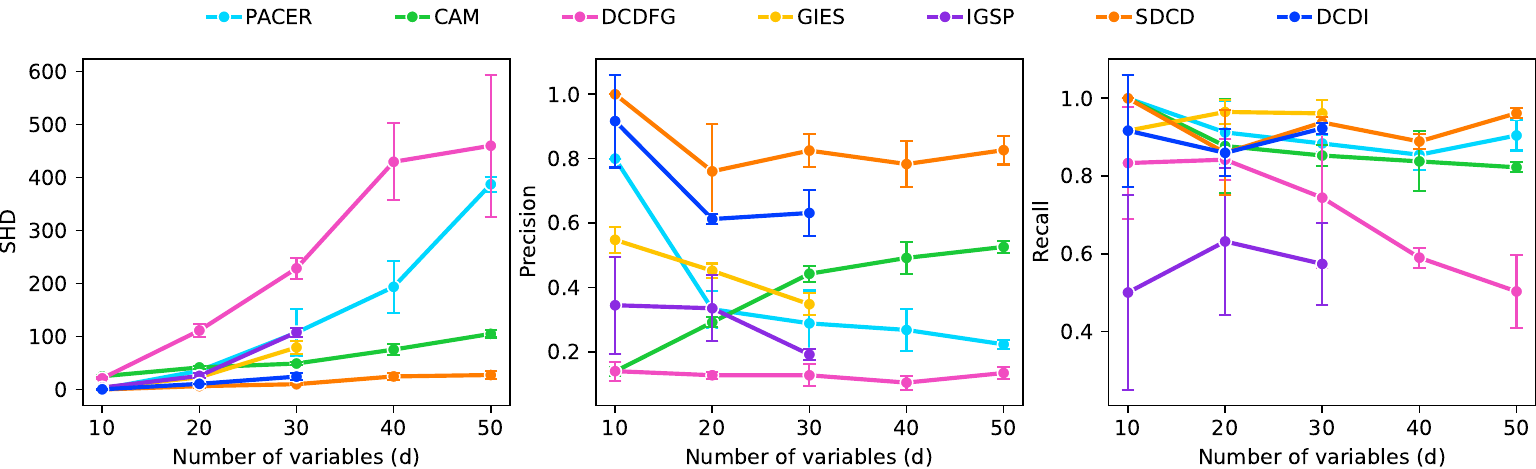}
    \caption{Performance across simulations on interventional dataset with increasing numbers of variables $d \leq 50$. SHD, SID, precision, and recall are reported under interventional settings with a fixed edge density of 0.05. Lower values are better for SHD, while higher values are better for precision and recall. Error bars indicate the standard deviation over three random datasets. Missing data points indicate runs that exceeded the 6 hr time limit.}
    \label{fig:scalability}
\end{figure}

\begin{table}[h]
    \centering
    \caption{Average runtime (seconds) and standard deviation across different numbers of variables $d$ on the SDCD synthetic dataset. Runs exceeding six hours are treated as timeouts and marked as ``--''.}
    \label{runtime_scaling}
    \begin{tabular}{cccccccc}
\toprule
$d$ & CAM & DCDFG & DCDI & GIES & IGSP & PACER & SDCD \\
\midrule
10  & $270.2{\scriptstyle \pm 57.9}$ 
    & $521.5{\scriptstyle \pm 16.9}$ 
    & $837.8{\scriptstyle \pm 102.6}$ 
    & $0.2{\scriptstyle \pm 0.0}$ 
    & $0.0{\scriptstyle \pm 0.0}$ 
    & $507.5{\scriptstyle \pm 276.8}$ 
    & $1645.6{\scriptstyle \pm 261.9}$ \\

20  & $831.6{\scriptstyle \pm 129.6}$ 
    & $702.3{\scriptstyle \pm 281.6}$ 
    & $4271.9{\scriptstyle \pm 457.1}$ 
    & $12.1{\scriptstyle \pm 2.1}$ 
    & $1.8{\scriptstyle \pm 2.4}$ 
    & $705.1{\scriptstyle \pm 264.4}$ 
    & $2530.8{\scriptstyle \pm 67.4}$ \\

30  & $954.5{\scriptstyle \pm 156.8}$ 
    & $1476.6{\scriptstyle \pm 63.4}$ 
    & $6654.6{\scriptstyle \pm 754.7}$ 
    & $96.1{\scriptstyle \pm 38.8}$ 
    & $96.9{\scriptstyle \pm 96.0}$ 
    & $654.6{\scriptstyle \pm 220.1}$ 
    & $2949.3{\scriptstyle \pm 154.6}$ \\

40  & $1959.6{\scriptstyle \pm 97.7}$ 
    & $2080.0{\scriptstyle \pm 81.1}$ 
    & -- 
    & -- 
    & -- 
    & $610.5{\scriptstyle \pm 35.2}$ 
    & $3594.7{\scriptstyle \pm 418.0}$ \\

50  & $2750.5{\scriptstyle \pm 311.6}$ 
    & $2077.7{\scriptstyle \pm 755.7}$ 
    & -- 
    & -- 
    & -- 
    & $862.9{\scriptstyle \pm 138.7}$ 
    & $4440.1{\scriptstyle \pm 537.6}$ \\

100 & $8415.7{\scriptstyle \pm 512.1}$ 
    & $2556.6{\scriptstyle \pm 289.0}$ 
    & -- 
    & -- 
    & -- 
    & $3230.2{\scriptstyle \pm 568.6}$ 
    & $6188.6{\scriptstyle \pm 38.8}$ \\

200 & -- 
    & -- 
    & -- 
    & -- 
    & -- 
    & $1018.5{\scriptstyle \pm 170.0}$ 
    & $9999.1{\scriptstyle \pm 371.9}$ \\

300 & -- 
    & -- 
    & -- 
    & -- 
    & -- 
    & $1877.4{\scriptstyle \pm 628.1}$ 
    & -- \\
\bottomrule
\end{tabular}

\end{table}

\begin{figure}[h]
    \centering
    \includegraphics[width=0.9\linewidth]
    {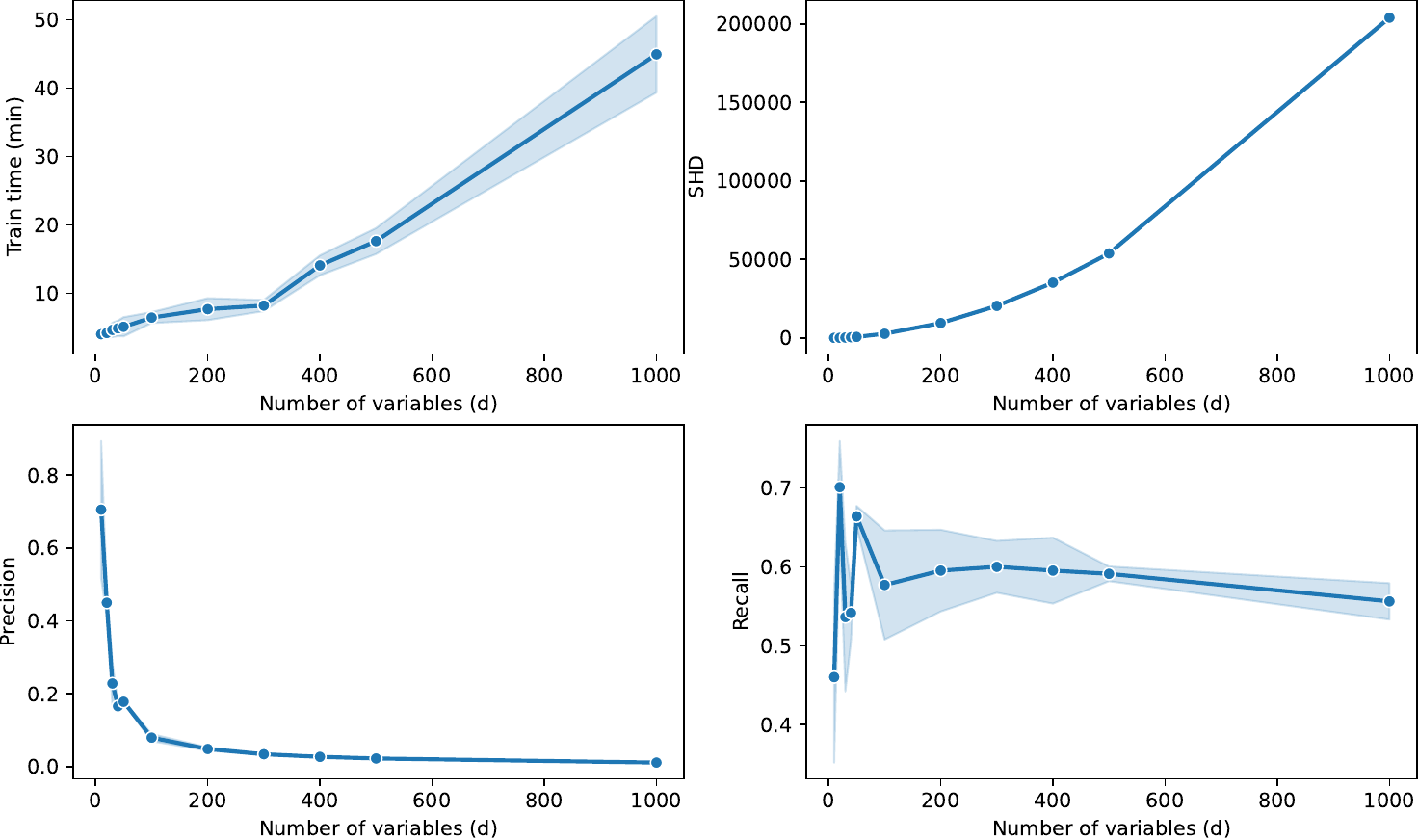}
    \caption{Performance of \name{} across simulations on interventional linear dataset with increasing numbers of variables $d \leq 1000$. Runtime, SHD, precision, and recall are reported under interventional settings with a fixed connectivity of 4. Lower values are better for SHD, while higher values are better for precision, and recall. Error bars indicate the standard deviation over three random datasets.}
    \label{fig:scalability_pacer}
\end{figure}

\section{Robustness Experiments}\label{app:sparsity_coef}

\subsection{Sensitivity to the sparsity coefficient $\lambda$}
We perform additional experiments using Erdos-Renyi (ER) graphs of varying average degree (\text{degree} $\in \{1,2,3,4\}$), sweeping $\lambda$ over
$\{0.01,\,0.1,\,0.5,\,1.0,\,2.0\}$ and reporting SHD and SID averaged over
10 random seeds ($\pm$ standard deviation). Tables~\ref{tab:shd_lambda} and~\ref{tab:sid_lambda} summarize the results.
\name's performance is stable across a wide range of $\lambda$ values, with
only minor sensitivity in dense graphs.

\begin{table}[H]
  \centering
  \caption{%
    SHD for varying $\lambda$ and ER graph average
    degree.  Results are mean\,$\pm$\,std over 10 random seeds.
    \textbf{Bold}: best (lowest SHD) per column.%
  }
  \label{tab:shd_lambda}
  \vskip 4pt
  \begin{tabular}{lcccc}
    \toprule
    \multicolumn{1}{c}{$\lambda \,\backslash\,$ \textbf{degree}}
      & \textbf{degree\,=\,1}
      & \textbf{degree\,=\,2}
      & \textbf{degree\,=\,3}
      & \textbf{degree\,=\,4} \\
    \midrule
    0.01
      & $32.4 \pm 3.9$ & $36.8 \pm 2.1$ & $31.8 \pm 2.3$ & $30.4 \pm 2.5$ \\
    0.1
      & $24.2 \pm 7.7$ & $34.4 \pm 3.9$ & $32.2 \pm 1.7$ & $\mathbf{29.8 \pm 2.8}$ \\
    0.5
      & $18.0 \pm 6.9$ & $29.4 \pm 2.6$ & $31.4 \pm 2.1$ & $31.4 \pm 2.9$ \\
    1.0
      & $15.4 \pm 6.2$
      & $24.2 \pm 2.8$
      & $30.6 \pm 1.5$
      & $31.4 \pm 3.1$ \\
    2.0
      & $\mathbf{12.2 \pm 5.0}$ & $\mathbf{21.2 \pm 4.8}$ & $\mathbf{28.4 \pm 2.9}$ & $33.6 \pm 2.4$ \\
    \bottomrule
  \end{tabular}
\end{table}

\begin{table}[H]
  \centering
  \caption{%
    SID for varying $\lambda$ and ER graph average
    degree.  Results are mean\,$\pm$\,std over 10 random seeds.
    \textbf{Bold}: best (lowest SID) per column.%
  }
  \label{tab:sid_lambda}
  \vskip 4pt
  \begin{tabular}{lcccc}
    \toprule
    \multicolumn{1}{c}{$\lambda \,\backslash\,$ \textbf{degree}}
      & \textbf{degree\,=\,1}
      & \textbf{degree\,=\,2}
      & \textbf{degree\,=\,3}
      & \textbf{degree\,=\,4} \\
    \midrule
    0.01
      & $31.2 \pm 16.5$ & $\mathbf{56.6 \pm 11.5}$ & $\mathbf{69.2 \pm 3.7}$ & $\mathbf{73.0 \pm 2.4}$ \\
    0.1
      & $33.4 \pm 16.1$ & $56.6 \pm 11.6$ & $70.0 \pm 3.1$ & $73.2 \pm 2.7$ \\
    0.5
      & $31.2 \pm 19.6$ & $60.0 \pm 12.0$ & $72.2 \pm 0.7$ & $74.0 \pm 5.2$ \\
    \textbf{1.0}
      & $\mathbf{28.6 \pm 18.2}$
      & $62.2 \pm 13.8$
      & $72.4 \pm 5.4$
      & $73.4 \pm 5.4$ \\
    2.0
      & $33.0 \pm 19.4$ & $62.2 \pm 9.5$ & $75.8 \pm 8.1$ & $77.6 \pm 6.0$ \\
    \bottomrule
  \end{tabular}
\end{table}

\subsection{Intervention robustness}
To assess robustness with respect to different intervention types, we also report results under multiple intervention settings following DCDI. For these experiments, we followed the hyperparameter setting from DCDI having learning rate=0.001, layer dimension=16, batch size=256, number of training steps=10,000, and choosing the best edge regularization coefficient $\lambda$ based on the validation set.

\begin{table}[H]
    \centering
    \caption{Results for linear dataset with perfect interventions. Lower SHD and SID indicate better performance. Best scores are bolded and second best scores are underlined.}
    \label{fig:robustness_linear}
   \begin{tabular}{lcccccccc}
\toprule
Method
 & \multicolumn{2}{c}{10 nodes, $e{=}1$}
 & \multicolumn{2}{c}{10 nodes, $e{=}4$}
 & \multicolumn{2}{c}{20 nodes, $e{=}1$}
 & \multicolumn{2}{c}{20 nodes, $e{=}4$} \\
\cmidrule(lr){2-3} \cmidrule(lr){4-5} \cmidrule(lr){6-7} \cmidrule(lr){8-9}
 & SHD & SID & SHD & SID & SHD & SID & SHD & SID \\
\midrule
  IGSP 
 & $4.0{\scriptstyle \pm4.8}$ & $15.7{\scriptstyle \pm15.4}$
 & $28.8{\scriptstyle \pm2.0}$ & $72.2{\scriptstyle \pm5.1}$
 & $9.7{\scriptstyle \pm8.7}$ & $45.1{\scriptstyle \pm45.4}$
 & $68.1{\scriptstyle \pm13.6}$ & $295.4{\scriptstyle \pm27.6}$ \\

  GIES
 & $\mathbf{0.3{\scriptstyle \pm0.5}}$ & $\mathbf{0.0{\scriptstyle \pm0.0}}$
 & $\mathbf{4.0{\scriptstyle \pm6.5}}$ & $\mathbf{6.7{\scriptstyle \pm17.7}}$
 & $\mathbf{1.5{\scriptstyle \pm1.2}}$ & $\mathbf{0.3{\scriptstyle \pm0.9}}$
 & $49.4{\scriptstyle \pm22.2}$ & $\mathbf{111.9{\scriptstyle \pm51.4}}$ \\

  CAM
 & $0.6{\scriptstyle \pm1.0}$ & $\mathbf{0.0{\scriptstyle \pm0.0}}$
 & $11.8{\scriptstyle \pm4.3}$ & $32.2{\scriptstyle \pm17.2}$
 & $6.3{\scriptstyle \pm7.4}$ & $7.6{\scriptstyle \pm9.8}$
 & $91.4{\scriptstyle \pm21.3}$ & $181.7{\scriptstyle \pm60.5}$ \\

  DCD
 & $6.3{\scriptstyle \pm3.4}$ & $14.8{\scriptstyle \pm10.6}$
 & $26.1{\scriptstyle \pm3.3}$ & $66.4{\scriptstyle \pm11.4}$
 & $11.1{\scriptstyle \pm4.7}$ & $45.8{\scriptstyle \pm22.8}$
 & $49.0{\scriptstyle \pm12.0}$ & $258.6{\scriptstyle \pm41.6}$ \\

  DCDI-G
 & $\underline{0.4{\scriptstyle \pm0.7}}$ & $1.3{\scriptstyle \pm2.1}$
 & $\underline{7.5{\scriptstyle \pm1.4}}$ & $\underline{29.7{\scriptstyle \pm8.2}}$
 & $3.2{\scriptstyle \pm3.2}$ & $12.1{\scriptstyle \pm11.2}$
 & $\mathbf{21.0{\scriptstyle \pm4.9}}$ & $\underline{147.6{\scriptstyle \pm49.5}}$ \\

\midrule
  PACER
 & ${1.5{\scriptstyle \pm 1.3}}$ & ${3.5{\scriptstyle \pm 4.0}}$
 & ${10.6{\scriptstyle \pm 3.2}}$ & ${33.1{\scriptstyle \pm 8.4}}$
 & $\underline{2.5{\scriptstyle \pm 2.3}}$ & $\underline{6.9{\scriptstyle \pm 9.1}}$
 & $\underline{42.5{\scriptstyle \pm 8.6}}$ & ${190.1{\scriptstyle \pm 39.5}}$ \\
\bottomrule
\end{tabular}
\end{table}
\begin{table}[H]
    \centering
    \caption{Results for nonlinear additive noise model dataset with perfect interventions. Lower SHD and SID indicate better performance. Best scores are bolded and second best scores are underlined.}
    \label{fig:robustness_nnadd}
    \begin{tabular}{lcccccccc}
\toprule
Method
 & \multicolumn{2}{c}{10 nodes, $e{=}1$}
 & \multicolumn{2}{c}{10 nodes, $e{=}4$}
 & \multicolumn{2}{c}{20 nodes, $e{=}1$}
 & \multicolumn{2}{c}{20 nodes, $e{=}4$} \\
\cmidrule(lr){2-3} \cmidrule(lr){4-5} \cmidrule(lr){6-7} \cmidrule(lr){8-9}
 & SHD & SID & SHD & SID & SHD & SID & SHD & SID \\
\midrule
IGSP    
 & $5.7{\scriptstyle \pm2.3}$ & $23.4{\scriptstyle \pm13.6}$
 & $32.8{\scriptstyle \pm2.4}$ & $79.3{\scriptstyle \pm3.2}$
 & $14.9{\scriptstyle \pm8.1}$ & $78.8{\scriptstyle \pm64.6}$
 & $80.5{\scriptstyle \pm6.4}$ & $337.6{\scriptstyle \pm27.3}$ \\

GIES
 & $7.5{\scriptstyle \pm5.1}$ & $2.3{\scriptstyle \pm2.5}$
 & $9.2{\scriptstyle \pm2.9}$ & $27.1{\scriptstyle \pm11.5}$
 & $23.8{\scriptstyle \pm18.4}$ & $\underline{3.1{\scriptstyle \pm4.4}}$
 & $89.6{\scriptstyle \pm14.7}$ & $143.9{\scriptstyle \pm53.1}$ \\

CAM
 & $6.3{\scriptstyle \pm6.9}$ & $\mathbf{0.0{\scriptstyle \pm0.0}}$
 & $\underline{6.3{\scriptstyle \pm3.8}}$ & $\mathbf{14.6{\scriptstyle \pm20.1}}$
 & $9.2{\scriptstyle \pm14.3}$ & $13.5{\scriptstyle \pm25.1}$
 & $106.2{\scriptstyle \pm14.6}$ & $\underline{96.2{\scriptstyle \pm57.9}}$ \\

DCD
 & $6.4{\scriptstyle \pm4.6}$ & $22.0{\scriptstyle \pm14.7}$
 & $31.1{\scriptstyle \pm3.4}$ & $77.4{\scriptstyle \pm3.1}$
 & $18.1{\scriptstyle \pm8.0}$ & $51.5{\scriptstyle \pm41.5}$
 & ${55.7{\scriptstyle \pm8.3}}$ & $261.3{\scriptstyle \pm22.5}$ \\

DCDI-G
 & $\mathbf{0.9{\scriptstyle \pm1.2}}$ & $3.9{\scriptstyle \pm6.4}$
 & $\mathbf{5.2{\scriptstyle \pm1.9}}$ & $24.0{\scriptstyle \pm9.3}$
 & $\underline{6.5{\scriptstyle \pm5.6}}$ & $17.9{\scriptstyle \pm19.1}$
 & $\mathbf{26.8{\scriptstyle \pm7.0}}$ & $\mathbf{94.4{\scriptstyle \pm41.5}}$ \\

\midrule
PACER
 & $\underline{2.6{\scriptstyle \pm 1.5}}$ & $\underline{1.1{\scriptstyle \pm 1.4}}$
 & ${7.7{\scriptstyle \pm 3.8}}$ & $\underline{19.3{\scriptstyle \pm 14.5}}$
 & $\mathbf{4.4{\scriptstyle \pm 2.7}}$ & $\underline{5.8{\scriptstyle \pm 4.0}}$
 & $\underline{36.8{\scriptstyle \pm 11.0}}$ & ${156.6{\scriptstyle \pm 59.8}}$ \\
\bottomrule
\end{tabular}

\end{table}
\begin{table}[H]
    \centering
    \caption{Results for nonlinear non-additive noise dataset with perfect interventions. Lower SHD and SID indicate better performance. Best scores are bolded and second best scores are underlined.}
    \label{fig:robustnes_nns}
    \begin{tabular}{lcccccccc}
\toprule
Method
 & \multicolumn{2}{c}{10 nodes, $e{=}1$}
 & \multicolumn{2}{c}{10 nodes, $e{=}4$}
 & \multicolumn{2}{c}{20 nodes, $e{=}1$}
 & \multicolumn{2}{c}{20 nodes, $e{=}4$} \\
\cmidrule(lr){2-3} \cmidrule(lr){4-5} \cmidrule(lr){6-7} \cmidrule(lr){8-9}
 & SHD & SID & SHD & SID & SHD & SID & SHD & SID \\
\midrule
IGSP    
 & $6.6{\scriptstyle \pm3.9}$ & $25.8{\scriptstyle \pm17.9}$
 & $31.1{\scriptstyle \pm3.3}$ & $77.1{\scriptstyle \pm5.7}$
 & $14.4{\scriptstyle \pm4.8}$ & $63.8{\scriptstyle \pm26.5}$
 & $79.7{\scriptstyle \pm8.1}$ & $341.4{\scriptstyle \pm18.1}$ \\

GIES
 & $6.2{\scriptstyle \pm3.5}$ & $\mathbf{0.9{\scriptstyle \pm1.5}}$
 & $9.5{\scriptstyle \pm3.6}$ & $29.0{\scriptstyle \pm17.7}$
 & $12.2{\scriptstyle \pm2.1}$ & $\mathbf{3.4{\scriptstyle \pm3.2}}$
 & $63.8{\scriptstyle \pm11.1}$ & $\underline{124.9{\scriptstyle \pm36.9}}$ \\

CAM
 & $4.1{\scriptstyle \pm3.8}$ & $2.3{\scriptstyle \pm3.4}$
 & $11.3{\scriptstyle \pm4.2}$ & $35.4{\scriptstyle \pm20.8}$
 & $\mathbf{4.2{\scriptstyle \pm2.3}}$ & $\underline{10.9{\scriptstyle \pm10.3}}$
 & $106.6{\scriptstyle \pm15.7}$ & $144.2{\scriptstyle \pm51.8}$ \\

DCD
 & $6.6{\scriptstyle \pm3.5}$ & $18.1{\scriptstyle \pm8.1}$
 & $20.6{\scriptstyle \pm3.9}$ & $65.8{\scriptstyle \pm9.9}$
 & $9.4{\scriptstyle \pm4.9}$ & $25.6{\scriptstyle \pm16.2}$
 & $\underline{28.6{\scriptstyle \pm6.8}}$ & $188.0{\scriptstyle \pm28.7}$ \\

DCDI-G
 & $\underline{2.1{\scriptstyle \pm1.5}}$ & $4.6{\scriptstyle \pm5.4}$
 & $\underline{5.0{\scriptstyle \pm4.3}}$ & $\underline{28.8{\scriptstyle \pm17.6}}$
 & $6.4{\scriptstyle \pm3.8}$ & $15.1{\scriptstyle \pm8.0}$
 & $\mathbf{12.2{\scriptstyle \pm2.7}}$ & $\mathbf{96.1{\scriptstyle \pm18.9}}$ \\

\midrule
PACER
 & $\mathbf{1.3{\scriptstyle \pm 1.5}}$ & $\underline{1.9{\scriptstyle \pm 2.5}}$
 & $\mathbf{4.5{\scriptstyle \pm 2.7}}$ & $\mathbf{18.6{\scriptstyle \pm 10.6}}$
 & $\underline{4.2{\scriptstyle \pm 3.4}}$ & ${13.1{\scriptstyle \pm 12.3}}$
 & ${28.9{\scriptstyle \pm 7.9}}$ & ${160.1{\scriptstyle \pm 29.3}}$ \\
\bottomrule
\end{tabular}

\end{table}

\subsection{Intervention robustness over sparsity coefficient}
Similar to DCDI, we perform a hyperparameter search over the sparsity coefficient $\lambda$.
Although $\lambda$ plays different roles in DCDI and \name{}, in both cases it controls the trade-off between model fit and graph sparsity.
Following DCDI, we conduct a grid search over 10 values of $\lambda$ for known interventions, using 3 random seeds on the DCDI synthetic dataset (Figure~\ref{fig:hyper_linear},~\ref{fig:hyper_nnadd},~\ref{fig:hyper_nn},~\ref{fig:hyper_causal}).

\begin{figure}[h]
    \centering
    \includegraphics[width=0.7\linewidth]{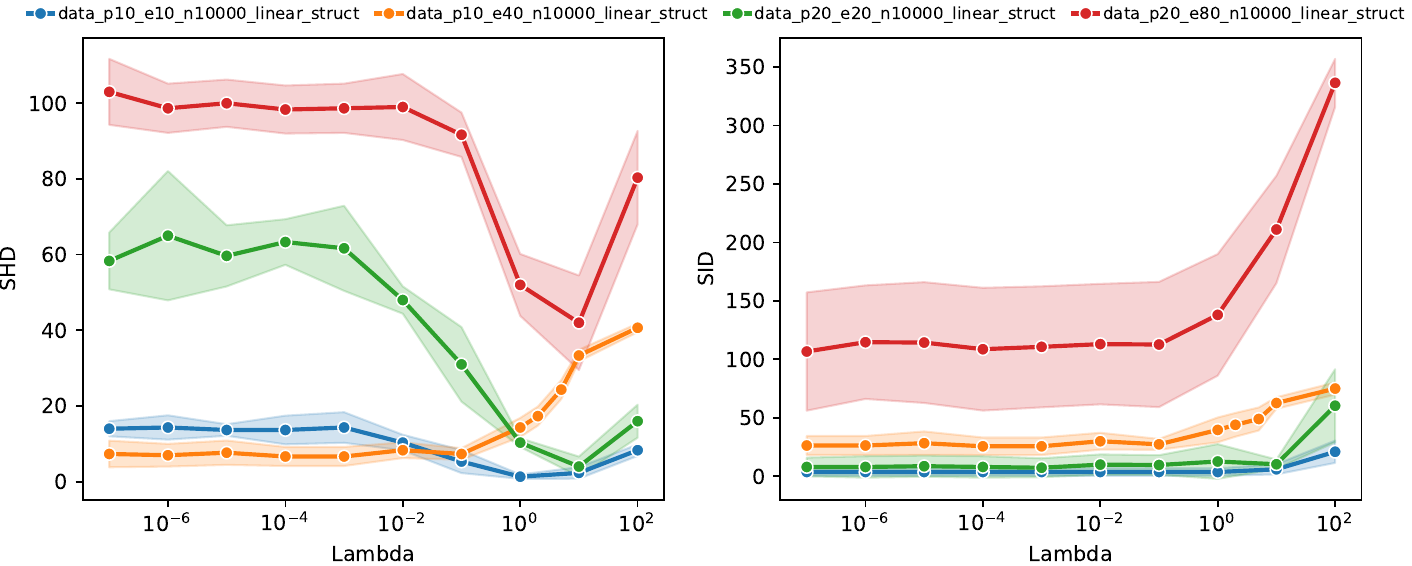}
    \caption{Sensitivity of \name{} to the sparsity hyperparameter $\lambda$ on linear datasets. SHD and SID are shown for $\lambda$ values ranging from $10^{-7}-10^2$ for three random seeds.}
    \label{fig:hyper_linear}
\end{figure}
\begin{figure}[h]
    \centering
    \includegraphics[width=0.75\linewidth]{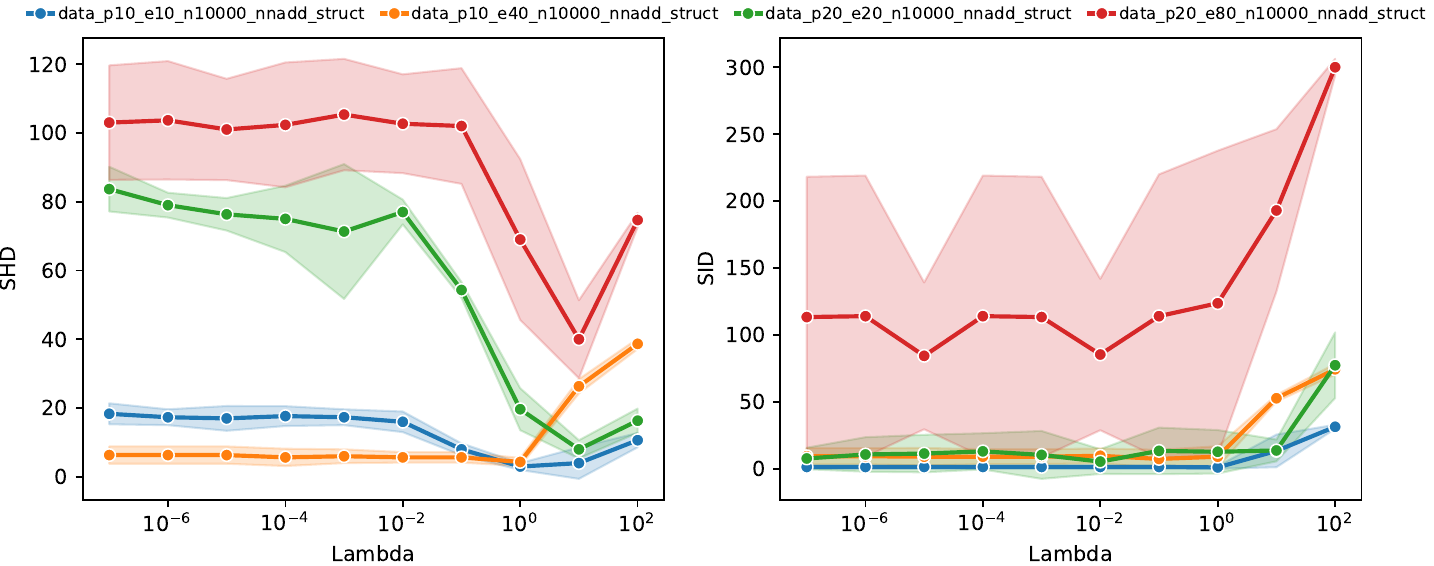}
    \caption{Sensitivity of \name{} to the sparsity hyperparameter $\lambda$ on additive noise model (ANM) datasets. SHD and SID are shown for $\lambda$ values ranging from $10^{-7}-10^2$ for three random seeds.}
    \label{fig:hyper_nnadd}
\end{figure}

\begin{figure}[H]
    \centering
    \includegraphics[width=0.7\linewidth]{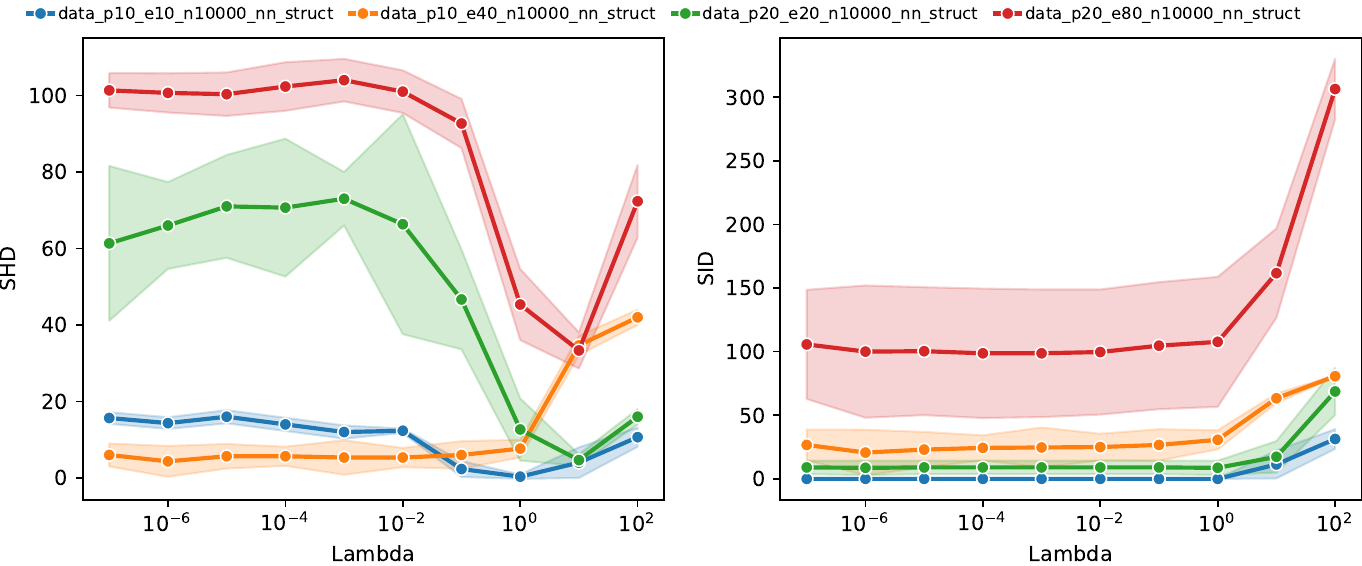}
    \caption{Sensitivity of \name{} to the sparsity hyperparameter $\lambda$ on nonlinear non-additive (NN) datasets. SHD and SID are shown for $\lambda$ values ranging from $10^{-7}-10^2$ for three random seeds.}
    \label{fig:hyper_nn}
\end{figure}
\begin{figure}[H]
    \centering
    \includegraphics[width=0.7\linewidth]{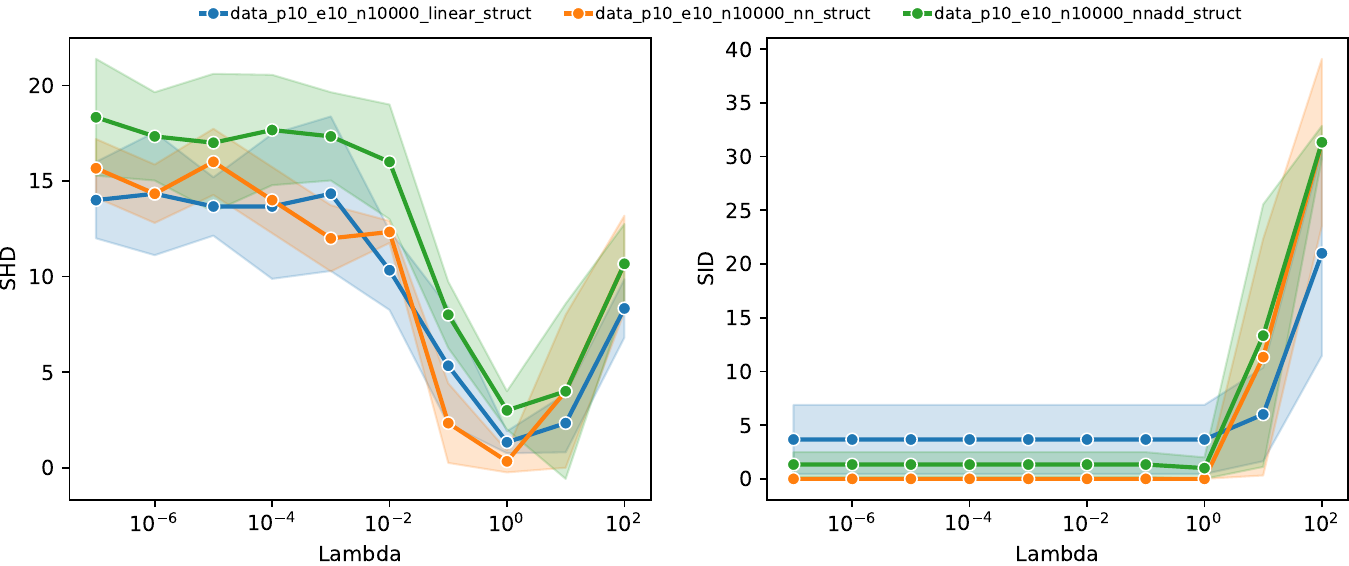}
    \caption{Sensitivity of \name{} to the sparsity hyperparameter $\lambda$ on different causal mechanism datasets with same number of nodes and edges. SHD and SID are shown for $\lambda$ values ranging from $10^{-7}-10^2$ for three random seeds.}
    \label{fig:hyper_causal}
\end{figure}

\subsection{Robustness to imperfect and off-target interventions}
We evaluate \name{}'s robustness under two practically motivated deviations from
the perfect-intervention assumption, directly inspired by realistic biological
perturbation settings such as CRISPR knockdowns.

\subsubsection{Off-Target Interventions}
\label{sec:offtarget}
For each interventional sample the intervention target label is randomly
mislabeled with probability $p \in \{0.0,\,0.1,\,0.2,\,0.3,\,0.5\}$,
simulating scenarios in which a fraction of interventions are attributed to
the wrong node.

Figure~\ref{fig:offtarget} shows that PACER's performance degrades
gracefully as $p$ increases.  Both SHD and SID remain nearly constant
across the tested mislabeling rates, confirming that PACER retains strong causal
discovery performance even at moderate off-target rates.

\begin{figure}[h]
    \centering
    \includegraphics[width=0.7\linewidth]{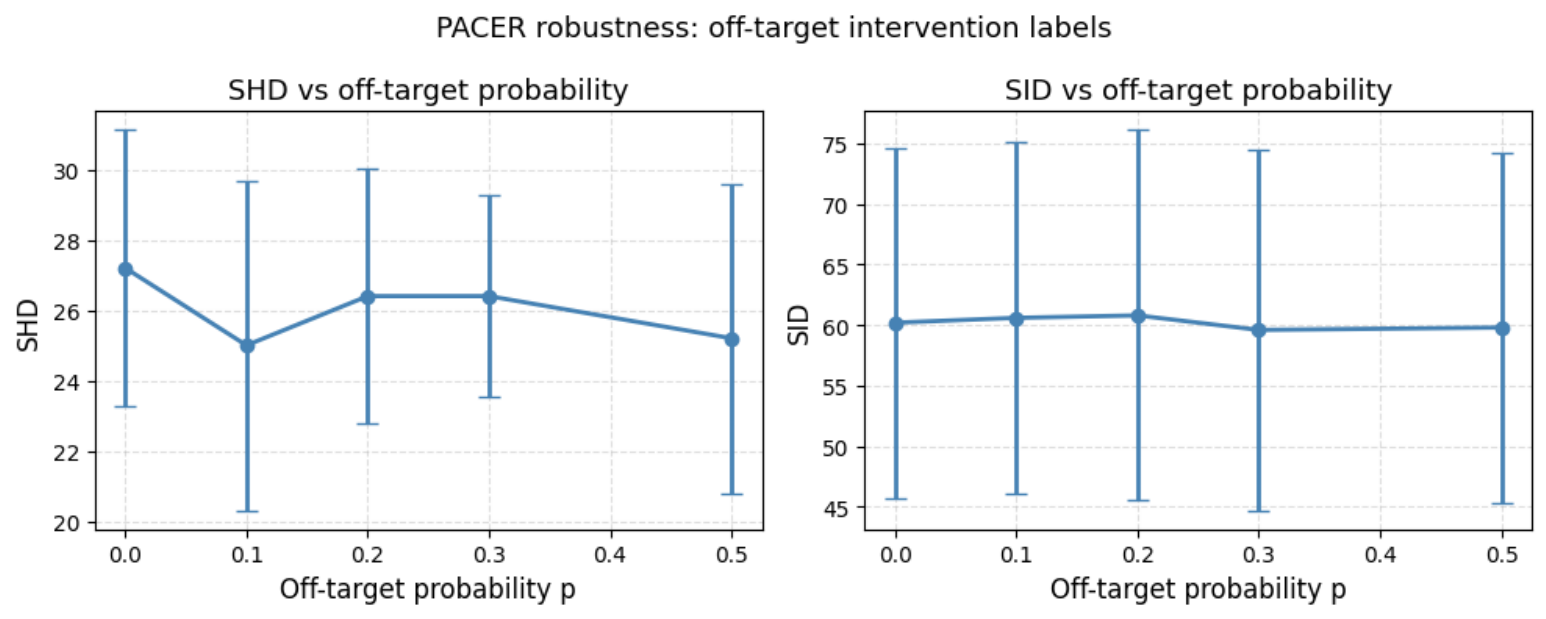}
    \caption{Sensitivity of \name{} to off-target interventions. SHD and SID are shown for three random seeds.}
    \label{fig:offtarget}
\end{figure}

\subsubsection{Soft (Partial) Interventions}
\label{sec:soft}

We generate soft interventions by interpolating between the fully intervened and
observational mechanisms via a blending parameter $\alpha \in [0,1]$:
\begin{equation}
  p_{\alpha}(X_i \mid \text{pa}(X_i))
  \;=\;
  \alpha\,p_{\mathrm{int}}(X_i)
  \;+\;
  (1-\alpha)\,p_{\mathrm{obs}}(X_i \mid \text{pa}(X_i)),
  \label{eq:soft}
\end{equation}
where $\alpha{=}1$ corresponds to a perfect do-intervention and $\alpha{=}0$
to purely observational data.

Figure~\ref{fig:imperfect} shows that PACER maintains robust
performance across all tested values of $\alpha$. 

\begin{figure}[h]
    \centering
    \includegraphics[width=0.7\linewidth]{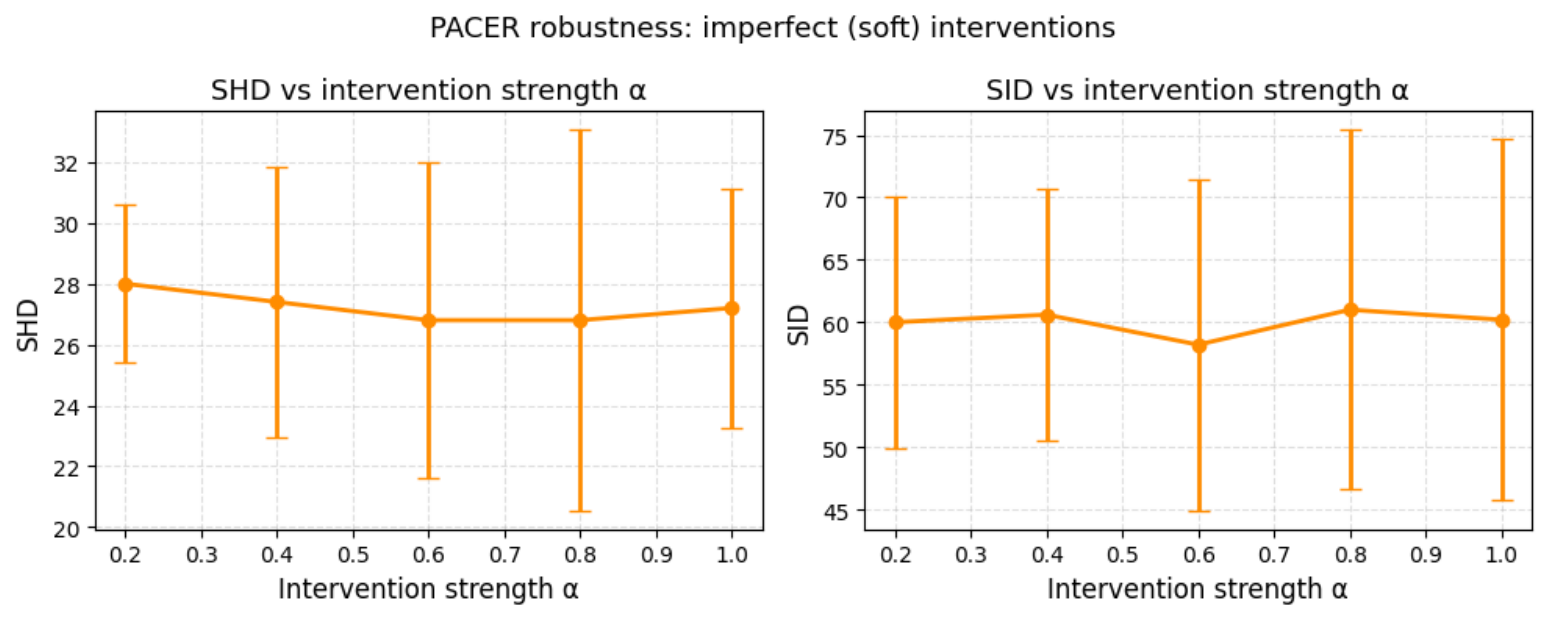}
    \caption{Sensitivity of \name{} to imperfect interventions. SHD and SID are shown for three random seeds.}
    \label{fig:imperfect}
\end{figure}

\newpage
\subsection{Hyperparameter sensitivity analysis}
We conduct a thorough sensitivity analysis over all major hyperparameters on a synthetic dataset. We systematically vary batch size, learning rate, number of MC samples $K$, number of layers, and hidden dimension on the linear Gaussian dataset:

\begin{itemize}
  \item \textbf{Hidden dimension.}  Increasing hidden dimension improves both
        SHD and F1, with diminishing returns beyond moderate sizes.
  \item \textbf{Number of layers.}  Reducing the number of layers is
        detrimental, possibly due to limited representational capacity of
        shallow networks for this data modality.
  \item \textbf{Number of MC samples $K$.}  Performance improves and variance
        decreases as $K$ increases.
  \item \textbf{Batch size and learning rate.}  Performance is robust to
        moderate variations within standard ranges.
\end{itemize}

\begin{figure}[H]
    \centering
    \includegraphics[width=\linewidth]{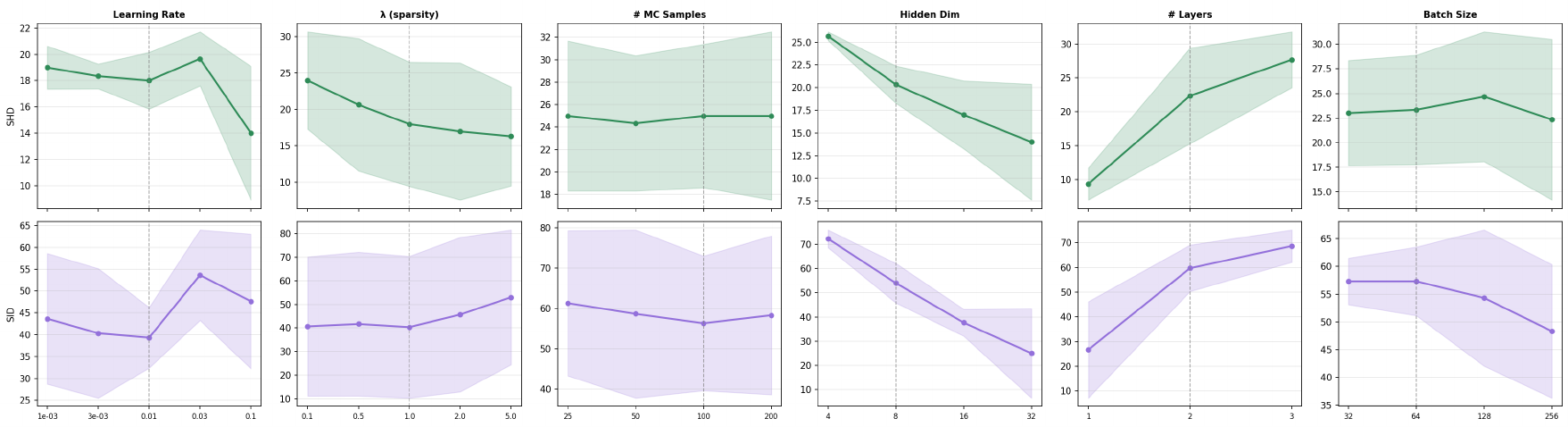}
    \caption{Sensitivity of \name{} to various hyperparameter settings. SHD and SID are shown for three random seeds.}
    \label{fig:hyperparameter}
\end{figure}

\cameraready{\section{Comparison with differentiable DAG sampling approach (DP-DAG)}\label{app:dpdag}
\begin{figure}[h]
    \centering
    \includegraphics[width=0.9\linewidth]{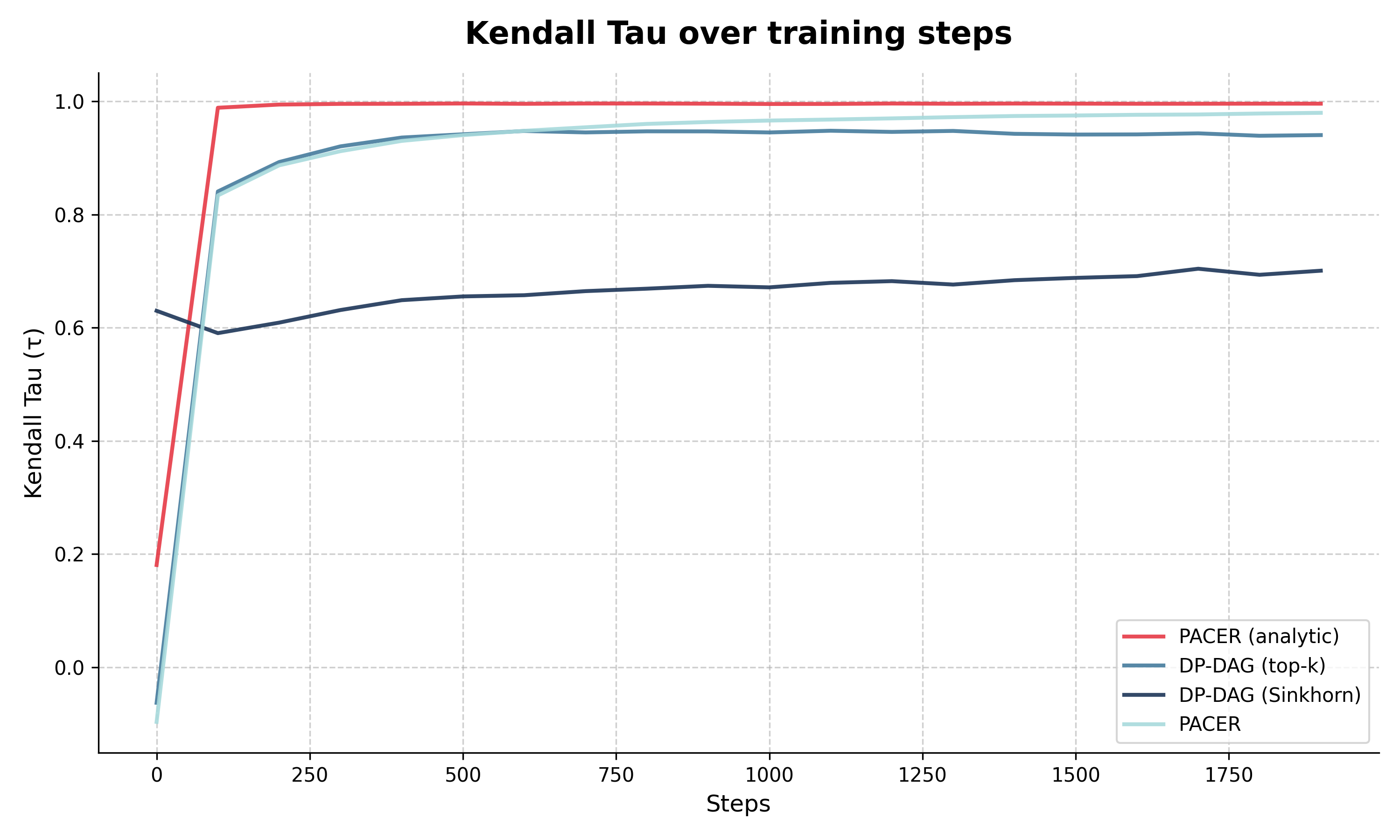}
    \caption{Kendall Tau coefficient calculated between ground-truth and inferred partial orderings by training step.}
    \label{fig:dpdag_benchmark}
\end{figure}
We compare \name{} to the differentiable DAG sampling approach of DP-DAG \citep{charpentier2022differentiable}, including both top-k and Sinkhorn-based methods \citep{charpentier2022differentiable}. To isolate the efficacy of the DAG parameterization (independent of specific likelihood modeling or architectural choices) we conduct a controlled experiment targeting the reconstruction of a fully-connected DAG with $d=500$ nodes. To simulate the uncertainty inherent in real-world causal discovery, we introduce an edge noise probability $\rho = 0.3$. At each training step, a corrupted target is generated where each edge in the true DAG has a 30\% probability of being omitted (flipped from 1 to 0). We evaluate performance using the Kendall Tau ($\tau$) correlation to measure topological ordering accuracy. In this setting, all methods optimize a common Mean Squared Error (MSE) objective against a noisy version of the target adjacency matrix at each iteration. Our results (Figure \ref{fig:dpdag_benchmark}) indicate that:
\begin{itemize}
    \item Both \name{} variants outperform DP-DAG. Notably, the analytic version of \name{} achieves near-perfect recovery of the ground-truth ordering ($\tau \approx 1.0$), while DP-DAG baselines exhibit slower convergence and lower asymptotic accuracy in this regime.
    \item \name{} is substantially more efficient, offering an order-of-magnitude speedup. The analytic variant is approximately 10$\times$ faster than DP-DAG (top-$k$) and 20$\times$ faster than DP-DAG (Sinkhorn) per iteration.
    \item While the REINFORCE-based \name{} (using 100 Monte Carlo samples) robustly outperforms the baselines, the analytic variant provides the most stable and rapid convergence, demonstrating the algorithmic advantages of the Bernoulli-Plackett-Luce parameterization for large-scale discovery.
\end{itemize}}

\section{Sachs dataset and benchmark}\label{app:sachs}
We use observational \cite{sachs2005causal} data from \texttt{causallearn} \citep{zheng2024causal}. The dataset contains 7467 measurements across 11 proteins. GS, GES, GIES, IAMB, MMPC, GRaSP, BOSS, and LiNGAM are benchmarked using the implementation of the Causal Discovery Toolbox \citep{kalainathan2019causal} with default parameters. We use the original implementation of NO-TEARS \citep{zheng2018dags}. In terms of the interventional scenario, we use the \cite{sachs2005causal} data processed by DCDI \citep{brouillard2020differentiable}, containing 5846 measurements for the same 11 proteins across 6 interventional regimes. We use the IGSP, GIES, CAM, DCDI-G, and DCDI-DSF results from Appendix C1 of DCDI \citep{brouillard2020differentiable}. For both settings and all experiments throughout the manuscript, we use the default \name{} hyperparameters detailed in Section \ref{app:hyperparameters}.

\section{Inferred Sachs network}\label{app:sachs_network}
\begin{figure}[h]
    \centering
    \includegraphics[width=0.35\linewidth]{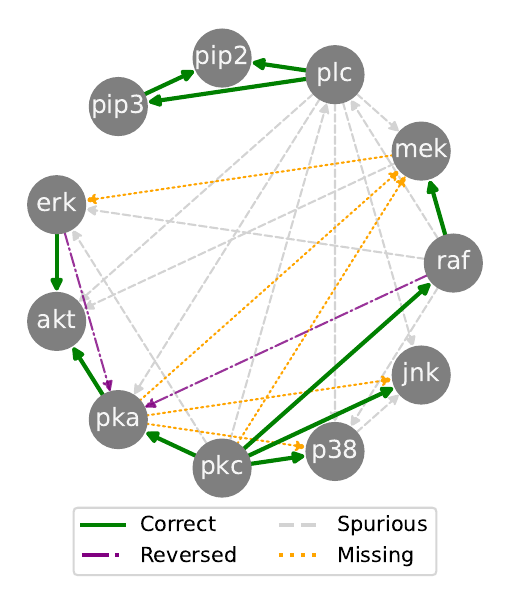}
    \caption{Interventional network inferred by \name{} on flow cytometry data from \cite{sachs2005causal}. \name{} correctly recovers many canonical signaling interactions.}
    \label{fig:sachs_2}
    \vspace{-0.5cm}
\end{figure}

\section{Perturb-CITE-seq dataset}\label{app:perturbciteseq_data}
The Perturb-CITE-seq dataset \citep{frangieh2021multimodal} comprises gene expression profiles from 218,331 melanoma cells subjected to CRISPR-based perturbations targeting 248 genes. The data was generated to identify gene regulatory programs underlying resistance or sensitivity to T cell-mediated killing, with the goal of uncovering potential therapeutic targets in cancer. We treat the three experimental conditions, control, co-culture, and IFN-$\gamma$ stimulation, as separate datasets.

We apply the same preprocessing pipeline as DCDFG, Appendix F \citep{lopez2022large}, retaining genes with sufficient signal and constructing interventional training and test splits by holding out all cells corresponding to 20\% of the intervention regimes for evaluation. We use the NOTEARS and NOTEARS-LR implementations provided by DCDFG, which extend the original methods to handle perfect interventions and employ a Gaussian likelihood with heterogeneous variances. We use the hyperparameter grids reported in the DCDFG paper. We use the default \name{} hyperparameters detailed in Section \ref{app:hyperparameters}.

\section{Perturb-CITE-seq runtime analysis}\label{app:perturbciteseq_runtime}

Across the three Perturb-CITE-seq conditons, \name{} is orders of magnitude faster than NOTEARS, NOTEARS-LR and DCDFG. We run all the methods on the same machine using a GPU accelerator (24GB NVIDIA GeForce RTX 3090).

\begin{table}[H]
    \centering
    \caption{Average runtime (minutes) and standard deviation across the three Perturb-CITE-seq datasets (Control, Cocult, IFN), each containing approximately 960 variables.}
    \label{tab:running_times_perturbciteseq}
\begin{tabular}{cccc}
\toprule
& \multicolumn{3}{c}{Dataset} \\
\cmidrule(lr){2-4}
Method & Control & Cocult & IFN \\
\midrule

NOTEARS 
& $276.7{\scriptstyle \pm 85.6}$ 
& $161.8{\scriptstyle \pm 46.8}$ 
& $162.9{\scriptstyle \pm 50.3}$ \\

NOTEARS-LR 
& $329.0{\scriptstyle \pm 16.0}$ 
& $314.6{\scriptstyle \pm 123.0}$ 
& $366.4{\scriptstyle \pm 65.2}$ \\

DCDFG 
& $850.0{\scriptstyle \pm 253.9}$ 
& $1031.5{\scriptstyle \pm 311.0}$ 
& $949.6{\scriptstyle \pm 428.8}$ \\

\midrule
PACER 
& $6.4{\scriptstyle \pm 1.6}$ 
& $7.5{\scriptstyle \pm 1.6}$ 
& $11.0{\scriptstyle \pm 0.3}$ \\

\bottomrule
\end{tabular}

\end{table}

\begin{figure}[H]
    \centering
    \includegraphics[width=\linewidth]{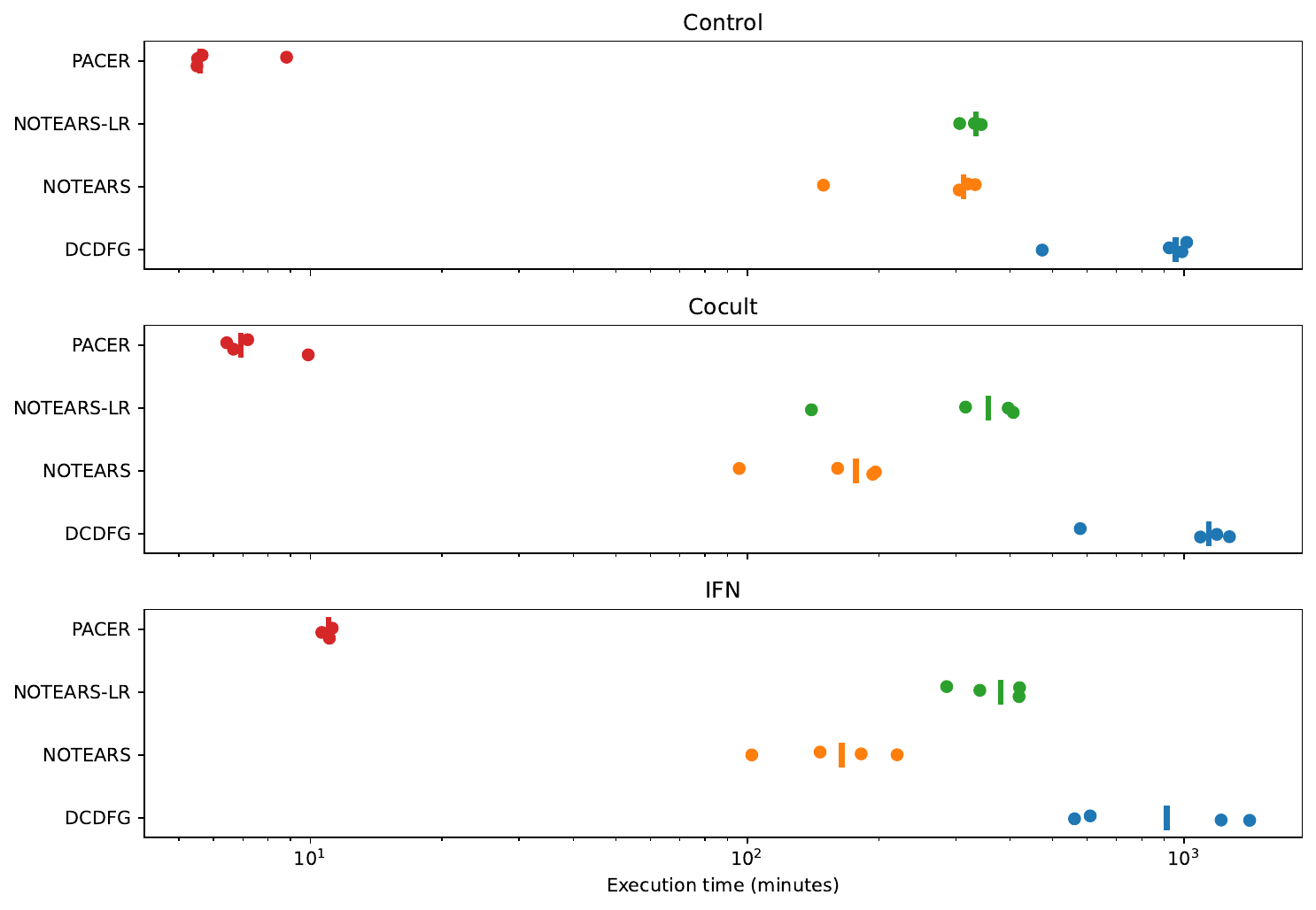}
    \caption{Execution times (minutes, log scale) for DCDFG, NOTEARS, NOTEARS-LR and PACER across Control, Cocult and IFN conditions. Points represent runs, lines represent median time.}
    \label{fig:running_times_perturbciteseq}
\end{figure}

\section{Extensions to the intervention model}\label{app:intervention_extensions}

The objective in Equation~\ref{eq:objective} treats interventions as stochastic perfect interventions with known targets: for each regime $r$, the variables in $\mathcal{I}_r$ are excluded from the likelihood sum, and a single set of conditional parameters $\Omega$ is shared across regimes. We outline here how \name{} extends to two settings beyond this default: (i) soft interventions, where the intervened conditional is replaced rather than dropped, and (ii) unknown intervention targets, where $\mathcal{I}_r$ is itself inferred. Both extensions preserve PACER's core design choices.

\subsection{Soft interventions}\label{app:soft_interventions}

Under perfect interventions the conditional $p(X_j \mid \text{pa}(X_j))$ for an intervened node is dropped from the likelihood entirely. Under soft interventions \citep{eberhardt2007interventions}, this conditional is instead \emph{replaced} by a regime specific conditional $p^{(r)}(X_j \mid \text{pa}(X_j))$ that retains the same parental structure but uses different parameters. \citet{yang2018characterizing} characterize the interventional Markov equivalence class under soft interventions with known targets, extending the perfect-intervention interventional Markov equivalence class framework of \citet{hauser2012characterization}. PACER's identifiability therefore carries over in the sense that the model still recovers a member of an interventional Markov equivalence class, though the specific equivalence class is determined by the intervention type. Perfect interventions can be viewed as the limit of soft interventions where the post-intervention conditional becomes independent of parents

\paragraph{Modified likelihood.} Concretely, this corresponds to a small modification of the interventional objective $f(M', \Omega)$ in Equation~\ref{eq:objective}. We introduce per-regime conditional parameters $\Omega^{(r)}_j$ for each intervened node $j \in \mathcal{I}_r$, and include those nodes in the likelihood sum:
\begin{equation*}
    f_{\text{soft}}(M', \Omega, \{\Omega^{(r)}\}) = \sum_{r=1}^R \mathbb{E}_{X \sim P^{(r)}_{\text{data}}} \Bigg[ \sum_{j \notin \mathcal{I}_r} \log p_{\Omega}^j(X_j \mid M'_{j}, X_{-j}) \;+\; \sum_{j \in \mathcal{I}_r} \log p_{\Omega^{(r)}_j}^j(X_j \mid M'_{j}, X_{-j}) \Bigg].
\end{equation*}
This places soft interventions naturally between the two regimes already covered by \name{}, which include the observational case (no terms dropped, all nodes share $\Omega$) and perfect interventions (intervened-node terms dropped entirely). Only the per-regime conditional parameters and the index sets in the inner sums change.

\paragraph{Inheritance from the base model.} Crucially, the other components of \name{} remain unaffected. The Bernoulli-Plackett-Luce parameterization of the DAG distribution $M(\Theta)$, the acyclicity-by-design property, the REINFORCE estimator and its variance bound (Section~\ref{sec:learning}), and the integration of structural priors via $\theta$ and $p_{ij}$ all remain identical. The closed-form expected log-likelihood score for linear-Gaussian mechanisms (Theorem~\ref{th:normal_gradients}) also extends to the soft-intervention setting by splitting the inner sum to mirror $f_{\text{soft}}$. Writing $s(\vx, j; \Omega_j)$ for the per-sample, per-node score from Theorem~\ref{th:normal_gradients} with explicit dependence on the linear-Gaussian parameters $\Omega_j = (b_j, w_{1j}, \dots, w_{nj}, \sigma_j)$ of node $j$, the soft-intervention expected log-likelihood score reads
\begin{equation*}
S_{\text{soft}}(\Theta, \Omega, \{\Omega^{(r)}\}) = -\frac{1}{2} \sum_{r=1}^R \mathbb{E}_{\vx \sim P^{(r)}_{\text{data}}} \Bigg[ \sum_{j \notin \mathcal{I}_r} s(\vx, j; \Omega_j) \;+\; \sum_{j \in \mathcal{I}_r} s(\vx, j; \Omega^{(r)}_j) \Bigg],
\end{equation*}
where each $s(\vx, j; \cdot)$ retains the same structure as in Theorem~\ref{th:normal_gradients}, with $b_j$, $w_{ij}$, $\sigma_j$ replaced by $b^{(r)}_j$, $w^{(r)}_{ij}$, $\sigma^{(r)}_j$ when $j \in \mathcal{I}_r$. The Bernoulli-Plackett-Luce expectations $\mathbb{E}[a_{ij}] = p_{ij} \, e^{\theta_i}/(e^{\theta_i} + e^{\theta_j})$ and the parent-pair Plackett-Luce factor $e^{\theta_j}/(e^{\theta_i} + e^{\theta_j} + e^{\theta_k})$ are unchanged across regimes, since the DAG distribution itself does not depend on the intervention type.

\paragraph{Modeling partial perturbation efficiency.} A further extension could target biological artifacts such as incomplete CRISPRi knockdowns, where a perturbation only partially silences its target across cells. The regime conditional can be modeled as a mixture between the observational mechanism and the soft-intervention mechanism,
\[
p_j^{(r,\text{mix})}(X_j \mid \text{pa}(X_j)) = \alpha_j^{(r)} \, p_{\Omega}^j(X_j \mid \text{pa}(X_j)) + (1-\alpha_j^{(r)}) \, p_{\Omega^{(r)}_j}^j(X_j \mid \text{pa}(X_j)),
\]
with a learnable mixing weight $\alpha_j^{(r)} \in [0, 1]$. Setting $\alpha_j^{(r)} = 1$ recovers the observational mechanism (no perturbation effect), while $\alpha_j^{(r)} = 0$ recovers the soft intervention. Unlike the soft-intervention case, this mixture is not linear-Gaussian even when both components are, so Theorem~\ref{th:normal_gradients} no longer applies and training falls back on REINFORCE. We leave a full empirical study of this parameterization to future work.

\subsection{Unknown intervention targets}\label{app:unknown_targets} \name{} as presented assumes the intervention sets $\mathcal{I}_r$ are provided. In some settings, however, the targeted variables may be uncertain or only partially known. \name{} could in principle be extended to learn intervention targets jointly with the graph structure, following the approach of DCDI \citep{brouillard2020differentiable}. Concretely, one could introduce per-regime binary indicators $\rho^{(r)}_j \in \{0,1\}$, with $\rho^{(r)}_j = 1$ specifying that node $j$ is targeted in regime $r$, and parameterize each indicator as an independent Bernoulli variable with success probability $\sigma(\beta^{(r)}_j)$, analogous to DCDI's relaxation of the intervention matrix $R^{\mathcal{I}}$ via logits $\beta_{kj}$. The interventional likelihood in Equation~\ref{eq:objective} would be evaluated jointly over $M' \sim M(\Theta)$ and the $\rho^{(r)}_j$, with $\mathcal{I}_r$ effectively replaced by $\{j : \rho^{(r)}_j = 1\}$, and an additional sparsity penalty $\lambda_R \sum_{r,j} \sigma(\beta^{(r)}_j)$ would encourage sparse interventional families. Under suitable identifiability assumptions, and provided the observational regime is known a priori, according to \citet[Theorem 2]{brouillard2020differentiable}, this joint formulation recovers both the interventional Markov equivalence class of the true graph and the true interventional family.

The intervention parameters $\beta^{(r)}_j$ could be optimized jointly with the DAG parameters using the REINFORCE estimator, which is consistent with PACER's existing gradient approach for the Bernoulli-Plackett-Luce parameters, though introducing the additional Bernoulli variables $\rho^{(r)}_j$ would further increase the gradient variance (cf.\ Section~\ref{sec:learning}). Alternatively, a continuous relaxation such as the straight-through Gumbel estimator (used by DCDI for both the graph and intervention parameters), which trades variance for relaxation bias, could also be applied. Both estimators are conceptually compatible with \name{}. We view this as a promising direction with direct connections to existing work and leave a careful empirical comparison to future work.


\end{document}